\newtheorem{theorem}{Theorem}[section]
\newtheorem{corollary}{Corollary}[theorem]
\newtheorem{lemma}[theorem]{Lemma}
\newtheorem{proposition}[theorem]{Proposition}
\newtheorem{definition}{Definition}
\newcommand\uucb{\mu\text{-}ucb}
\newcommand\ulcb{\mu\text{-}lcb}
\newcommand\aucb{\gamma\text{-}ucb}
\newcommand\alcb{\gamma\text{-}lcb}
\title{Competing Bandits in Matching Markets via Super Stability}
\author{
  Soumya Basu \\
  Google, New York
  \\ \texttt{basusoumya@utexas.edu}
}
\begin{document}
\maketitle

\begin{abstract}
We study bandit learning in matching markets with two-sided reward uncertainty, extending prior research primarily focused on single-sided uncertainty. Leveraging the concept of `super-stability' from Irving (1994), we demonstrate the advantage of the Extended Gale-Shapley (GS) algorithm over the standard GS algorithm in achieving true stable matchings under incomplete information. By employing the Extended GS algorithm, our centralized algorithm attains a logarithmic pessimal stable regret dependent on an instance-dependent admissible gap parameter.  This algorithm is further adapted to a decentralized setting with a constant regret increase.  Finally, we establish a novel centralized instance-dependent lower bound for binary stable regret, elucidating the roles of the admissible gap and super-stable matching in characterizing the complexity of stable matching with bandit feedback.
\end{abstract}

\keywords{Bandits \and Matching Markets \and Stable Matching \and Super Stability}

\section{Introduction}
The problem of finding a stable matching in a two-sided matching market has gained considerable attention in the past few years~\cite{das2005two,liu_centralized_2020,ucbd3,basu_2021,liu_decentralized_2021,jagadeesan2021learning,hosseini2024putting,avishek_nonstationary}. A major motivation is the two-sided matching market provides a model to study multiple real world systems - such as crowd-sourcing markets like UpWork or Task Rabbit~\cite{li2019three, zhang2023stable}, ride-sharing systems~\cite{johari2021matching} as well as classical markets like matching students in college programs~\cite{gale1962college}, matching organ donors with recipients~\cite{reddy2013matching}. In a two-sided market there are two types of agents, demand side (users) and supply side (arms). Each agent has an implicit preference order for the other side agents. However, this preference order is unknown a-priori, and must be elicited from noisy feedback by matching with the other side through repeated interactions. A stable matching is a matching among the users and arms, where there is no {\em blocking pair} of user $i$ and arm $j$ such that $i$ and $j$ prefer each other compared to their current partner. The objective of the system is to converge to {\em one} of multiple possible stable matching with low regret.  

Unlike most prior works, except a select few \cite{tejas_2024,zhangdecentralized}, which are limited to user side uncertainty we consider {\em two-sided uncertainty}, where all users and arms need to learn their respective preferences. Our work also considers general matching markets. In another direction, the existing literature has been limited to combining bandit learning methods with the standard Gale-Shapley algorithm. One major shortcoming of this approach is the need to learn the top $N$ ranks for a general matching market with $N$ users and $K$ arms where $N \leq K$. This does not capture the intrinsic complexity of the underlying stable matching problem. We move beyond the standard Gale-Shapley algorithm, and harness a seminal work of \cite{irving1994stable} to construct an algorithm with regret that grows with the intrinsic complexity of the problem. 

In order to understand the weakness of Gale-Shapley algorithm let us first review the task at hand. Given the history, at each round users and arms can construct their respective partially recovered rankings, and require to find a stable match using these partial rankings. However, using Gale-Shapley algorithm over a partially recovered ranking, breaking ties in a uncertain manner, the system arrives at a {\em weakly stable matching}. This is a concept of stability defined in \cite{irving1994stable} for partial rankings, where a matching is weakly stable if there is no {\em blocking pair} of user $i$ and arm $j$ such that $i$ and $j$ {\em strictly prefers} each other compared to their current partner. The problem with weakly stable matching is as ties in preferences are resolved a weakly stable matching can turn out to have blocking pairs. Therefore, it is not stable under the true full-rankings. The only way to ensure exploration is complete is by resolving the top $N$ ranks for all the agents and ensure that a weakly stable matching is indeed stable under the true full-rankings.

Interestingly, in \cite{irving1994stable} we can also find a way to circumvent this issue leveraging the concept of {\em super stable matching}. A matching is super stable under partial rankings of the agents if there is no blocking pair of user $i$ and arm $j$ such that $i$ and $j$ {\em strictly prefers or is indifferent to} each other compared to their current partner. If at any stage a {\em super stable matching} is found it is ensured that this is a stable matching under the true complete full-rankings. The Extended Gale-Shapley algorithm in \cite{irving1994stable} ensures whenever a {\em super stable matching} exists we can recover one, otherwise determine none exists. This provides us with a way to adaptively exploit when a {\em super stable matching} is present under the recovered partial rank, or explore more when it is absent. 

We first show how Extended Gale-Shapley algorithm can be used with an UCB-LCB based rank recovery in a centralized setting, and prove that we can achieve a logarithmic pessimal stable regret that depends on an instance-dependent {\em admissible gap} parameter. 
Notably, our method does not require  resolving top $N$ ranks for all agents and adapt to the problem hardness. Next, using a 2-bit shared feedback we show how this centralized algorithm can be modified to a decentralized algorithm at the cost of constant regret increase.  Finally, we present a new instance-dependent pessimal stable regret lower bound for general instances in a centralized setting. Our lower bound depends on the {\em admissible gap}, and shows that {\em admissible gap} is indeed an intrinsic hardness parameter for bandits in matching markets. It also highlights how the super-stable matching plays a pivotal role in the informational bottlenecks for this problem. 

Our main contributions are as follows:
\paragraph{A new algorithmic pathway:} Gale-Shapley-based bandit algorithms suffer from a key limitation: under partial preference information, they can only guarantee convergence to a weakly stable matching, which may not be a true stable matching. However, as shown by \cite{irving1994stable}, given a partial ranking, it is possible to find a super-stable matching if one exists or to determine that none exists.  Crucially, if a super-stable matching exists for a partial rank that is compatible with the true full rank, then that super-stable matching is guaranteed to be a true stable matching.  Since a UCB-LCB-based partial rank is compatible with the true full rank with high probability, the resulting super-stable matching (if found) incurs no pessimal stable regret. This insight opens a {\em novel algorithmic pathway} for addressing bandit problems in two-sided matching markets with uncertainty.

\paragraph{Algorithms for two sided uncertainty:} Building on this insight, we first develop a centralized algorithm that constructs a UCB-LCB-based partial rank and attempts to recover a super-stable matching using the Extended-GS algorithm \cite{irving1994stable}. If no such super-stable matching exists, the algorithm defaults to round-robin exploration.  Exploiting the structure of the super-stable matching set due to \cite{spieker1995set}, we define the set of admissible partial ranks—partial ranks that are compatible with the true full rank, and whose compatible full ranks all share at least one stable matching with the true full rank. If the UCB-LCB-based partial rank falls within this admissible set, a true stable matching is guaranteed. Let $\Delta_{\mathcal{A}}$ represent the maximum of the minimum gaps, where the minimum gap for a given partial rank is the smallest difference between any two unequally ranked arms for any user (and vice versa). We demonstrate that each user and arm incurs an expected cumulative pessimal stable regret of $O(K\log(T) / \Delta^2_{\mathcal{A}})$ over $T$ rounds. This improves upon existing results that achieve $O(K\log(T) / \Delta^2_{\min})$, as $\Delta_{\mathcal{A}} \geq \Delta_{\min}$. Our experiments confirm the superior performance of our proposed algorithm compared to existing Gale-Shapley-based approaches.  We then demonstrate how, using only two shared boolean flags, users and arms can emulate the centralized algorithm in a decentralized setting, incurring only an additional $O(N^2)$ cumulative regret, independent of the time window. We also discuss the communication trade-offs involved. This adaptation from centralized to decentralized settings may have broader applicability in two-sided matching markets. 

\paragraph{Instance-dependent regret lower bound:} We establish the first instance-dependent regret lower bound for stable matching in the centralized (and consequently decentralized) setting under two-sided uncertainty. This bound focuses on binary stable regret, quantifying the number of times a stable matching is not achieved.  Building upon the framework of  \cite{combes2017minimal} we formulate this lower bound as an optimization problem over possible matching market instances, introducing a novel constraint set.  Crucially, instances sharing a stable matching with the true instance do not contribute to the lower bound. We derive explicit lower bounds for specific market structures, including general serial dictatorships and markets with redundant arms, demonstrating a scaling of  $\Omega(K_{eff} \log(T)/\Delta_{eff}^2)$, where $K_{eff}$, $\Delta_{eff}$ are instance-dependent parameters. Through a dual formulation, we reveal a fundamental connection between our lower bound and specific set covers of the boundary of the {\em admissible partial rank} set. While a tight regret bound remains open, we anticipate that the structure of  admissible partial rank set will be pivotal in achieving it.

\section{Problem Formulation}\label{sec:problem}
We consider the matching markets with two-sided uncertainty. There are $N$ users and $K$ arms. We are interested in a setting where the number of users is less or equal to the number of arms $N \leq K$. 

Each user $i \in [N]$ has a valuation for an arm $j \in [K]$ denoted by $\mu_{i,j}$. Similarly, each arm $j \in [K]$ has a valuation for a user $i \in [N]$ denoted by $\gamma_{j, i}$. These valuations are a priori unknown to the users and arms, and can be learned only by matching with the other side. We denote by $F_{u,i}$ as the preference full-rank of user $i \in [N]$ over the arms $[K]$, and $F_{a,j}$ as the preference full-rank of arm $j \in [K]$ over the users $[N]$. We denote the set of full-ranks by $(F_u, F_a) = (\{F_{u,i}: i \in [N]\}, \{F_{a,j}: j \in [K]\})$.

The system evolves in rounds, while in each round a user can match with at most one arm, and vice versa. When an arm is matched with multiple users in a round, there is a collision, and all the users involved in the collision receive zero reward, and a collision signal. Let us call the match for user $i \in [N]$ in round $t$ as $m_i(t) \in [K] \cup \{\emptyset\}$, and the match for an arm $j \in [K]$ in round $t$ as $m^{-1}_j(t) \in [N] \cup \{\emptyset\}$.  Here, $m_i(t) = \emptyset$ implies the user $i$ is unmatched, and $m^{-1}_j(t) = \emptyset$ implies arm $j$ remains unmatched.  For each pair $(i,j) \in m(t)$ for $i \in [N]$ and $j \in [K]$ the user $i$ and arm $j$ receives noisy rewards $Y_{i}(t)$ and $\tilde{Y}_{j}(t)$, respectively. The rewards are given as 
\begin{align*}
    &Y_{i}(t) = \mu_{i,m_i(t)} + \eta_{i,m_i(t)}(t) \text{ if } m_i(t) \neq \emptyset, \text{ else }  0,\quad
    \tilde{Y}_{j}(t) = \gamma_{j,m^{-1}_j(t)} + \eta'_{j,m^{-1}_j(t)}(t) \text{ if } m^{-1}_j(t) \neq \emptyset, \text{ else }  0,
\end{align*}
where $\eta_{i,j}(t)$ and $\eta'_{j,i}(t)$ are independent $1$-subgaussian noise for $i \in [N]$ and $j \in [K]$. 

We consider two versions of the system that differs in how the matching is formed in each round. 

\textbf{Centralized:} In this setting, in each round the arms and users reveal their learned preferences to a central platform. The central platform arrives to a matching in that round.

\textbf{Decentralized:} In this setting, in each round the users can propose to the arms (possibly multiple). Each arm can accept and match with at most one user, while rejecting all the unmatched proposing users. Additionally, users have access to 2 shared flags (binary semaphores)\footnote{Shared information is used in the literature to avoid technical difficulties pertaining communication design, e.g. (user, arm) broadcast in \cite{liu_decentralized_2021, kong2023playeroptimal,tejas_2024}. It is possible to make the algorithm fully decentralized with an additional $O(\log(T))$ regret in $T$ rounds.}. After receiving the signals from the arms, the users modify the shared bits and finally use the updated shared bits to match with arms.

In each round, the objective of all the users and the arms are trying to  construct a stable match defined as below. 
\begin{definition}[Stable Match]
    A matching $M$ is called stable under a full rank $(F_{u}, F_{a})$ if there is no pair user $i \in [N]$, and arm $j \in [K]$ such that simultaneously user $i$ prefers to arm $j$ over her match $M_i$, and arm $j$ prefers user $i$ over its match $M^{-1}_j$. The set of all super-stable matching of the partial rank $(F_{u}, F_{a})$ is denoted as $\mathrm{Stable}(F_{u}, F_{a})$.
\end{definition}
In general, there are multiple stable matching given a full rank $(F_{u}, F_{a})$ which we denote by $Stable(F_{u}, F_{a})$. There exists a user-pessimal matching $M(u) \in Stable(F_{u}, F_{a})$ such that for each user $i \in [N]$ the reward
$\mu_{i, pess} = \mu_{i, M_{i}(u)}$ is the lowest among all possible partner in a stable matching. Similarly, we define the stable arm-pessimal match $M(a)$ and the arm-pessimal reward $\gamma_{j, pess} = \gamma_{j, M^{-1}_j(a)}$ for each arm $j \in [K]$. We define the expected pessimal stable regret for the users $i \in [N]$ and the arms  $j\in [K]$ in $T$ rounds as 
\begin{align*}
&\mathbb{E}[R_{u, i}(T)] = T\mu_{i, pess} - \sum_{t=1}^{T}\sum_{j \in [K]}\mu_{i, j}\mathbb{P}(m_{i}(t) = j),\quad  \mathbb{E}[R_{a, j}(T)] = T\gamma_{j, pess} - \sum_{t=1}^{T} \sum_{i \in [N]}\gamma_{j, i}\mathbb{P}(m^{-1}_{j}(t) = i).
\end{align*} 
We also consider the {\em binary stable regret} where regret of failure to find a stable match is counted as $1$, and otherwise $0$ each round, i.e.    
$$
\mathbb{E}[R_{0/1}(T)] = \sum_{t=1}^{T} \mathbb{P}(m(t) \notin \mathrm{Stable}(F_u, F_a)).
$$

\subsection{Partial Rank and Super Stable Matching}
In this section, we present the structure of the super-stable matching due to \cite{spieker1995set}. In order to state the results we first define partial rank and super-stable matching. 
\begin{definition}[Partial Rank]
    A partial rank $(P_{u}, P_{a})$ over $N$ users and $K$ arms is defined as a set of directed ayclic graphs (DAG) $P_{u,i}$ for each $i \in [N]$ and $P_{a,j}$ for each $j \in [K]$. Each DAG $P_{u,i}$ is defined by a set of directed edges $(j, j') \subset [K]\times [K]$, and satisfies $j >_{P_{u,i}} j'$ {\em if and only if} there exists a directed path from $j$ to $j'$. Each DAG $P_{a,j}$ is defined analogously. Let us define the set of all partial ranks for $N$ users, and $K$ arms as $\mathcal{P}(N, K)$.
\end{definition}
\begin{definition}[Compatible Full Rank]
    A partial rank $(P'_{u}, P'_{a})$ is {\em compatible} with a partial rank $(P_{u}, P_{a})$ if for any pair of users $i, i' \in [N]$, and any pair of arms $j, j' \in [K]$, 
    $
    j \underset{P'_{u,i}}{>} j' \implies j' \underset{P_{u,i}}{\not>} j, \text{ and } i \underset{P'_{a,j}}{>} i' \implies i' \underset{P_{a,j}}{\not>} i.
    $\\
    The set of all full ranks {\em compatible} with a partial rank $(P_{u}, P_{a})$ is denoted as $\mathrm{FullRank}(P_{u}, P_{a})$.
\end{definition}
If a full rank $(F_{u}, F_{a})$ is {\em compatible} with a partial rank $(P_{u}, P_{a})$, then there exists a way of breaking ties (by adding more directed edges) in the partial rank $(P_{u}, P_{a})$ to reach $(F_{u}, F_{a})$. The reversal of the process takes us from a full rank to a compatible partial rank. 

We now define the notion of super-stability for a partial rank, and follow it up with a proposition due to \cite{spieker1995set} that shows how it relates to the compatible full ranks. 
\begin{definition}[Super-Stable Match]
    A matching $M$ is called super-stable under a partial rank $(P_{u}, P_{a})$ if there is no pair user $i \in [N]$, and arm $j \in [K]$ such that simultaneously user $i$ prefers {\em or is indifferent to} arm $j$ over her match $M(i)$, and arm $j$ prefers {\em or is indifferent to} user $i$ over its match $M^{-1}(j)$. The set of all super-stable matching of the partial rank $(P_{u}, P_{a})$ is denoted as $\mathrm{SuperStable}(P_{u}, P_{a})$.
\end{definition}

\begin{proposition}\label{prop:spieker}[Adapted from \cite{spieker1995set}]
    For a  partial ranking $(P_{u}, P_{a})$, the set of super-stable matching is the (possibly empty) intersection of all the stable matching of full ranks $(F_u, F_a)$ compatible with $(P_{u},P_{a})$, 
    \begin{align*}
    &\mathrm{SuperStable}(P_{u}, P_{a}) = \underset{(F_{u},F_{a}) \in \mathrm{FullRank}(P_{u}, P_{a})}{\cap} \mathrm{Stable}(F_{u}, F_{a}).
    \end{align*}
\end{proposition}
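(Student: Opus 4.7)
The plan is to prove the two inclusions separately. Before starting, I would unpack the definitions in one line: compatibility of a full rank $F_{u,i}$ with a partial rank $P_{u,i}$ means that $F_{u,i}$, viewed as a total order, is a linear extension of the DAG $P_{u,i}$; and the super-stability condition ``$i$ prefers or is indifferent to $j$ over $M(i)$'' translates to $M(i) \not>_{P_{u,i}} j$, i.e., the partial rank does not assert a strict preference for $M(i)$ over $j$ (and symmetrically for arms).

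For the forward inclusion $(\subseteq)$, I would fix any $M \in \mathrm{SuperStable}(P_u, P_a)$ and any compatible full rank $(F_u, F_a) \in \mathrm{FullRank}(P_u, P_a)$, and argue by contradiction. If $(i,j)$ were a strict blocking pair under $(F_u, F_a)$, then $j >_{F_{u,i}} M(i)$ and $i >_{F_{a,j}} M^{-1}(j)$. Applying the contrapositive of the compatibility condition at each of these inequalities yields $M(i) \not>_{P_{u,i}} j$ and $M^{-1}(j) \not>_{P_{a,j}} i$, which is exactly the condition for $(i,j)$ to violate super-stability of $M$ under $(P_u, P_a)$, contradicting our assumption.

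For the reverse inclusion $(\supseteq)$, I would prove the contrapositive: if $M \notin \mathrm{SuperStable}(P_u, P_a)$, then there exists some compatible full rank under which $M$ is not stable. From the failure of super-stability, pick a witnessing pair $(i,j)$ with $M(i) \not>_{P_{u,i}} j$ and $M^{-1}(j) \not>_{P_{a,j}} i$. I would then construct the desired full rank by augmenting the DAGs: add the directed edge $j \to M(i)$ to $P_{u,i}$ and $i \to M^{-1}(j)$ to $P_{a,j}$, and leave all other DAGs untouched. Adding the first edge cannot create a cycle because a cycle would require a pre-existing path $M(i) \to \cdots \to j$, i.e., $M(i) >_{P_{u,i}} j$, which we just excluded (and similarly for the arm side). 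Any linear extension of the augmented DAGs (guaranteed to exist by Szpilrajn's theorem) gives a full rank $(F_u, F_a)$ that is compatible with $(P_u, P_a)$ and in which $j >_{F_{u,i}} M(i)$ and $i >_{F_{a,j}} M^{-1}(j)$, so $(i,j)$ is a strict blocking pair and $M \notin \mathrm{Stable}(F_u, F_a)$.

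The main obstacle is the constructive step in the reverse direction: verifying that augmenting each of the two DAGs with a single new edge preserves acyclicity, and that a joint linear extension of the augmented graphs yields a full rank still compatible with the original partial rank. Once the acyclicity check is made explicit using the witnessing conditions, the remainder is a routine application of the linear-extension theorem, so I expect the proof to be short but require careful bookkeeping of which edges are strict and which are merely ``incomparable.''
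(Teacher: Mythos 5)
Your proof is correct. Note first that the paper itself offers no proof of this proposition: it is imported wholesale from Spieker's work (the ``Adapted from'' tag), and the appendix section of preliminary proofs only treats Lemma 3.1. So there is no in-paper argument to compare against; your write-up is a self-contained derivation of the cited fact. Both directions are sound: the forward inclusion is exactly the contrapositive chase through the compatibility condition ($j >_{F_{u,i}} j' \Rightarrow j' \not>_{P_{u,i}} j$), and the reverse inclusion correctly identifies the only nontrivial point, namely that adding the single edge $j \to M(i)$ (resp.\ $i \to M^{-1}(j)$) preserves acyclicity precisely because the witnessing pair satisfies $M(i) \not>_{P_{u,i}} j$ and $M^{-1}(j) \not>_{P_{a,j}} i$, after which any topological sort of the augmented DAGs is a compatible full rank exhibiting a strict blocking pair. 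Two small bookkeeping items you should make explicit: (i) the super-blocking condition must be read as ranging over pairs $(i,j) \notin M$, otherwise every matched pair vacuously ``blocks'' and your edge $j \to M(i)$ would be a self-loop; and (ii) since the paper allows $N < K$, the witnessing pair may involve an unmatched arm $j$ with $M^{-1}(j) = \emptyset$, in which case the arm-side condition holds by convention and no edge is added to $P_{a,j}$. Neither affects the validity of the argument.
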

From the above proposition, given a full rank we define a set of partial match, namely {\em admissible partial rank}, for which a super-stable match is always stable under the specific full rank. 
\begin{definition}[Admissible Partial Rank]
    For a given full rank $(F_{u}, F_{a})$, we define a partial rank  $(P_u, P_a)$ to be \emph{admissible} if and only if $(F_{u}, F_{a}) \in \mathrm{FullRank}(P_{u}, P_{a})$ and $\mathrm{SuperStable}(P_{u}, P_{a}) \neq \emptyset$. We define the set of admissible partial rank instances as $\mathcal{A}(F_{u}, F_{a})$.
\end{definition}

It follows from Proposition~\ref{prop:spieker} that for any partial ranking in the set $\mathcal{A}(F_{u}, F_{a})$ a super-stable match is a stable match for the true Stable matching instance $(F_{u}, F_{a})$.
\begin{corollary}\label{corr:super-stable}
   For a full rank $(F_{u}, F_{a})$, for any admissible partial rank $(P_{u},P_{a}) \in \mathcal{A}(F_{u}, F_{a})$ and each super-stable matching
   $M \in \mathrm{SuperStable}(P_{u}, P_{a})$ we have $M \in Stable(F_{u}, F_{a}).$
\end{corollary}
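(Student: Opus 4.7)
The plan is to derive the corollary as an essentially immediate consequence of Proposition~\ref{prop:spieker} combined with the definition of the admissible partial rank set $\mathcal{A}(F_u, F_a)$, so the proof will be short and will mostly consist of chaining two set-theoretic inclusions.

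First I would unpack the hypothesis: the assumption $(P_u, P_a) \in \mathcal{A}(F_u, F_a)$ gives us two facts by the definition of admissibility, namely that $(F_u, F_a) \in \mathrm{FullRank}(P_u, P_a)$ and that $\mathrm{SuperStable}(P_u, P_a) \neq \emptyset$. The nonemptiness is really only needed so that the statement is nonvacuous; the membership $(F_u, F_a) \in \mathrm{FullRank}(P_u, P_a)$ is the essential piece.

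Next I would invoke Proposition~\ref{prop:spieker}, which gives the identity
\begin{equation*}
\mathrm{SuperStable}(P_u, P_a) = \bigcap_{(F'_u, F'_a) \in \mathrm{FullRank}(P_u, P_a)} \mathrm{Stable}(F'_u, F'_a).
\end{equation*}
Because $(F_u, F_a)$ is one of the full ranks indexing this intersection, the right-hand side is contained in $\mathrm{Stable}(F_u, F_a)$. Therefore any $M \in \mathrm{SuperStable}(P_u, P_a)$ lies in $\mathrm{Stable}(F_u, F_a)$, which is exactly the claim.

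Since the argument is a one-step specialization of a larger intersection to a particular indexed term, there is no real obstacle here: Proposition~\ref{prop:spieker} is doing all of the work. The only thing worth being careful about is to make sure the hypothesis of admissibility is used correctly to place $(F_u, F_a)$ in the indexing set $\mathrm{FullRank}(P_u, P_a)$, rather than confusing it with the fact that the partial rank is compatible with some other full rank.
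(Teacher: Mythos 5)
Your proof is correct and matches the paper's (implicit) argument exactly: admissibility places $(F_u, F_a)$ in $\mathrm{FullRank}(P_u, P_a)$, and Proposition~\ref{prop:spieker} then identifies $\mathrm{SuperStable}(P_u, P_a)$ with an intersection over that indexing set, one of whose terms is $\mathrm{Stable}(F_u, F_a)$. Nothing further is needed.
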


\section{Centralized Two-Sided Matching Bandit}
In this section, we present a centralized algorithm that recovers a stable match with uncertainty of the rewards on both sides of the market. Each agent and arm maintain their own partial rankings based on UCB-LCB (similar to \cite{kong2023playeroptimal}). Similarly to \cite{liu_centralized_2020} in each round, users share their UCB-LCB-based partial rankings with the centralized platform. The centralized platform then uses the partial ranking and finds a super-stable match if it exists using the Extended-Gale Shapley algorithm in \cite{irving1994stable} (given as Algorithm~\ref{alg:extended-GS}). If such a match does not exist, then using a round-robin schedule, the centralized platform explores. 

\begin{algorithm}[ht!]
  \caption{Centralized Two-Sided Matching Bandit}
  \label{alg:centralized}
  \textbf{Exploration Index:} $\tau_{ex} \gets 0$ \\
  \textbf{Initial Ranking:} $P_{u, i}(1) \gets [[K]]$ for all $i \in [N]$,  $P_{a, i}(1) \gets [[N]]$ for all $i \in [K]$\\
  \For{$t \geq 1$}{
    The centralized platform receives the partial ranks:\\
    \hspace{1em} users, $\mathcal{P}_{u}(t) = \{P_{u,i}(t) : i \in [N]\}$ and \\\hspace{1em} arms $\mathcal{P}_{a}(t) = \{P_{a, j}(t) : j \in [K]\}$.\\
    Obtain $M_{\rm stable} \gets \text{EXTENDED-GS}(\mathcal{P}_{u}(t), \mathcal{P}_{a}(t))$\\
    \If{$M_{\rm stable} = \emptyset$}{
      Play: $M(t) \gets \{m_i(t)=(i+\tau_{ex})\mod K\}$ \\
      Increase exploration index: $\tau_{ex} \gets \tau_{ex} + 1$
    }
    \Else{
      Trim and play matching: $M(t) \gets \{(i, \arg\max_{j \in m_i} \uucb_{i,j}(t)): (i, m_i) \in M_{\rm stable} \}$
    }
    \For{each pair $(i,j) \in M(t)$}{
      User $i$ receives reward $Y_{i,j}(t)$, and arm $j$ receives reward $\tilde{Y}_{j,i}(t)$\\
      User $i$ and arm $j$ updates the partial rank $P_{u,i}(t+1)$ and $P_{a, j}(t+1)$ (Eq.~\eqref{eq:pr})
    }
  }
\end{algorithm}

We update the UCB, LCB, and partial rank estimates as follows for all $i \in [N]$ and all $j \in [K]$:
\begin{align}
    & \uucb_{i,j}(t) = \hat{\mu}_{i,j}(t) + \sqrt{6 \log(t) / n_{i,j}(t)}, \quad \ulcb_{i,j}(t) = \hat{\mu}_{i,j}(t) - \sqrt{6 \log(t) / n_{i,j}(t)} \label{eq:user-cb} \\
    & \aucb_{j, i}(t) = \hat{\gamma}_{j, i}(t) + \sqrt{6 \log(t) / n_{i,j}(t)}, \quad \alcb_{j, i}(t) = \hat{\gamma}_{j, i}(t) - \sqrt{6 \log(t) / n_{i,j}(t)} \label{eq:arm-cb}\\
    & P_{u,i}(t) = \begin{cases}
       j' > j  \text{ if } \uucb_{i,j}(t) < \ulcb_{i,j'}(t),\\
       j' < j  \text{ if } \uucb_{i,j'}(t) < \ulcb_{i,j}(t),\\
       j' = j  \text{ otherwise.}
    \end{cases} 
    \quad\quad
    P_{a,j}(t) = \begin{cases}
       i' > i  \text{ if } \aucb_{j,i}(t) < \alcb_{j,i'}(t),\\
       i' < i  \text{ if } \aucb_{j,i'}(t) < \alcb_{j,i}(t),\\
       i' = i  \text{ otherwise.}
    \end{cases} \label{eq:pr}
\end{align}
Here for round $t$, for $i\in [N]$, arm $j \in [K]$, for user $i$ the mean reward for arm $j$ is $\hat{\mu}_{i,j}(t)$, for arm $j$ the mean reward for user $i$ is $\hat{\gamma}_{j, i}(t)$, and number of matches between user $i$ and arm $j$ is $n_{i,j}(t)$ as defined in the Appendix~\ref{app:algo}.

\subsection{Regret Upper Bound}
We obtain a logarithmic regret with two-sided uncertainty. Our result relies on a few key properties of the system. Firstly, we know from \cite{kong2023playeroptimal} that with high probability the LCB and UCB-based partial ranking contains the true ranking for each of the user and arm. Next, by the seminal work of \cite{irving1994stable}, the Extended Gale-Shapley algorithm can retrieve a super-stable matching if there exists one under a given partial ranking, or declare if there is no such super-stable matching. Finally, the key observation is that a super-stable matching for a partial ranking is always a stable (not necessarily the user-optimal) matching for any full ranking contained by the specific partial ranking (c.f. \cite{spieker1995set}). Our exploration is forced when there is no super-stable matching available.

We now define the gaps in our system that will appear in our regret upper bounds. Note that all the gaps we mention is with respect to the underlying rewards $(\boldsymbol{\mu}, \boldsymbol{\gamma})$. 
\begin{definition}[Minimum Gap]
    For a system with rewards $(\boldsymbol{\mu}, \boldsymbol{\gamma})$ and a partial rank $(P_{u}, P_{a})$, the minimum gap $\Delta_{\min}(P_{u}, P_{a}; \boldsymbol{\mu}, \boldsymbol{\gamma})$ is defined as the minimum gaps among the users and arms with different ranks, i.e.,
    \begin{align*}
        &\Delta_{\min}(P_{u}, P_{a}; \boldsymbol{\mu}, \boldsymbol{\gamma})
        = \min\big(\min\limits_{\substack{i \in [N],\\ j \neq j' \text{ in } P_{u,i}}} |\mu_{i,j} - \mu_{i, j'}|, \min\limits_{\substack{j \in [K],\\ i \neq i' \text{ in } P_{a,j}}} |\gamma_{j,i} - \gamma_{j, i'}|\big).
    \end{align*}
\end{definition}

\begin{definition}[Admissible Gap]
    Consider a system with rewards $(\boldsymbol{\mu}, \boldsymbol{\gamma})$ and corresponding full rank $(F_u, F_a)$. The admissible gap $\Delta_{\mathcal{A}}(\boldsymbol{\mu}, \boldsymbol{\gamma})$ is defined as the largest minimum gap for the admissible partial rankings of $(F_u, F_a)$, 
    $\Delta_{\mathcal{A}}(\boldsymbol{\mu}, \boldsymbol{\gamma}) = \max\limits_{(P_{u}, P_{a}) \in \mathcal{A}(F_{u}, F_{a})} \Delta_{\min}(P_{u}, P_{a}; \boldsymbol{\mu}, \boldsymbol{\gamma}).$
\end{definition}
We also need to define the width of the rewards of users or arms that have the same partial order.
\begin{definition}[Overlap Width]
     For a system with rewards $(\boldsymbol{\mu}, \boldsymbol{\gamma})$ and a partial rank $(P_{u}, P_{a})$, overlap width $\mathcal{W}_{\mathrm{ov}}(P_{u}, P_{a}; \boldsymbol{\mu}, \boldsymbol{\gamma})$ as the maximum variation of rewards among overlapping users and arms with an equivalent partial rank, i.e.,
    \begin{align*}
        &\mathcal{W}_{\mathrm{ov}}(P_{u}, P_{a}; \boldsymbol{\mu}, \boldsymbol{\gamma}) = \max\big(\max\limits_{\substack{i \in [N],\\ j = j' \text{ in } P_{u,i}}} |\mu_{i,j} - \mu_{i, j'}|, \max\limits_{\substack{j \in [K],\\ i = i' \text{ in } P_{a,j}}} |\gamma_{j,i} - \gamma_{j, i'}|\big).
    \end{align*}
\end{definition}
We next show that the set $\mathcal{A}(F_{u}, F_{a})$ has the following `completeness' property with respect to reward gap. 
\begin{lemma}\label{lemm:structural-main}
    Consider the system with rewards $(\boldsymbol{\mu}, \boldsymbol{\gamma})$ and the corresponding full rank $(F_u, F_a)$. For a partial rank $(P_u, P_a)$ if $(F_u, F_a) \in \mathrm{FullRank}(P_u, P_a)$ and the overlap width $\mathcal{W}_{\mathrm{ov}}(P_u, P_a; \boldsymbol{\mu}, \boldsymbol{\gamma}) < \Delta_{\mathcal{A}}(\boldsymbol{\mu}, \boldsymbol{\gamma})$ then $(P_u, P_a) \in \mathcal{A}(F_u, F_a)$.
\end{lemma}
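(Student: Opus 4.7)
The plan is to exploit the existence of an admissible partial rank $(Q_u, Q_a) \in \mathcal{A}(F_u, F_a)$ that attains the maximum in the definition of the admissible gap, i.e., $\Delta_{\min}(Q_u, Q_a; \boldsymbol{\mu}, \boldsymbol{\gamma}) = \Delta_{\mathcal{A}}(\boldsymbol{\mu}, \boldsymbol{\gamma})$. My goal is to show that $(P_u, P_a)$ is a refinement of $(Q_u, Q_a)$---every strict ordering present in $Q$ is already present in $P$---and then use Proposition~\ref{prop:spieker} to transfer the non-emptiness of $\mathrm{SuperStable}(Q_u, Q_a)$ to $\mathrm{SuperStable}(P_u, P_a)$.

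The main step, and where I expect the main obstacle, is the refinement claim: for every user $i$ and pair of arms $j, j'$ with $j >_{Q_{u,i}} j'$, I want to show $j >_{P_{u,i}} j'$, and symmetrically for the arm side. I plan to argue by contradiction. Since $Q$ is admissible (hence $F \in \mathrm{FullRank}(Q)$), the ordering $j >_{Q_{u,i}} j'$ forces $\mu_{i,j} > \mu_{i,j'}$, and the definition of $\Delta_{\min}(Q) = \Delta_{\mathcal{A}}$ gives $\mu_{i,j} - \mu_{i,j'} \geq \Delta_{\mathcal{A}}$. On the other hand, since $P$ is also compatible with $F$ by hypothesis, $j' >_{P_{u,i}} j$ is impossible, so the only way the refinement could fail is for $j$ and $j'$ to be tied in $P_{u,i}$. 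But then the overlap-width hypothesis yields $|\mu_{i,j} - \mu_{i,j'}| \leq \mathcal{W}_{\mathrm{ov}}(P_u, P_a; \boldsymbol{\mu}, \boldsymbol{\gamma}) < \Delta_{\mathcal{A}}$, contradicting the previous lower bound. The arm-side argument is identical with $\boldsymbol{\gamma}$ in place of $\boldsymbol{\mu}$.

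Once the edge-wise refinement is in hand, any full rank $F'$ compatible with $P$ is automatically compatible with $Q$: reversing a $Q$-ordering in $F'$ would also reverse a $P$-ordering and violate compatibility with $P$. Hence $\mathrm{FullRank}(P_u, P_a) \subseteq \mathrm{FullRank}(Q_u, Q_a)$. Applying Proposition~\ref{prop:spieker} to both partial ranks gives $\mathrm{SuperStable}(P_u, P_a) \supseteq \mathrm{SuperStable}(Q_u, Q_a)$ (intersection over a smaller family), and the right-hand side is non-empty by admissibility of $Q$. Combined with the hypothesis $(F_u, F_a) \in \mathrm{FullRank}(P_u, P_a)$, this establishes $(P_u, P_a) \in \mathcal{A}(F_u, F_a)$. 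One subtlety worth flagging is that the orderings $>_{P_{u,i}}$ refer to the transitive closure of the DAG, not its edge set; fortunately both the contradiction argument and the notion of compatibility are phrased entirely in terms of this closure, so no additional bookkeeping is needed.
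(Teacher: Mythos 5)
Your proposal is correct and follows essentially the same route as the paper's proof: pick an admissible partial rank attaining the maximum gap $\Delta_{\mathcal{A}}$, use the overlap-width hypothesis to show every strict comparison there is also resolved in $(P_u,P_a)$, deduce $\mathrm{FullRank}(P_u,P_a)\subseteq\mathrm{FullRank}(Q_u,Q_a)$, and apply Proposition~\ref{prop:spieker} to inherit a non-empty super-stable set. Your explicit remark that compatibility with the true full rank rules out a reversed ordering (so a tie is the only possible failure mode) is a slightly more careful rendering of a step the paper states only in terms of ``unequal implies unequal,'' but it is not a different argument.
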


The regret upper bound proof shows that once we have enough exploration, such that we can resolve a gap of $\Theta(\Delta_{\mathcal{A}}(\boldsymbol{\mu}, \boldsymbol{\gamma}))$ with high probability in round $t$, the partial rank $(\mathcal{P}_{u}(t), \mathcal{P}_{a}(t))$ lies in the admissible set $\mathcal{A}(F^*_{u}, F^*_{a})$. Hence, the Algorithm~\ref{alg:centralized} always lands on a super-stable matching, as the set $\mathrm{SuperStable}(\mathcal{P}_{u}(t), \mathcal{P}_{a}(t)) \neq \emptyset$ as per Corollary~\ref{corr:super-stable}. Furthermore, due to Corollary~\ref{corr:super-stable} we know that $\mathrm{SuperStable}(\mathcal{P}_{u}(t), \mathcal{P}_{a}(t)) \subseteq \mathrm{Stable}(F^{*}_{u}, F^{*}_{a})$. Therefore, the algorithm suffers no stable regret. Finally, once the number of exploration satisfies $N_{explore}(t) > K + \frac{96K\log(T)}{\Delta^2_{\mathcal{A}}(\boldsymbol{\mu}, \boldsymbol{\gamma})}$ we show that $(\mathcal{P}_{u}(t), \mathcal{P}_{a}(t)) \in \mathcal{A}(F^*_{u}, F^*_{a})$, i.e. the recovered partial rank lies in the set of admissible partial rank instances of the true rankings. This culminates in the following regret bound.
\begin{theorem}[Centralized Regret Bound]
\label{thm:centralized_main}
    The cumulative regret of Algorithm~\ref{alg:centralized} in $T$ rounds for the true stable matching instance with no ties $(F^*_{u}, F^*_{a})$ satisfies the following upper bound. 
    \begin{gather*}
     \max_{i \in [N], j \in [K]}\left(\tfrac{\mathbb{E}[R_{u,i}(T)]}{\mu_{i,pess} - \mu_{i, \min}},\, \tfrac{\mathbb{E}[R_{a,j}(T)]}{\gamma_{j,pess} - \gamma_{j, \min}}\right) \leq \mathbb{E}[R_{0/1}(T)],\quad
     \mathbb{E}[R_{0/1}(T)] \leq  
     \Big(K + \frac{NK \pi^2}{3} +  \frac{96 K\log(T)}{\Delta^2_{\mathcal{A}}(\boldsymbol{\mu}, \boldsymbol{\gamma})}\Big).
    \end{gather*}
\end{theorem}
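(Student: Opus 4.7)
The theorem packages two bounds. The first is a deterministic per-round reduction of pessimal stable regret to binary stable regret: whenever $m(t) \in \mathrm{Stable}(F^*_u, F^*_a)$, the definition of the user-pessimal match gives $\mu_{i, m_i(t)} \geq \mu_{i, pess}$, so the round's summand in $R_{u,i}(T)$ is non-positive; whenever $m(t) \notin \mathrm{Stable}(F^*_u, F^*_a)$ the summand is at most $\mu_{i, pess} - \mu_{i, \min}$. Taking expectations and summing over $t$ yields $\mathbb{E}[R_{u,i}(T)] \leq (\mu_{i,pess}-\mu_{i,\min})\,\mathbb{E}[R_{0/1}(T)]$; the arm side is symmetric. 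The substantive work is in the bound on $\mathbb{E}[R_{0/1}(T)]$.

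For $\mathbb{E}[R_{0/1}(T)]$, define the good event
\[
\mathcal{E}_t := \bigl\{\mu_{i,j} \in [\ulcb_{i,j}(t), \uucb_{i,j}(t)] \text{ and } \gamma_{j,i} \in [\alcb_{j,i}(t), \aucb_{j,i}(t)]\ \forall (i,j) \in [N]\times[K]\bigr\}.
\]
1-subgaussian Hoeffding with the $\sqrt{6\log t/n}$ width, unioned over the $NK$ pairs on each side and over the random sample count $n_{i,j}(t) \in \{1,\ldots,t\}$, gives $\sum_{t=1}^T \mathbb{P}(\mathcal{E}_t^c) \leq NK\pi^2/3$ by $\sum_t 1/t^2 = \pi^2/6$. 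On $\mathcal{E}_t$ the UCB--LCB criterion \eqref{eq:pr} cannot reverse the true ordering: a strict reversal $j' >_{P_{u,i}(t)} j$ under $\mu_{i,j} > \mu_{i,j'}$ would require $\uucb_{i,j}(t) < \ulcb_{i,j'}(t)$, contradicting that both means lie in their respective CIs. Hence $(F^*_u, F^*_a) \in \mathrm{FullRank}(\mathcal{P}_u(t), \mathcal{P}_a(t))$ throughout the good event.

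Next, control the number of exploration rounds via the round-robin schedule. Since $m_i(t) = (i + \tau_{ex}) \bmod K$ pairs each user $i \in [N]$ with each arm $j \in [K]$ exactly once every $K$ exploration rounds, once $N_{\mathrm{explore}}(t) \geq K + 96 K\log(T)/\Delta_\mathcal{A}^2(\boldsymbol{\mu},\boldsymbol{\gamma})$ every pair satisfies $n_{i,j}(t) \geq 96\log(T)/\Delta_\mathcal{A}^2$. On $\mathcal{E}_t$ the half-width $\sqrt{6\log t/n_{i,j}}$ is then at most $\Delta_\mathcal{A}/4$, so for any pair with true reward gap $\geq \Delta_\mathcal{A}$ one has $\uucb - \ulcb$ strictly separated, forcing a strict partial-rank separation in \eqref{eq:pr}. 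Contrapositively, any pair left at equal partial rank has a true reward gap strictly less than $\Delta_\mathcal{A}$, i.e.\ $\mathcal{W}_{\mathrm{ov}}(\mathcal{P}_u(t), \mathcal{P}_a(t); \boldsymbol{\mu}, \boldsymbol{\gamma}) < \Delta_\mathcal{A}(\boldsymbol{\mu},\boldsymbol{\gamma})$. Lemma~\ref{lemm:structural-main}, combined with the compatibility established above, then yields $(\mathcal{P}_u(t),\mathcal{P}_a(t)) \in \mathcal{A}(F^*_u, F^*_a)$, so $\mathrm{SuperStable}(\mathcal{P}_u(t),\mathcal{P}_a(t))$ is nonempty and, by Corollary~\ref{corr:super-stable}, lies inside $\mathrm{Stable}(F^*_u, F^*_a)$; Extended-GS returns a matching in this set, the trim picks one element of it, and no further exploration is triggered.

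Combining, on $\mathcal{E}_t$ binary regret is incurred only in exploration rounds, which are capped at $K + 96K\log(T)/\Delta_\mathcal{A}^2$ by the threshold above, while $\mathcal{E}_t^c$ contributes at most $NK\pi^2/3$ in total across $t$. Summing gives the stated bound. \textbf{Main obstacle.} The main subtlety is bridging the probabilistic CI width to the deterministic overlap-width hypothesis of Lemma~\ref{lemm:structural-main}: the argument must proceed contrapositively, showing that every large-gap pair becomes strictly separated once $n_{i,j}$ crosses the threshold, rather than bounding equal-rank gaps directly. A secondary care point is carrying membership in $\mathcal{A}(F^*_u, F^*_a)$ through Proposition~\ref{prop:spieker} and Corollary~\ref{corr:super-stable} all the way to the trimmed matching that is actually played, ensuring it lies in $\mathrm{Stable}(F^*_u, F^*_a)$ and contributes zero to the binary stable regret on the good event.
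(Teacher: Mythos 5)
Your proposal is correct and follows essentially the same route as the paper: the same deterministic reduction of pessimal regret to binary stable regret, the same UCB--LCB good event with the $NK\pi^2/3$ union-bound cost, the same compatibility claim (the paper's Lemma~\ref{lemm:contains-true}), the same round-robin counting argument converting $N_{\mathrm{explore}}(t) \geq K + 96K\log(T)/\Delta_{\mathcal{A}}^2$ into per-pair sample counts and an overlap-width bound feeding Lemma~\ref{lemm:structural-main} (the paper's Lemma~\ref{lemm:admissible}), and the same final observation that admissibility plus Corollary~\ref{corr:super-stable} stops exploration and caps the binary regret at the threshold. The ``main obstacle'' you flag is precisely how the paper's Lemma~\ref{lemm:admissible} proceeds, so no substantive difference remains.
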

 \paragraph{Improved Regret Bound:}
 We first note that for the partial rank where the top 
 user for each arm, and the top 
 arms for each user are separated we always have the user-optimal matching as a super stable-matching. Hence, $\Delta_{\min} \leq \Delta_{\mathcal{A}}$
 holds for all the instances. For general instances, it is not possible to improve this relationship, as there are instances where they are equal. We now present a motivating example that shows stark separation between these two quantities.

 Consider an example with $N$
 users and $N$ arms, for some $N \geq 3$. Fix an $\varepsilon \in (0, 1/2)$. For each user $i \in [N]$ let $i$ and $(i+1) \mathrm{mod}\, N$ be the top $2$ arms with the respective rewards $\mu_{i,i} = (1 - \varepsilon)$ and $\mu_{i,(i+1) \mathrm{mod}\, N} = (1 - 2\varepsilon)$. The remaining arms $j \in [N]\setminus \{i, (i+1) \mathrm{mod} N\}$ all have mean reward $\mu_{i,j} = \varepsilon$. For each arm $j \in [N]$, let  $\gamma_{j,(j - 1) \mathrm{mod}\, N} = \varepsilon$ and $\gamma_{j,i} = ( 1 - \varepsilon)$ for all users $i \neq (j - 1) \mathrm{mod}\, N$. 
 For this instance we have $\Delta_{\min} = \varepsilon$. One stable matching for this instance is given as $\{(i,i): i \in [N]\}$.
 
 Next, let us consider the partial ranks for each user $i$,
 \begin{align*}
     P_{u,i} = \{i > j,\, &(i + 1)\mathrm{mod}\, N > j: \forall j \in [N] \setminus \{i, (i+1) \mathrm{mod}\,N\} \},
 \end{align*}
 and  the partial ranks for each arm $j$
 $$
 P_{a,j} = \{i > (j - 1)\mathrm{mod}\, N: \forall i \in [N] \setminus \{(j - 1) \mathrm{mod}\, N\} \}.
 $$
 For the above partial ranks, we have $\Delta_{\min}(P_u, P_a, \boldsymbol{\mu}, \boldsymbol{\gamma}) = (1- 2 \varepsilon)$. Moreover, this partial rank $(P_u, P_a)$ has $\{(i,i): i \in [N]\}$
 as a super-stable matching. Therefore, $(P_u, P_a)$ is admissible, and the gap $\Delta_{\mathcal{A}} \geq (1 - 2 \varepsilon)$. The ratio $\tfrac{\Delta_{\mathcal{A}}}{\Delta_{\min}} = \tfrac{1}{\varepsilon} - 2$ is unbounded as $\varepsilon \to 0$. This shows that there can be an arbitrary separation between $\Delta_{\min}$ and $\Delta_{\mathcal{A}}$.

\section{Decentralized Two-Sided Matching Bandit}
In this section, we convert the centralized algorithm to a decentralized version where the users use two global binary flags $restart$ and $success$ to coordinate. We present Algorithm~\ref{alg:decentralizedUser} used by a user $i$, and Algorithm~\ref{alg:decentralizedArm} used by an arm $j$ in the Appendix~\ref{app:algo} due to the lack of space.

Our algorithm proceeds in phases (of at most $N^2$ steps each) and simulates the Algorithm~\ref{alg:extended-GS} in each phase in a staggered manner. An ongoing phase is terminated by setting the $restart$ flag to True by a user, and as the flag $restart$ is shared the change of phase is always in sync across arms and users.  Each phase $\tau_{\ell}$, starting at time $t + 1$, begins with setting the initial partial rank of each user $i$ to $P_{u,i}(t)$ and each arm $j$ to $P_{a,j}(t)$ (the UCB-LCB based partial ranks obtained at the end of previous phase). This is similar to the centralized algorithm starting a round with updated partial ranks.  

Next, a user $i$ reads the flag $success$ which if False indicates the phase $\tau_{\ell}$ is used for exploration.  Otherwise, the $success$ flag being True indicates the previous phase ended with a super-stable match. This implies at least one arm accepted proposal of user $i$, and the user chooses as  the arm with highest UCB index among accepting arms as it's match $M_i(\tau_{\ell})$ for the phase $\tau_{\ell}$.  

Inside the phase $\tau_{\ell}$ in each round, we have the following alternating steps between users and arms.
\\$(i)$\,\,\textbf{A user $i$} first propose to the `source' nodes of the (updated) partial rank $PR_{u,i}(\tau_{\ell})$ (line 14 Alg.~\ref{alg:decentralizedUser}).
\\$(ii)$\,\,\textbf{An arm $j$} receives the proposals and  computes the `source' nodes (i.e., the non-dominated nodes) $I^*(j)$ among the proposing nodes $\tilde{S}_j(t)$ and (updated) partial rank $PR_{a,j}(\tau_{\ell})$ (line 10 Alg.~\ref{alg:decentralizedArm}). If there is a unique `source' node $i^*(j)$, that node is accepted by arm $j$ and all nodes dominated by $i^*(j)$ are deleted from $PR_{a,j}(\tau_{\ell})$ (line 11-14 Alg.~\ref{alg:decentralizedArm}). Otherwise, with multiple proposals all proposing users are rejected, and the `tail' nodes in partial rank $PR_{a,j}(\tau_{\ell})$ are deleted (line 15-17 Alg.~\ref{alg:decentralizedArm}).
\\$(iii)$\,\,\textbf{A user $i$} receives the accept or reject signals from the proposed arms, and deletes rejecting arms from $PR_{u, i}(\tau_{\ell})$ (line 15-16 Alg.~\ref{alg:decentralizedUser}). Next, it determines whether to explore or exploit based on $explore(\tau_{\ell})$. If it explores then matches with the arm $m_i(t) = (i + \tau_{ex}) \mod K$, otherwise matches with $m_i(t) = M_i(\tau_{\ell})$  (line 17-20 Alg.~\ref{alg:decentralizedUser}). We note that $\tau_{ex}$ is also in sync for each user as $explore(\tau_{\ell})$ and phases are in sync. Next, the user and the arms, if matched, observe their respective rewards, and updates the partial ranks $P_{u,i}(t)$ for users $i \in [N]$, and $P_{a, j}(t)$ for arms $j \in [K]$. (line 22-23 Alg.~\ref{alg:decentralizedUser}, and line 21-23 Alg.~\ref{alg:decentralizedArm})

In the \emph{global communication} phase the users play an active role.  First, if a user $i$ is rejected by all proposed arms then she sets $success$ to False.\footnote{We require users not changing the flags to also update the flags, so that shared flags are ready to be read.} Next, the $success$ flag is read by user $i$. If $success$ is True (each user has a prospective match) or if $PR_{u,i}(\tau_{\ell})$ is empty the phase terminates. User $i$ sets $restart$ to True. 

Finally, the $restart$ flag is read, by each user and arm. If the flag is True the system enters a new phase. Otherwise, the old phase continues.\footnote{To trigger a system-wide `restart' in fully decentralized manner, a user broadcasts a RESTART signal to all arms in a single round. Upon receipt, an arm echoes the RESTART signal to all proposing users. Consequently, within one additional round, all users receive the RESTART signal, achieving a fully-decentralized `restart' flag setting. A congruent strategy can be adopted by users for managing the `success' flag.}

Our regret upper bound for the decentralized system follows the centralized system closely. We argue that each phase mimics one round of the centralized system.  After $O\big(K\log(T) / \Delta_{\mathcal{A}}^2(\boldsymbol{\mu}, \boldsymbol{\gamma})\big)$ many rounds of exploration, any phase with high probability ends in finding a true stable matching, similar to Lemma~\ref{lemm:explore}. The final regret bound in the decentralized case is given as follows.

\begin{theorem}[Decentralized Regret Bound]
\label{thm:decentralized_main}
     For a true stable matching instance $(F^*_{u}, F^*_{a})$ when the users follow Algorithm~\ref{alg:decentralizedUser}, and the arms follow Algorithm~\ref{alg:decentralizedArm}, the cumulative regret in $T$ rounds  satisfies the following upper bound. 
    \begin{gather*}
     \max_{i \in [N], j \in [K]}\left(\tfrac{\mathbb{E}[R_{u,i}(T)]}{\mu_{i,pess} - \mu_{i, \min}},\, \tfrac{\mathbb{E}[R_{a,j}(T)]}{\gamma_{j,pess} - \gamma_{j, \min}}\right) \leq \mathbb{E}[R_{0/1}(T)],\\
     \mathbb{E}[R_{0/1}(T)] \leq  
     \Big(1 + N^2 + K + \frac{NK \pi^2}{3} +  \frac{96 K\log(T)}{\Delta^2_{\mathcal{A}}(\boldsymbol{\mu}, \boldsymbol{\gamma})}\Big).
    \end{gather*}
\end{theorem}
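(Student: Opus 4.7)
The plan is to reduce the decentralized analysis to the centralized one via a \emph{phase simulation} argument, and then transfer the exploration accounting from Theorem~\ref{thm:centralized_main} almost verbatim, paying an additive $N^2$ for phase latency. First, I would state and prove a simulation lemma: within a single phase $\tau_\ell$, the alternating user-proposal and arm-response/deletion steps (lines 14--17 of Algorithm~\ref{alg:decentralizedUser} together with lines 10--17 of Algorithm~\ref{alg:decentralizedArm}) exactly reproduce the propose-and-delete primitive of \text{EXTENDED-GS} applied to the frozen partial ranks $(\mathcal{P}_u(\tau_\ell), \mathcal{P}_a(\tau_\ell))$ that each agent initialized at the start of the phase. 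Termination is enforced either when every user has a prospective partner (so $success$ stays True) or when some user's $PR_{u,i}(\tau_\ell)$ empties (so $success$ becomes False); in either case the user flips $restart$, and since $restart$ is a shared binary flag, all users and arms switch phases in lock-step. Because \text{EXTENDED-GS} deletes at least one ranked edge per iteration out of the $O(N^2)$ edges present in any partial rank over $N$ users and $\leq N$ prospective matches at each arm, each phase lasts at most $N^2$ rounds.

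Second, I would define the good event $\mathcal{E}$ that for every round $t \le T$ the UCB-LCB partial rank $(P_{u,i}(t), P_{a,j}(t))$ built from Eq.~\eqref{eq:user-cb}--\eqref{eq:pr} is compatible with the true full rank $(F^*_u, F^*_a)$. A standard subgaussian concentration argument, together with a $\sum_t 1/t^2$ bound, gives $\mathbb{P}(\mathcal{E}^c) \leq NK\pi^2/3$, supplying that term in the stated bound. On $\mathcal{E}$, Lemma~\ref{lemm:structural-main} implies that once the overlap width of the phase's initial partial rank drops below $\Delta_{\mathcal{A}}(\boldsymbol{\mu}, \boldsymbol{\gamma})$, that partial rank lies in $\mathcal{A}(F^*_u, F^*_a)$. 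By the simulation lemma and Corollary~\ref{corr:super-stable}, the phase's \text{EXTENDED-GS} simulation then succeeds and identifies a super-stable matching which is in fact a true stable matching of $(F^*_u, F^*_a)$; consequently the \emph{next} phase plays that matching in every round and contributes zero binary stable regret.

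Third, the exploration bookkeeping ports directly from the centralized proof. During any phase with $success = \text{False}$, each round a user plays $m_i(t) = (i + \tau_{ex}) \bmod K$ with $\tau_{ex}$ incremented every round, so $K$ consecutive exploration rounds cover each user-arm pair at least once. A Hoeffding / union-bound calculation (as in the proof of Theorem~\ref{thm:centralized_main}) shows that on $\mathcal{E}$, once the cumulative number of exploration rounds exceeds $K + 96K\log(T)/\Delta^2_{\mathcal{A}}(\boldsymbol{\mu}, \boldsymbol{\gamma})$, the overlap-width hypothesis of Lemma~\ref{lemm:structural-main} holds, and every subsequent phase's simulation succeeds. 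The additive $1 + N^2$ accounts for a single bridging phase: the first phase whose \text{EXTENDED-GS} simulation succeeds internally is still playing the previous phase's exploration schedule during its (up to $N^2$) rounds, before the $restart$/$success$ flags commit the system to exploit. Combining the binary-regret contributions of (i) $\mathcal{E}^c$, (ii) exploration rounds, and (iii) the bridging phase, yields the $\mathbb{E}[R_{0/1}(T)]$ bound; the first inequality relating $\mathbb{E}[R_{u,i}(T)]$ and $\mathbb{E}[R_{a,j}(T)]$ to $\mathbb{E}[R_{0/1}(T)]$ is the same per-round worst-case pessimal regret bound used in the centralized case.

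The main obstacle is the simulation lemma: one must verify that the distributed dynamics, coordinated only by the two shared binary flags and the decentralized accept/reject primitives, faithfully emulate the centralized \text{EXTENDED-GS} execution — in particular that the multi-proposal ``tail deletion'' branch (lines 15--17 of Algorithm~\ref{alg:decentralizedArm}) and the rejection-driven deletions on the user side interleave to produce the same final acceptance set and the same emptiness-detection behavior as \cite{irving1994stable}'s sequential algorithm. Once this invariant and the lock-step property of the $restart$ flag are established, the rest of the argument is a mechanical transcription of the centralized analysis with the additional $N^2$ phase-latency term.
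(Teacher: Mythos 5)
Your proposal is correct and follows essentially the same route as the paper's proof: phases are synchronized by the shared $restart$ flag, each phase simulates one centralized round of \text{EXTENDED-GS} on the frozen partial ranks, the good-event/concentration analysis and the exploration threshold $K + 96K\log(T)/\Delta^2_{\mathcal{A}}$ carry over verbatim from Theorem~\ref{thm:centralized_main}, and the extra $1+N^2$ arises because the exploration count can only be checked at phase boundaries and so may overshoot the centralized threshold by one phase of length at most $N^2$. The paper's write-up formalizes your ``bridging phase'' bookkeeping as a chain of inequalities locating a phase start time at which $N_{explore}$ equals the inflated threshold $Th'(T)$ and then invoking Lemma~\ref{lemm:admissible}, but the underlying argument is the one you describe.
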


\textbf{Balancing Communication, and Regret:} We rely on a 2-bit communication protocol per round in order to mimic the centralized system. However, it is possible to make the communication sparse, by forcing the communication to happen once every phase, and force the phases to have arbitrary lengths. The idea is to  perform the Global communication if round $t$ is multiple of a pre-determined period, say $L$. This will increase the regret constant from $(1+ N^2)$ to $(1 + L)$. However, this requires working with older matching $M_i(\tau_{\ell})$ for a user $i$, which can be non-optimal from user $i$'s point of view. Thus it can create some tension between communication and incentives. Exploring this tension is left as an interesing avenue of future work.

\textbf{Experimental Validation:}  We numerically study the behavior of the proposed centralized algorithm. Due to lack of space we defer the details of our experiment in Appendix~\ref{app:experiment}. As the decentralized algorithm is guaranteed to have only a regret $O(N^2)$ away from the centralized one, we omit the decentralized algorithm. We compare against centralized Explore-then-Gale Shapley (ETGS) algorithm for fair comparison.\footnote{A decentralized version of this algorithm with shared global flags (aka blackboard) is presented in Algorithm 2 in \cite{tejas_2024}.} Extended-GS significantly outperforms ETGS by quickly identifying a viable partial ranking during the exploration phase. This leads to lower regret than ETGS, which needs a full ranking of the top N items before achieving comparable performance. 

\textbf{Regret Comparison:}
We end this section by comparing the regret guarantees of a selected few works in this domain in Table~\ref{tab:regret_bounds_long}. We provide the first regret upper bound that works for two sided general matching markets and depends on the instance dependent intrinsic gap $\Delta_{\mathcal{A}}$. Note that our algorithm uses the 2-bit feedback which is more restrictive than the (user, arm) broadcast where all the users and arms can observe the matching that occurs in each round. We also provide the first centralized regret lower bound that depends on gaps (which we call $\Delta_{\mathcal{A}, avg}$ for simplicity) related to $\Delta_{\mathcal{A}}$. See Section~\ref{sec:lower} for details. 
\begin{table*}[htb]
\centering
\small
\begin{tabular}{ |c|c|c|c| } 
 \hline
 \textbf{Algorithm} & \textbf{Two sided} & \textbf{Market Assumptions}  & \textbf{Upper Bound per User/Arm}   \\ 
 \hline
 C-UCB~\cite{liu_centralized_2020} & NO & Centralized & Pessimal $O(NK\log(T)/ \Delta_{\min}^2)$ \\ 
 \hline
 UCB-D3~\cite{ucbd3} & NO & Serial Dictatorship  & Unique  $O(NK\log(T)/ \Delta_{\min}^2)$  \\
 \hline
 CA-UCB~\cite{liu_decentralized_2021} & NO & (user, arm) broadcast  & Pessimal $O(\exp(N)N^5K^2\log(T)/ \Delta_{\min}^2)$  \\ 
  \hline
 UCB-D4~\cite{basu_2021} & NO & uniqueness consistency  & Unique $O(NK\log(T)/ \Delta_{\min}^2)$ \\
 \hline
 UCB-DMA~\cite{maheshwari_2022} & NO & $\alpha$-reducible  & Unique $O(\mathcal{C}_{\alpha}NK\log(T)/ \Delta_{\min}^2)$  \\
  \hline
 ETGS~\cite{kong2023playeroptimal} & NO & (user, arm) broadcast & Optimal $O(K\log(T)/ \Delta_{\min}^2)$ \\ 
 \hline
 PCA-DAA~\cite{pokharel2023converging} & YES & (user, arm) broadcast  & -  \\
 \hline
 ETGS+BB~\cite{tejas_2024} & YES & (user, arm) broadcast  & User Optimal  $O(K\log(T)/ \Delta^2_{\min}))$ \\
 \hline
 CA-ETC~\cite{tejas_2024} & YES & no broadcast  & User Optimal  $O(poly(T))$ \\
 \hline
 Ours & YES & 2-bit broadcast  & Pessimal \& Binary   $O(K\log(T)/ {\color{blue}\Delta_{\mathcal{A}}^2})$  \\
 \hline
\end{tabular}
\begin{tabular}{ |c|c|c| }
\hline
\textbf{Two sided} & \textbf{Market Type} & \textbf{Lower Bound per User/Arm}\\
\hline
NO & No Broadcast   & Unique $\Omega(N\log(T)/ \Delta_{\min}^2 + K\log(T)/ \Delta_{\min})$~\cite{ucbd3} \\
\hline
YES & Centralized & Binary $\Omega(L\log(T)/ {\color{blue}\Delta_{\mathcal{A}, \mathrm{avg}}^2})$ [Ours] \\
\hline
\end{tabular}
 \caption{A more comprehensive regret comparison. We consider three type of gaps satisfying $\Delta_{\min} \leq \Delta_{\mathcal{A}}\leq \Delta_{\mathcal{A}, \mathrm{avg}}$. $C_{\alpha}$ is a parameter specific to $\alpha$-irreducibility  in \cite{maheshwari_2022}. 
 For lower bound, $L$ denotes the average number of (user, arm) pairs not participating in a stable matching with $\small (K-N) \leq L \leq (K-1)$ (see Theorem~\ref{thm:centralized_lower}). In regret upper bound, `Unique' means unique stable matching, `Optimal' means User optimal stable regret, `Pessimal' means User pessimal stable regret, and `Binary' means binary stable regret. 
 }
 \label{tab:regret_bounds_long}
 \vspace{-1.5em}
\end{table*}

\section{Regret Lower Bound}\label{sec:lower}
We develop an instance-dependent lower bound for {\em binary stable regret} for the centralized setting.  A pessimal stable regret lower bound in general instances is not meaningful for the learning task of finding the stable matching. For example, there may exists a set of matchings which are not stable but can be scheduled to achieve negative pessimal stable regret. Hence, {\em binary stable regret} which is always positive is the right quantity to lower bound for the bandit learning problem in matching markets.  

We adapt the framework in \cite{combes2017minimal} to our multi-agent setup. The detailed formulation is provided in the Appendix~\ref{app:lower}. Our system is parameterized by $\theta = (\boldsymbol{\mu}, \boldsymbol{\gamma}) \in  \Theta := [0,1]^{2 N K}$. We focus on Bernoulli  rewards with appropriate means for ease of exposition. For any $(i,j) \in [N]\times [K]$, the term $kl(\theta, \lambda; (i,j))$ denotes the sum of KL-divergence between the $\theta$ and $\lambda$ instances for the rewards associated with the match $(i,j)$ and satisfies
$
kl(\theta, \lambda; (i,j)) = kl(\mu_{i,j}(\theta), \mu_{i,j}(\lambda)) + kl(\gamma_{j,i}(\theta), \gamma_{j,i}(\lambda)).
$ This can be extended to the $kl$ divergence given a matching $\mathcal{M}$ as  
$
kl(\theta, \lambda; \mathcal{M}) = \sum_{(i,j)\in \mathcal{M}} kl(\theta, \lambda; (i,j)).
$

We need to take extra care while adapting the main results in \cite{combes2015combinatorial} as we are dealing with multiple solutions. To that end, we note that \cite{graves1997asymptotically}, from which \cite{combes2015combinatorial}  is adapted, allows for switching between optimal stationary policies without any cost. Moreover, while defining `bad' parameter space \cite{graves1997asymptotically} considers the parameter that does not share an optimal solution with the true parameter, $\theta$, and which has no `divergence' with $\theta$ while playing any one of the optimal policy. Therefore, our `bad' parameter space for a given $\theta$ becomes \begin{align*}
 \Lambda(\theta) = \{\lambda \in \Theta: &\underbrace{ kl(\theta, \lambda; \mathcal{M}) = 0, \forall \mathcal{M} \in \mathrm{Stable}(\theta)}_{\text{for all stable match of $\theta$ divergence is $0$}}; \underbrace{\mathrm{Stable}(\lambda) \cap \mathrm{Stable}(\theta) = \emptyset}_{\text{no common solution}}\}. 
\end{align*}
We now want to formulate the set $\Lambda(\theta)$ using the admissible sets. If $\mathrm{Stable}(\lambda) \cap \mathrm{Stable}(\theta) = \emptyset$, that means any partial rank $(P_u, P_a)$ such that both $\lambda$ and $\theta$ are compatible to $(P_u, P_a)$\footnotemark\, has no super-stable match.\footnotetext{An instance $\theta$ is compatible to a partial rank means the underlying (unique) full rank is compatible to the partial rank under consideration. More generally, we may replace $(F_u, F_a)$ with $\theta$.}  We note that for any $\lambda'$ such that $\mathrm{Stable}(\lambda') \cap \mathrm{Stable}(\theta) \neq \emptyset$ we can create a partial rank $(P'_u, P'_a)$ (by keeping only the shared pairwise inequalities of $\theta$ and $\lambda'$) which lies in $\mathcal{A}(\theta)$. Hence, 
\begin{align*}
&\cup_{(P_u, P_a) \in \mathcal{A}(\theta)}\mathrm{FullRank}(P_u, P_a) = \{\lambda: \mathrm{Stable}(\lambda) \cap \mathrm{Stable}(\theta) \neq \emptyset\}.    
\end{align*}
It follows that the {\em admissible set is regret free}. 
As there can be no divergence for any matched pairs in a true stable matching that narrows the `bad' set of parameters further. We first define the `Locked' edges as follows.
\begin{definition}\label{def:locked_edges}
For any instance $\theta \in \Theta$, we define $\mathrm{Locked}(\theta)$ as the set of all user-arm pairs $(i,j)$, such that $(i,j) \in \mathcal{M}$ for some match $\mathcal{M} \in Stable(\theta)$.
\end{definition}
The set $\Lambda(\theta)$ can be equivalently stated as 
\begin{align}
&\Lambda(\theta) = \big\{\lambda \in \Theta: \lambda \notin\!\!\!\!\! \underset{(P_u, P_a) \in \mathcal{A}(\theta)}{\cup}\!\!\!\!\!\!\!\mathrm{FullRank}(P_u, P_a);\forall (i,j)  \in \mathrm{Locked}(\theta): \mu_{ij}(\theta) =  \mu_{ij}(\lambda), \gamma_{ji}(\theta) = \gamma_{ji}(\lambda)  \big\}.\label{eq:bad_set}
\end{align}

A policy $\pi$ is {\em uniformly good} if for any  $\theta = (\boldsymbol{\mu}(\theta), \boldsymbol{\gamma}(\theta))$ the algorithm $\pi$ has $O(T^{\alpha})$ regret for any $\alpha > 0$. We now state our regret lower bound below for any uniformly good policy.

\begin{theorem}[Centralized Regret Lower Bound]\label{thm:centralized_lower}
For any instance $\theta \in \Theta$, for any {\em uniformly good} policy $\pi$ the binary stable regret is lower bounded as 
$\lim\inf_{T\to \infty} \frac{\mathbb{E}[R_{0/1}^{\pi}(T; \theta)]}{\log(T)} \geq c(\theta)$, where $c(\theta)$ minimizes the following optimization problem 
\begin{align*}
    &\min_{\eta(M) \geq 0} 
    \sum_{M \in \mathrm{Match}(N,K)\setminus \mathrm{Stable}(\theta)}\eta(M), \text{ s.t.} \sum_{(i, j)\in [N]\times[K]}\sum_{M\ni (i,j)}\eta(M) kl(\theta, \lambda; (i,j))  \geq 1, \forall \lambda \in \Lambda(\theta),
\end{align*}
where 
$\Lambda(\theta)$ is defined in Equation~\eqref{eq:bad_set}. 
\end{theorem}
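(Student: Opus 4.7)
The plan is to adapt the Graves--Lai / \cite{combes2015combinatorial} change-of-measure framework to the multi-agent matching setting, with the novel twist being that the set $\Lambda(\theta)$ of ``confusable'' alternatives is already restricted via the locked-edge and admissible-partial-rank constraints. I would first rewrite the binary stable regret as $\mathbb{E}_\theta[R_{0/1}(T)] = \sum_{M \notin \mathrm{Stable}(\theta)} \mathbb{E}_\theta[T_M(T)]$, where $T_M(T)$ counts rounds in which matching $M$ is played. Since in every round both users and arms observe rewards only on the matched pairs, the log-likelihood ratio between histories under $\theta$ and $\lambda$ telescopes to a weighted sum of per-edge KL divergences; taking expectation under $\theta$ gives
\[
\mathrm{KL}\bigl(\mathbb{P}_\theta^{T} \,\|\, \mathbb{P}_\lambda^{T}\bigr) = \sum_{M \in \mathrm{Match}(N,K)} \mathbb{E}_\theta[T_M(T)]\, kl(\theta,\lambda;M).
\]

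Next I would exploit the locked-edge constraint in $\Lambda(\theta)$: for any $(i,j) \in \mathrm{Locked}(\theta)$ we have $kl(\theta,\lambda;(i,j)) = 0$, so every $M \in \mathrm{Stable}(\theta)$ contributes zero and the sum reduces to $\sum_{M \notin \mathrm{Stable}(\theta)} \mathbb{E}_\theta[T_M(T)]\, kl(\theta,\lambda;M)$. To lower-bound this, I would apply the data-processing inequality to the event $\mathcal{E}_T = \{\sum_{M \in \mathrm{Stable}(\theta)} T_M(T) \geq T/2\}$. Uniform goodness at $\theta$ forces $\mathbb{P}_\theta(\mathcal{E}_T) \to 1$. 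At $\lambda$, the disjointness $\mathrm{Stable}(\lambda) \cap \mathrm{Stable}(\theta) = \emptyset$, which follows from $\lambda \notin \cup_{(P_u,P_a) \in \mathcal{A}(\theta)} \mathrm{FullRank}(P_u,P_a)$ via Proposition~\ref{prop:spieker}, means every $M \in \mathrm{Stable}(\theta)$ is non-stable under $\lambda$; hence uniform goodness at $\lambda$ yields $\mathbb{E}_\lambda\bigl[\sum_{M \in \mathrm{Stable}(\theta)} T_M(T)\bigr] = o(T^\alpha)$ for every $\alpha > 0$, and Markov's inequality gives $\mathbb{P}_\lambda(\mathcal{E}_T) = o(T^{\alpha-1})$. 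The Bernoulli $\mathrm{kl}$-inequality then yields $\mathrm{kl}(\mathbb{P}_\theta(\mathcal{E}_T), \mathbb{P}_\lambda(\mathcal{E}_T)) \geq (1-o(1))\log T$, so
\[
\sum_{M \notin \mathrm{Stable}(\theta)} \mathbb{E}_\theta[T_M(T)]\, kl(\theta,\lambda;M) \geq (1-o(1))\log T \quad \forall\, \lambda \in \Lambda(\theta).
\]

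Finally, setting $\eta_T(M) := \mathbb{E}_\theta[T_M(T)] / \log T$ for $M \notin \mathrm{Stable}(\theta)$, the above says that $\{\eta_T(M)\}$ is asymptotically feasible for the stated LP in the limit. Because $\mathrm{Match}(N,K)$ is finite, Fatou's lemma gives $\liminf_T \mathbb{E}_\theta[R_{0/1}(T)]/\log T \geq \sum_{M \notin \mathrm{Stable}(\theta)} \liminf_T \eta_T(M)$, which is at least the LP optimum $c(\theta)$ by feasibility of the limiting $\eta$. The principal obstacle is the second step: the ``good event'' must be defined as concentration on the \emph{union} $\mathrm{Stable}(\theta)$ rather than on a particular matching, since the algorithm may oscillate across several stable matchings of $\theta$. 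The sub-polynomial decay $\mathbb{P}_\lambda(\mathcal{E}_T) = o(T^{\alpha-1})$ then hinges critically on the equivalence between ``$\lambda$ shares no stable matching with $\theta$'' and ``$\lambda$ lies outside the union of compatible full-rank sets of admissible partial ranks,'' which is exactly the content of Proposition~\ref{prop:spieker}. Once this decay is justified, extracting the zero-KL contributions from locked edges and passing to the LP via Fatou's lemma are mechanical.
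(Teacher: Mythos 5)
Your proposal follows essentially the same route as the paper: the paper proves this theorem by adapting the Graves--Lai / Combes et al.\ change-of-measure framework (stated as Theorem~\ref{thm:centralized_lower_combes} in the appendix), and the one genuinely problem-specific ingredient --- defining the confusing set $\Lambda(\theta)$ so that the ``good event'' concentrates on the \emph{union} of stable matchings of $\theta$ (because a uniformly good policy may oscillate among several of them), with locked edges contributing zero divergence --- is exactly the point you identify and handle correctly. You simply write out the change-of-measure details that the paper delegates to the cited framework; the regret decomposition, the divergence identity, the use of $\mathrm{Stable}(\lambda)\cap\mathrm{Stable}(\theta)=\emptyset$ to force $\mathbb{P}_\lambda(\mathcal{E}_T)=o(T^{\alpha-1})$, and the reduction to the stated LP all match the paper's intended argument.

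One step as written does not go through: the passage to the LP optimum via Fatou. You set $\eta(M)=\liminf_T \eta_T(M)$ pointwise and claim this limiting vector is feasible. But feasibility requires
\begin{align*}
\sum_{(i,j)}\sum_{M\ni(i,j)}\Bigl(\liminf_T \eta_T(M)\Bigr)\, kl(\theta,\lambda;(i,j)) \;\geq\; 1,
\end{align*}
whereas superadditivity of $\liminf$ only gives that this quantity is \emph{at most} $\liminf_T \sum_M \eta_T(M)\,kl(\theta,\lambda;M)$; the pointwise-$\liminf$ vector may therefore violate the constraint, and the chain $\sum_M\liminf_T\eta_T(M)\geq c(\theta)$ is unjustified. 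The standard fix: assume $c^*:=\liminf_T\sum_M\eta_T(M)<\infty$ (else nothing to prove), pass to a subsequence attaining $c^*$, note the finitely many coordinates $\eta_T(M)\geq 0$ are then uniformly bounded, extract a further convergent subsequence $\eta_T\to\eta^*$, and observe that the constraints (each satisfied up to $1-o(1)$) pass to the limit by continuity, so $\eta^*$ is feasible with $\sum_M\eta^*(M)=c^*\geq c(\theta)$. With that replacement the argument is complete.
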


We now present binary stable regret lower bounds derived for some special instances.
\begin{corollary}[Serial Dictatorship]
Consider a general serial dictatorship instance $\theta$ with two sided uncertainty, where $N\leq K$, $\gamma_{j,i}(\theta) > \gamma_{j,i'}(\theta)$ for all $i < i'$ and $j$, and the unique stable matching is $\{(i,i): i \in [N]\}$. Then the binary stable regret for $\theta$ is lower bounded as for any {\em uniformly good} policy $\pi$ is lower bounded as 
$$
\lim\!\!\!\inf_{T\to \infty} \tfrac{\mathbb{E}[R_{0/1}^{\pi}(T; \theta)]}{\log(T)} \geq \max\left(\max_{i \in [N]}c_{u,i}(\theta), \max_{j \in [K]}c_{a,j}(\theta)\right)
$$
where, 
\begin{align*}
 &c_{u,i}(\theta) = \sum_{j\in D_i(\theta)}\tfrac{1}{kl(\gamma_{j,i}(\theta), \gamma_{j,j}(\theta))} + \sum_{j > i} \tfrac{1}{kl(\mu_{i,j}(\theta), \mu_{i,i}(\theta))},\\
 &c_{a,j}(\theta) =\sum_{i: j \in D_i(\theta)}\tfrac{1}{kl(\gamma_{j,i}(\theta), \gamma_{j,j}(\theta))} + \sum_{i< j} \tfrac{1}{kl(\mu_{i,j}(\theta), \mu_{i,i}(\theta))}.
\end{align*}
\end{corollary}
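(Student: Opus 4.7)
Apply Theorem~\ref{thm:centralized_lower} and lower bound the LP value $c(\theta)$ by exhibiting a family of single-entry perturbations of $\theta$ that each lie in $\Lambda(\theta)$ and impose one scalar constraint on the variables $\eta(M)$. Because $\mathrm{Stable}(\theta) = \{M^{\star}\}$ with $M^{\star} = \{(k,k): k \in [N]\}$, the locked set reduces to $M^{\star}$, so every $\lambda \in \Lambda(\theta)$ must coincide with $\theta$ on the diagonal rewards and destabilise $M^{\star}$. Aggregating the constraints once per user and once per arm then produces $c_{u,i}(\theta)$ and $c_{a,j}(\theta)$, and taking maxima will yield the stated bound.

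\textbf{Alternative instances.} For each pair $(i,j)$ with $i \ne j$ and a small $\varepsilon > 0$, define $\lambda_{i,j}^{\varepsilon}$ to agree with $\theta$ on every coordinate except one. If $j > i$, set $\mu_{i,j}(\lambda_{i,j}^{\varepsilon}) = \mu_{i,i}(\theta) + \varepsilon$; since $\gamma_{j,i}(\theta) > \gamma_{j,j}(\theta)$ already holds (as $i < j$), the pair $(i,j)$ becomes a blocking pair of $M^{\star}$ under $\lambda_{i,j}^{\varepsilon}$. If $j < i$ and $j \in D_i(\theta)$, set $\gamma_{j,i}(\lambda_{i,j}^{\varepsilon}) = \gamma_{j,j}(\theta) + \varepsilon$; since $\mu_{i,j}(\theta) > \mu_{i,i}(\theta)$ already holds (because $j \in D_i(\theta)$), $(i,j)$ blocks $M^{\star}$ here as well. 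In either case the perturbed coordinate sits off the diagonal, so $\lambda_{i,j}^{\varepsilon}$ matches $\theta$ on $\mathrm{Locked}(\theta)$; combined with $\mathrm{Stable}(\theta) = \{M^{\star}\}$ and the blocking-pair check, this yields $\lambda_{i,j}^{\varepsilon} \in \Lambda(\theta)$.

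\textbf{Constraint aggregation.} By construction $kl(\theta, \lambda_{i,j}^{\varepsilon}; (i',j'))$ is zero outside $(i',j') = (i,j)$, and sending $\varepsilon \downarrow 0$ drives the single nonzero term to $kl(\mu_{i,j}(\theta), \mu_{i,i}(\theta))$ in the first case and $kl(\gamma_{j,i}(\theta), \gamma_{j,j}(\theta))$ in the second; the constraint from Theorem~\ref{thm:centralized_lower} therefore becomes $\sum_{M \ni (i,j)} \eta(M) \ge 1/kl_{i,j}$ in the limit. Fix a user $i$: the sets $\{M : (i,j) \in M\}$ are pairwise disjoint across $j$ because $i$ appears in at most one edge of any matching, and each such $M$ differs from $M^{\star}$, hence lies in $\mathrm{Match}(N,K) \setminus \mathrm{Stable}(\theta)$. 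Moreover, serial-dictatorship stability of $M^{\star}$ forces $D_i(\theta) \cap \{j : j > i\} = \emptyset$, so summing the $\lambda_{i,j}^{\varepsilon}$ constraints over $j \in D_i(\theta) \cup \{j' : j' > i\}$ gives
\[
\sum_{M \notin \mathrm{Stable}(\theta)} \eta(M) \;\ge\; \sum_{j \in D_i(\theta)} \frac{1}{kl(\gamma_{j,i}(\theta), \gamma_{j,j}(\theta))} + \sum_{j > i} \frac{1}{kl(\mu_{i,j}(\theta), \mu_{i,i}(\theta))} = c_{u,i}(\theta).
\]
The mirrored aggregation for a fixed arm $j$, summing over $i$ with $j \in D_i(\theta)$ together with $i < j$ and using the pairwise disjointness of $\{M \ni (i,j)\}$ across $i$, yields $\sum_{M \notin \mathrm{Stable}(\theta)} \eta(M) \ge c_{a,j}(\theta)$. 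Taking maxima over $i \in [N]$ and $j \in [K]$ lower bounds $c(\theta)$ by the required expression, and Theorem~\ref{thm:centralized_lower} converts this into the asymptotic regret bound.

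\textbf{Main obstacle.} The pivotal step is verifying that a single-coordinate perturbation suffices to land $\lambda_{i,j}^{\varepsilon}$ inside $\Lambda(\theta)$; this is where the serial-dictatorship structure pays dividends, since the direction of the required blocking flip is forced by the sign of $i-j$, so we never need multi-edge perturbations and each generated constraint carries a single closed-form KL term. Once this is in hand, the combinatorics reduces to the disjointness of $\{M \ni (i,j)\}$ within $\mathrm{Match}(N,K) \setminus \mathrm{Stable}(\theta)$, and the general admissible-rank description of $\Lambda(\theta)$ collapses to $\{\lambda : \lambda = \theta \text{ on } \mathrm{Locked}(\theta),\; M^{\star} \notin \mathrm{Stable}(\lambda)\}$ because $|\mathrm{Stable}(\theta)| = 1$, so no finer use of $\mathcal{A}(\theta)$ is required.
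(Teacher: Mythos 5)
Your proposal is correct and follows essentially the same route as the paper: single off-diagonal perturbations $\lambda_{\mu,(ij)}$ (for $j>i$) and $\lambda_{\gamma,(ij)}$ (for $j\in D_i(\theta)$) that create a blocking pair for the unique stable matching while preserving the locked diagonal, followed by aggregation of the resulting constraints using the disjointness of $\{M\ni(i,j)\}$ across $j$ for fixed $i$ (and across $i$ for fixed $j$). The only cosmetic difference is that the paper writes the relaxation as an explicit LP with auxiliary variables $w_{i,j}$, whereas you sum the constraints directly.
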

Note the above instance holds with out loss of generality up to renaming of arms and user, and thus covers all instances of general serial dictatorship. This lower bound differs from \cite{ucbd3} in several ways.  It's for binary stable regret (not per-user regret), applies to general serial matching (not just OSB instances), and accounts for two-sided (not just user-sided) uncertainty. Thus, the bounds are incomparable.

\begin{corollary}[General Instance with Redundant Arm]
Consider a general two-sided stable matching instance $\theta$ with redundant arms $N < K$,  where $j^*(i) \triangleq   \max_{j}\{ \mu_{i,j}(\theta): (i,j) \in Locked(\theta)\}$. Then the binary stable regret for $\theta$ is lower bounded as for any {\em uniformly good} policy $\pi$ is lower bounded as 
$\lim\inf_{T\to \infty} \tfrac{\mathbb{E}[R_{0/1}^{\pi}(T; \theta)]}{\log(T)} \geq \hfill$  $$\max\left(\max_{i \in [N]}c_{u,i}(\theta), 
 \max_{j: (\cdot, j) \notin Locked(\theta)}c_{a,j}(\theta)\right) \text{ where,}
$$
\begin{align*}
 &c_{u,i}(\theta) = \sum_{j: (\cdot,j) \notin Locked(\theta)} \tfrac{1}{kl(\mu_{i,j}(\theta), \mu_{i,j^*(i)}(\theta))},\\
 &c_{a,j}(\theta) =\sum_{i \in [N]} \tfrac{1}{kl(\mu_{i,j}(\theta), \mu_{i,j^*(i)}(\theta))}.
\end{align*}
\end{corollary}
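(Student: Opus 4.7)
The plan is to instantiate the LP lower bound in Theorem~\ref{thm:centralized_lower} by constructing, for each user $i\in[N]$ and each redundant arm $j$ (i.e., $(\cdot, j)\notin \mathrm{Locked}(\theta)$), a single alternative $\lambda_{i,j} \in \Lambda(\theta)$ that differs from $\theta$ only in the scalar entry $\mu_{i,j}$. Because that edge is non-locked, the perturbation automatically satisfies the locked-edge identities in \eqref{eq:bad_set}, and because the KL divergence vanishes on every other matched edge, the corresponding LP constraint collapses to a clean statement about the $\eta$-mass on matchings that contain $(i,j)$. Summing these per-pair constraints row-wise (over $j$) and column-wise (over $i$) then recovers $c_{u,i}(\theta)$ and $c_{a,j}(\theta)$ respectively.

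Concretely I would set $\mu_{i,j}(\lambda_{i,j}) = \mu_{i,j^*(i)}(\theta) + \epsilon$ for a small $\epsilon>0$ (so that the value stays in $[0,1]$; if $\mu_{i,j^*(i)}(\theta)=1$ the matching inverse-KL contribution is $0$ and there is nothing to show) and leave every other mean, including all $\gamma$-entries, identical to $\theta$. The key verification is that $\mathrm{Stable}(\lambda_{i,j}) \cap \mathrm{Stable}(\theta) = \emptyset$: for any $M \in \mathrm{Stable}(\theta)$, arm $j$ is unmatched in $M$ because $j$ is redundant, and user $i$'s partner $M_i$ satisfies $\mu_{i,M_i}(\theta) \leq \mu_{i,j^*(i)}(\theta)$ by the very definition of $j^*(i)$. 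Since $(i,M_i)$ is locked, its mean is preserved under $\lambda_{i,j}$, so $\mu_{i,j}(\lambda_{i,j}) > \mu_{i,M_i}(\lambda_{i,j})$, while arm $j$, being unmatched in $M$, strictly prefers any user with positive $\gamma_{j,i}$. Hence $(i,j)$ blocks $M$ under $\lambda_{i,j}$, certifying $\lambda_{i,j}\in\Lambda(\theta)$. Because the only non-zero KL term is contributed by the edge $(i,j)$, the LP constraint in Theorem~\ref{thm:centralized_lower} reduces to
\[
\sum_{M \notin \mathrm{Stable}(\theta),\, M \ni (i,j)} \eta(M) \,\geq\, \frac{1}{kl(\mu_{i,j}(\theta),\, \mu_{i,j^*(i)}(\theta) + \epsilon)},
\]
where the restriction to non-stable $M$ is automatic since any matching containing $(i,j)$ with $j$ redundant cannot lie in $\mathrm{Stable}(\theta)$. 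Letting $\epsilon \downarrow 0$ and invoking continuity of KL removes the $\epsilon$ in the denominator.

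The remaining aggregation is pure bookkeeping. Fix a user $i$ and sum the previous inequality over all redundant arms $j$: since each matching $M$ pairs user $i$ with at most one arm, every $\eta(M)$ is counted at most once on the left, yielding $\sum_{M\notin \mathrm{Stable}(\theta)}\eta(M) \geq c_{u,i}(\theta)$. Fixing a redundant arm $j$ and summing over $i\in[N]$ is symmetric (each matching uses arm $j$ at most once) and yields $\sum_{M\notin \mathrm{Stable}(\theta)}\eta(M) \geq c_{a,j}(\theta)$. Maximizing over $i$ and over redundant $j$ gives the claimed bound. I expect the only genuinely nontrivial step to be the $\Lambda(\theta)$ membership: namely, convincing oneself that a single scalar perturbation of one non-locked mean simultaneously destroys \emph{every} stable matching of $\theta$. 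That is exactly where the choice of $j^*(i)$ as user $i$'s \emph{best} locked partner is indispensable, since bumping $\mu_{i,j}$ above $\mu_{i,j^*(i)}(\theta)$ creates a blocking pair with $j$ against every $M\in\mathrm{Stable}(\theta)$ in one stroke.
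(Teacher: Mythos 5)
Your proposal is correct and follows essentially the same route the paper takes: the paper does not spell out this corollary separately, but its explicit derivation for the serial-dictatorship case uses the identical template — perturb a single non-locked mean $\mu_{i,j}$ up to just above the best locked partner's value to create a blocking pair against every $M\in\mathrm{Stable}(\theta)$, verify membership in $\Lambda(\theta)$, reduce each LP constraint to the $\eta$-mass on matchings containing $(i,j)$, and aggregate row-wise and column-wise using the fact that each matching uses a given user or arm at most once. Your identification of the $\Lambda(\theta)$-membership step (and the role of $j^*(i)$ being the \emph{best} locked partner) as the crux is exactly right.
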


\textbf{General Instances:} Because of space constraints, detailed definitions and results for the fully general case (where N may equal K) are relegated to Appendix~\ref{sec:proof_lower_bound}.  We provide a brief and high-level overview here. 

A {\em cover} of the set $\mathcal{A}(\theta)$  is a set of triplets (2nd and 3rd entries are unordered)  $(i, \{j, j'\})$, $(j, \{i, i'\})$ for $i, i'\in [N], j,j' \in [K]$  such that - (i) for each $(P_{u}, P_a) \in \mathcal{A}(\theta)$  there is at least one $(i, \{j, j'\})$ such that  $j \underset{P_{u,i}}{\neq} j'$ or $(j, \{i, i'\})$ such that $i \underset{P_{a,j}}{\neq} i'$; 
(ii) at most one of $(i, j)$ or $(i, j')$  lies in $Locked(\theta)$, same for $(i,j)$ and $(j,i')$; and 
(iii) all the triplets are `realizable'. See, Def.~\ref{def:cover} in appendix for formal version. 
Also, two triplets above are overlapping if they share a common (user, arm) pair, e.g. $(i,\{j,j'\})$ and $(j, \{i, i'\})$. We define a collection of `non-connected' cover-group where no pair of covers share any overlapping triplet. See Def.~\ref{def:cover_group} in appendix. The set of cover-group for $\theta$ is $\mathcal{CG}(\theta)$.

Through a dual formulation of optimization problem in Theorem~\ref{thm:centralized_lower} we show in Theorem~\ref{thm:centralized_lower_final} that the value $c(\theta)$ therein admits the lower bound 
\begin{align}\label{eq:gen_c_theta}
c(\theta) \geq \max_{G\in \mathcal{CG}(\theta)} \sum_{C\in G} kl(\theta, C)^{-1}, 
\end{align}
where the divergence for a cover $C$ is given as 
\begin{align*}
    &kl(\theta, C) \triangleq  \sum_{i\in [N]} \max_{\substack{(i,j')\in Locked(\theta),\\ (i,\{j,j'\}) \in C}}   kl\big( \mu_{ij}(\theta), \mu_{ij'}(\theta) \big) + \sum_{j \in [K]}\max_{\substack{(i',j) \in Locked(\theta), (j,\{i,i'\}) \in C}}  kl\big(\gamma_{ji}(\theta), \gamma_{ji'}(\theta)\big).
\end{align*}
The quantity $\mathcal{CG}(\theta)$ and $kl(\theta, C)$ depends on the combinatorial structure of the admissible partial rank set $\mathcal{A}(\theta)$, and any further simplification without large sub-optimality proves to be difficult. 

\paragraph{Remarks on Lower Bound:} Multiple remarks on our regret lower bound is in order.
$\diamond\,\,$\textbf{OSSB Applicability:} We note that in \cite{combes2015combinatorial}  the authors provide an algorithm that can attain the regret lower bound named OSSB. However, as the size of the set of all matchings is exponential in $N$) a OSSB style algorithm is computationally prohibitive for our problem. More importantly, OSSB algorithm is designed for instances with unique optimal solution, so it is unclear if OSSB can be applied in our case directly.
\\$\diamond\,\,$\textbf{Regret `Free' Instances:} If for each pair $(i,j)$ there exists a stable match $\mathcal{M} \in \mathrm{Stable}(\theta)$ then regret is $o(\log(T))$. Playing the stable matching in a round robin manner resolves all the gaps with $\exp(-\Omega(T))$ probability without any regret. So statistically $O(1)$ regret is feasible even for uniformly good policies. We present a detailed argument, and a sequence of such zero regret instances for all $N \geq 2$ in the appendix~\ref{sec:proof_lower_bound}.For $N=2$ in the instance $U_1: [1,2], U_2: [2,1], A_1: [2, 1], A_2: [1, 2]$ both $[(1,1), (2,2)]$ and $[(2,1), (1,2)]$  are stable matching, and these two covers all edges.

\section{Conclusion and Future Work}
This work investigates two-sided bandit learning in general matching markets, providing both centralized and decentralized algorithms with logarithmic regret guarantees (Theorems~\ref{thm:centralized_main} and \ref{thm:decentralized_main}, respectively). Our algorithms leverage the concept of super-stable matching \cite{irving1994stable}, playing such a match when possible and exploring otherwise.  Our centralized regret lower bound (Theorem~\ref{thm:centralized_lower_final}) highlights the importance of the super-stable set in determining the difficulty of stable matching bandit learning.  While our current work employs a sub-optimal round-robin exploration strategy, future work will focus on refining exploration techniques to achieve tight regret bounds.  Furthermore, while we guarantee pessimal stable regret, developing algorithms that optimize for specific objectives (e.g., user-optimal or social-optimal) represents another important direction. We hypothesize that our initial stable matching algorithm could be combined with the distributive lattice structure of the stable matching set to efficiently explore and identify optimal stable matching.



\bibliographystyle{unsrt}  
\bibliography{ref}  

\begin{thebibliography}{10}

\bibitem{das2005two}
Sanmay Das and Emir Kamenica.
\newblock Two-sided bandits and the dating market.
\newblock In {\em IJCAI}, volume~5, page~19. Citeseer, 2005.

\bibitem{liu_centralized_2020}
Lydia~T. Liu, Horia Mania, and Michael Jordan.
\newblock Competing bandits in matching markets.
\newblock In Silvia Chiappa and Roberto Calandra, editors, {\em Proceedings of the Twenty Third International Conference on Artificial Intelligence and Statistics}, volume 108 of {\em Proceedings of Machine Learning Research}, pages 1618--1628. PMLR, 26--28 Aug 2020.

\bibitem{ucbd3}
Abishek Sankararaman, Soumya Basu, and Karthik Abinav~Sankararaman.
\newblock Dominate or delete: Decentralized competing bandits in serial dictatorship.
\newblock In Arindam Banerjee and Kenji Fukumizu, editors, {\em Proceedings of The 24th International Conference on Artificial Intelligence and Statistics}, volume 130 of {\em Proceedings of Machine Learning Research}, pages 1252--1260. PMLR, 13--15 Apr 2021.

\bibitem{basu_2021}
Soumya Basu, Karthik~Abinav Sankararaman, and Abishek Sankararaman.
\newblock Beyond $log^2(t)$ regret for decentralized bandits in matching markets.
\newblock In Marina Meila and Tong Zhang, editors, {\em Proceedings of the 38th International Conference on Machine Learning}, volume 139 of {\em Proceedings of Machine Learning Research}, pages 705--715. PMLR, 18--24 Jul 2021.

\bibitem{liu_decentralized_2021}
Lydia~T. Liu, Feng Ruan, Horia Mania, and Michael~I. Jordan.
\newblock Bandit learning in decentralized matching markets.
\newblock {\em Journal of Machine Learning Research}, 22(211):1--34, 2021.

\bibitem{jagadeesan2021learning}
Meena Jagadeesan, Alexander Wei, Yixin Wang, Michael Jordan, and Jacob Steinhardt.
\newblock Learning equilibria in matching markets from bandit feedback.
\newblock {\em Advances in Neural Information Processing Systems}, 34:3323--3335, 2021.

\bibitem{hosseini2024putting}
Hadi Hosseini, Sanjukta Roy, and Duohan Zhang.
\newblock Putting gale \& shapley to work: Guaranteeing stability through learning.
\newblock {\em arXiv preprint arXiv:2410.04376}, 2024.

\bibitem{avishek_nonstationary}
Avishek Ghosh, Abishek Sankararaman, Kannan Ramchandran, Tara Javidi, and Arya Mazumdar.
\newblock Competing bandits in non-stationary matching markets.
\newblock {\em IEEE Transactions on Information Theory}, 70(4):2831--2850, 2024.

\bibitem{li2019three}
Boyang Li, Yurong Cheng, Ye~Yuan, Guoren Wang, and Lei Chen.
\newblock Three-dimensional stable matching problem for spatial crowdsourcing platforms.
\newblock In {\em Proceedings of the 25th ACM SIGKDD International Conference on Knowledge Discovery \& Data Mining}, pages 1643--1653, 2019.

\bibitem{zhang2023stable}
Nian Zhang, Zhixue Liu, Feng Li, Zhou Xu, and Zhihao Chen.
\newblock Stable matching for crowdsourcing last-mile delivery.
\newblock {\em IEEE Transactions on Intelligent Transportation Systems}, 24(8):8174--8187, 2023.

\bibitem{johari2021matching}
Ramesh Johari, Vijay Kamble, and Yash Kanoria.
\newblock Matching while learning.
\newblock {\em Operations Research}, 69(2):655--681, 2021.

\bibitem{gale1962college}
D.~Gale and L.~S. Shapley.
\newblock College admissions and the stability of marriage.
\newblock {\em The American Mathematical Monthly}, 69(1):9--15, 1962.

\bibitem{reddy2013matching}
Mettu~Srinivas Reddy, Joy Varghese, Jayanthi Venkataraman, and Mohamed Rela.
\newblock Matching donor to recipient in liver transplantation: relevance in clinical practice.
\newblock {\em World journal of hepatology}, 5(11):603, 2013.

\bibitem{tejas_2024}
Tejas Pagare and Avishek Ghosh.
\newblock Explore-then-commit algorithms for decentralized two-sided matching markets.
\newblock In {\em 2024 IEEE International Symposium on Information Theory (ISIT)}, pages 2092--2097, 2024.

\bibitem{zhangdecentralized}
YiRui Zhang and Zhixuan Fang.
\newblock Decentralized two-sided bandit learning in matching market.
\newblock In {\em The 40th Conference on Uncertainty in Artificial Intelligence}.

\bibitem{irving1994stable}
Robert~W Irving.
\newblock Stable marriage and indifference.
\newblock {\em Discrete Applied Mathematics}, 48(3):261--272, 1994.

\bibitem{spieker1995set}
Boris Spieker.
\newblock The set of super-stable marriages forms a distributive lattice.
\newblock {\em Discrete Applied Mathematics}, 58(1):1--11, 1995.

\bibitem{combes2017minimal}
Richard Combes, Stefan Magureanu, and Alexandre Proutiere.
\newblock Minimal exploration in structured stochastic bandits.
\newblock {\em Advances in Neural Information Processing Systems}, 30, 2017.

\bibitem{kong2023playeroptimal}
Fang Kong and Shuai Li.
\newblock Player-optimal stable regret for bandit learning in matching markets.
\newblock In {\em Proceedings of the 2023 Annual ACM-SIAM Symposium on Discrete Algorithms (SODA)}, pages 1512--1522. SIAM, 2023.

\bibitem{maheshwari_2022}
Chinmay Maheshwari, Shankar Sastry, and Eric Mazumdar.
\newblock Decentralized, communication- and coordination-free learning in structured matching markets.
\newblock In S.~Koyejo, S.~Mohamed, A.~Agarwal, D.~Belgrave, K.~Cho, and A.~Oh, editors, {\em Advances in Neural Information Processing Systems}, volume~35, pages 15081--15092. Curran Associates, Inc., 2022.

\bibitem{pokharel2023converging}
Gaurab Pokharel and Sanmay Das.
\newblock Converging to stability in two-sided bandits: The case of unknown preferences on both sides of a matching market.
\newblock {\em arXiv preprint arXiv:2302.06176}, 2023.

\bibitem{combes2015combinatorial}
Richard Combes, Mohammad~Sadegh Talebi Mazraeh~Shahi, Alexandre Proutiere, et~al.
\newblock Combinatorial bandits revisited.
\newblock {\em Advances in neural information processing systems}, 28, 2015.

\bibitem{graves1997asymptotically}
Todd~L Graves and Tze~Leung Lai.
\newblock Asymptotically efficient adaptive choice of control laws incontrolled markov chains.
\newblock {\em SIAM journal on control and optimization}, 35(3):715--743, 1997.

\bibitem{kong2024improved}
Fang Kong and Shuai Li.
\newblock Improved bandits in many-to-one matching markets with incentive compatibility.
\newblock In {\em Proceedings of the AAAI Conference on Artificial Intelligence}, volume~38, pages 13256--13264, 2024.

\bibitem{saha2024altruistic}
Peash~Ranjan Saha, Salimur Choudhury, and Kai Salomaa.
\newblock Altruistic bandit learning for one-to-many matching markets.
\newblock In {\em Proceedings of the 2024 International Conference on Information Technology for Social Good}, pages 197--203, 2024.

\bibitem{srikanth_uiuc}
S.~Rasoul Etesami and R.~Srikant.
\newblock Decentralized and uncoordinated learning of stable matchings: A game-theoretic approach, 2024.

\bibitem{parikh2024competing}
Satush Parikh, Soumya Basu, Avishek Ghosh, and Abishek Sankararaman.
\newblock Competing bandits in decentralized large contextual matching markets.
\newblock {\em arXiv preprint arXiv:2411.11794}, 2024.

\bibitem{lin2024stable}
Shiyun Lin, Simon Mauras, Nadav Merlis, and Vianney Perchet.
\newblock Stable matching with ties: Approximation ratios and learning.
\newblock {\em arXiv preprint arXiv:2411.03270}, 2024.

\bibitem{lai_1985}
T.L Lai and Herbert Robbins.
\newblock Asymptotically efficient adaptive allocation rules.
\newblock {\em Advances in Applied Mathematics}, 6(1):4--22, 1985.

\end{thebibliography}
\newpage
\appendix
\section{Related Work}
\paragraph{Stable Matching with Partial Preference:} Stable matching has been a successful concept for studying equilibria among incentivized agents in two-sided matching markets. In the foundational work \cite{gale1962college}, participants express preferences through strict rankings. A natural extension allows for more realistic scenarios with partial preferences. \cite{irving1994stable} expanded the concept of stability to partial preferences by introducing three distinct stability concepts: weak, strong, and super-stability. Our work draws from \cite{irving1994stable}, and connects super-stability to the inherent hardness of learning stable mathcing under bandit feedback. The Extended Gale-Shapley algorithm there is the central algorithm that we use breaking from the literature that exclusively uses the original Gale-Shapley algorithm. Next, we build on the work of \cite{spieker1995set} which provides structure of the set of super-stable matchings. 
\paragraph{Bandit Learning in Matching Markets:} The work of \cite{das2005two} introduced the problem of learning stable matching for a matching markets with one-sided uncertainty, i.e. the users have unknown preferences but the arms have complete knowledge of their prefrences. In \cite{liu_centralized_2020} the authors provided a UCB based centralized algorithm with the first theoretical regret guarantee of $O(NK\log(T)/ \Delta_{\min}^2)$. An explore-then-commit algorithm with the knowledge of minimum gap $\Delta_{\min}$ was also proposed. Next, the work was extended to the decentralized setup that removes the knowledge of gap but imposes uniqueness of stable matching via structural assumptions on preference rank~\cite{ucbd3, basu_2021, maheshwari_2022}. The algorithms proposed therein achieve a $O(NK\log(T)/ \Delta_{\min}^2)$ user pessimal stable regret. The collision avoidance UCB based algorithm proposed in \cite{liu_decentralized_2021} achieves $O(\exp(N)\log^2(T)/ \Delta_{\min}^2)$ regret in general matching markets. 

A separate thread of work explored the Explore-then-commit type algorithms for general markets starting from \cite{basu_2021}, and improved in the ETGS \cite{kong2023playeroptimal}. The ETGS type algorithms are able to achieve a $O(K\log(T)/ \Delta_{\min}^2)$ user optimal stable regret. A more recent work \cite{hosseini2024putting} provides sample complexity bounds, which translates to $O(E\log(T)/ \Delta_{\min}^2)$ regret bound, using a concept of envy-set where $E$ is the size of average envy set, and may be less than $K$ based on the underlying instances. There has been multiple works with ETGS as the core where a user can match with multiple arms (upto a quota) in each round \cite{kong2024improved, saha2024altruistic}. 

Only a select few recent works \cite{pokharel2023converging, tejas_2024, zhangdecentralized} study matching markets with two-sided uncertainty where both sides need to learn their respective preferences from bandit feedback. In \cite{tejas_2024}, the authors present a black-board model similar to ours and attain a $O(K\log(T)/ \Delta_{\min}^2)$ algorithm similar to centralized ETGS mentioned in this work. Similar guarantees are provided in \cite{zhangdecentralized} without baclkboard, while \cite{pokharel2023converging} lacks any theoretical guarantee. But all these works focus on user optimal stable match learning. This is incomparable to the task in this paper of finding any one stable matching. 

Our work provides an entirely new algorithmic idea by applying the Extended Gale Shapley algorithm~\cite{irving1994stable}. Furthermore, we provide a new instance dependent lower bound for general instances for centralized setting, and hence also for decentralized setting. The long standing regret lower bound before this work comes from \cite{basu_2021} which is applicable only to decentralized setting, and for the limited special case of serial dictatorship. Our work is based on  

On the algorithmic side, apart from Gale Shapley based bandit algorithms, the authors in \cite{srikanth_uiuc} take a new game-theoretic approach of preference learning with exponential weight learning with one-sided uncertainty and provide a $O(N^2log^3(T) +  N^2log(T)/ \Delta_{\min}^N)$ regret bounds. Also, there has been work that involved non-stationarity in the mean rewards of the arms \cite{avishek_nonstationary, maheshwari_2022}, and mean rewards have a linear contextual structure \cite{jagadeesan2021learning, parikh2024competing}. Finally, in \cite{lin2024stable} the authors study approximation ratios and learning to find weak-stable matches which is different from the standard setup.

\section{Experiments}\label{app:experiment}
In this section, we numerically study the behavior of the proposed centralized algorithm. As the decentralized algorithm is guaranteed to have only a regret $O(N^2)$ away from the centralized one, we omit the decentralized algorithm. We first show the pessimal stable regret dynamics of the centralized algorithm for a system with $N$ users and $K$ arms. We generate a bandit instance by selecting a random preference order for each user and each arm. We generate the mean rewards $\mu_{i,j}$ and $\gamma_{j,i}$, for $i \in [N]$ and $j \in [K]$ such that the permutation is randomized, and the gaps are chosen uniformly at random from  $[\Delta_{\min}, \Delta_{\max}]$. For each reward observation we apply independent Gaussian noise with standard deviation $\sigma$ (the UCB and LCB bonuses are also scaled with $\sigma$). We plot the evolution of the mean cumulative regret with time for our proposed Algorithm~\ref{alg:centralized}, and a centralized Explore-then-Gale Shapley (ETGS) algorithm of Kong et al.~\cite{kong2023playeroptimal} for fair comparison.\footnote{A decentralized version of this algorithm with shared global flags (aka blackboard) is presented in Algorithm 2 in \cite{tejas_2024}.}  Centralized ETGS explores until the top $N$ ranks are resolved for each user and arms, and then commits to the Gale Shapley solution afterwards. 

In Figure~\ref{fig:evolution}, we consider one such instance with  $N=8$ users, $M=8$ arms, $\Delta_{\min} = 0.2, \Delta_{\max} = 0.5$, and $\sigma = 0.4$. We observe the logarithmic pessimal stable regret for both the algorithms. However, clearly the proposed Extended-GS based algorithm outperforms the Centralized ETGS adapted from Kong et al.~\cite{kong2023playeroptimal}. We plot the mean rewards, as well as $25\%$ and $75\%$ regret plots averaged over $20$ sample paths. Next, in Figrue~\ref{fig:regret_stats} we study the dependence on $\sigma / \Delta_{avg}$ by varying $\sigma$ for $N=K=5$ and $\Delta_{\min} = 0.2, \Delta_{\max} = 0.5$.  We observe that there is a clear separation between the Centralized ETGS and proposed Extended-GS algorithm showing the superiority of the latter. Both the algorithms initially explore but Extended-GS algorithm finds an admissible partial rank much earlier than the top $N$ rank resolution leading to a lower regret than ETGS. Figrue~\ref{fig:regret_gap} exhibits similar behavior with varying gaps $\Delta_{\min}$ and $\Delta_{\max}$.

\begin{figure}[h]
\centering
\includegraphics[width=0.55\linewidth]{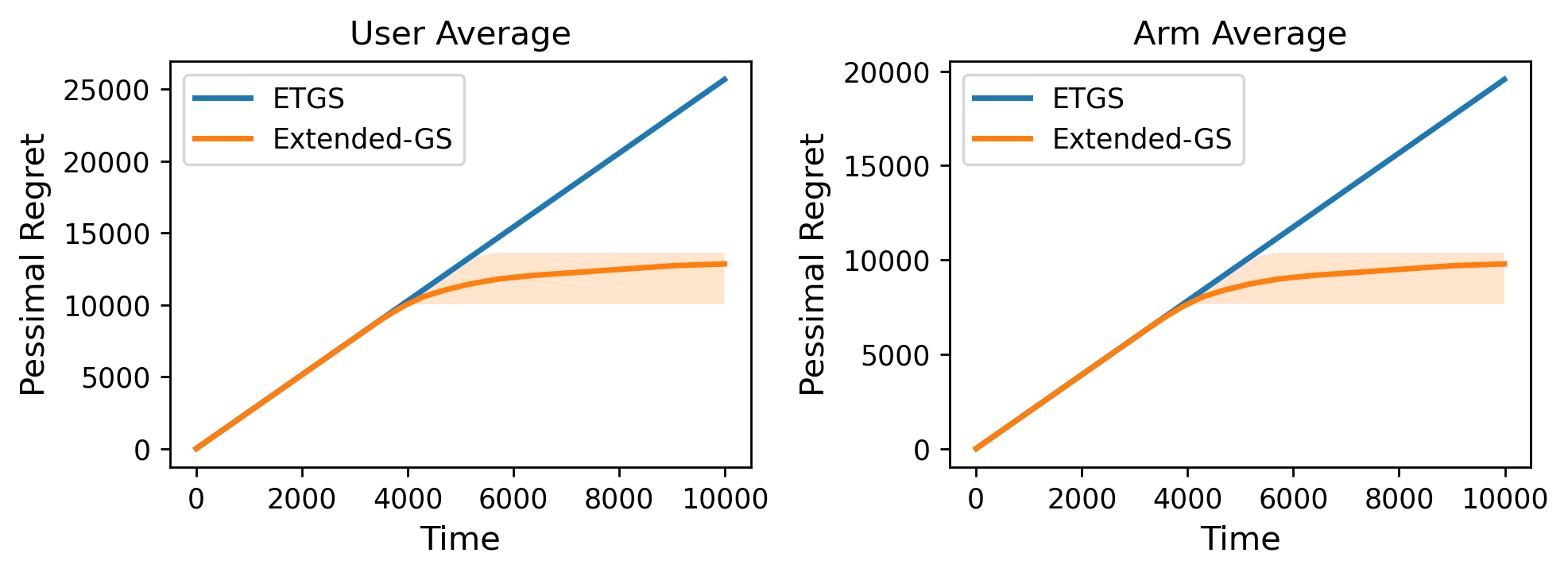}
\caption{Pessimal Stable Regret for $N=8$ Users, $M=8$ Arms, $\Delta_{\min} =0.2$, $\Delta_{\max} = 0.5$, and $\sigma=0.5$.}
\label{fig:evolution}
\end{figure}

\begin{figure}[h]
\centering
\includegraphics[width=0.55\linewidth]{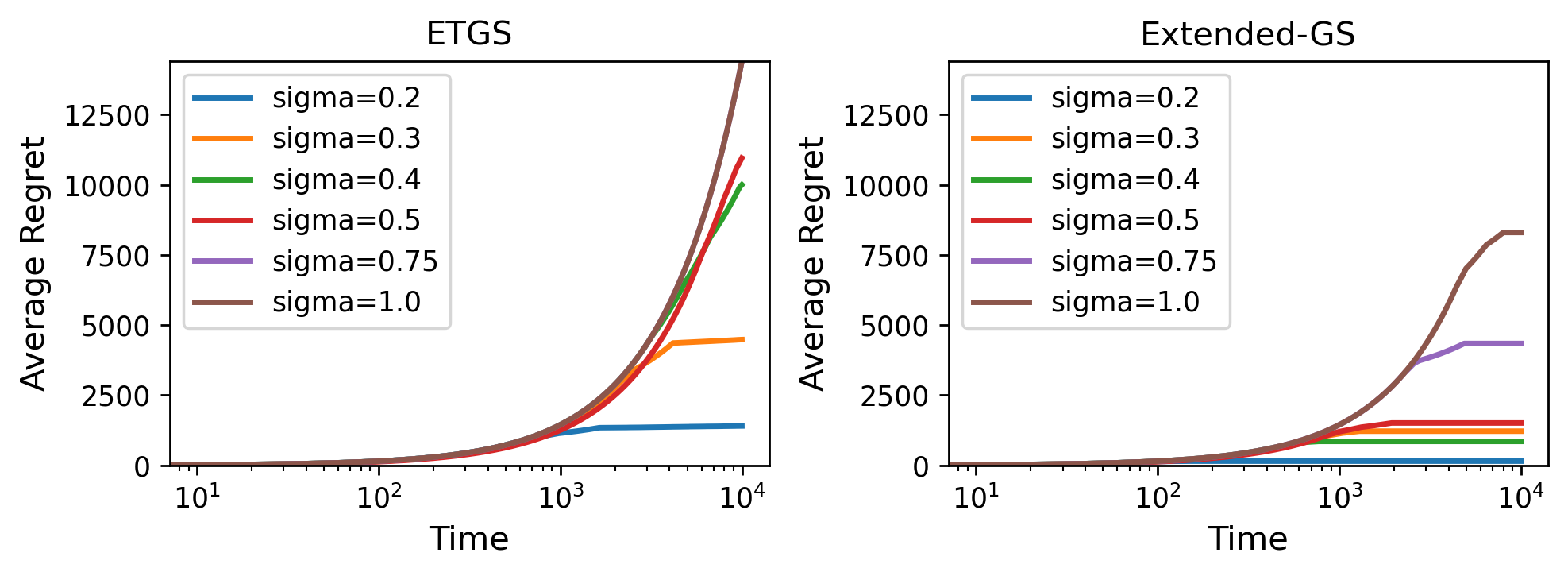}
\caption{Average pessimal stable regret across all users and arms with different $\sigma$, with $N=5$ Users, $M=5$ Arms, $\Delta_{\min} =0.2$, $\Delta_{\max} = 0.5$.}
\label{fig:regret_stats}
\end{figure}

\begin{figure}[h]
\centering
\includegraphics[width=0.55\linewidth]{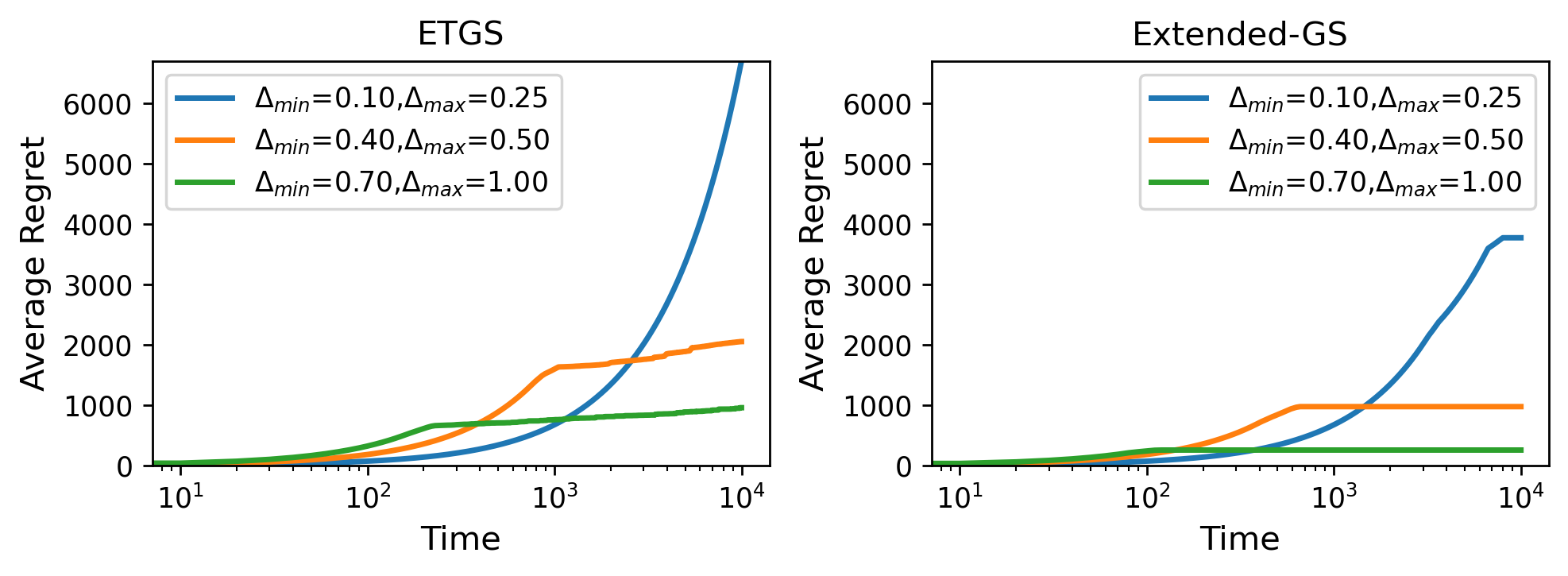}
\caption{Average pessimal stable regret across all users and arms with different $\Delta_{\min}$, and $\Delta_{\max}$, with $N=5$ Users, $M=5$ Arms, $\sigma = 0.4$.}
\label{fig:regret_gap}
\end{figure}
\vspace{-0.5em}
\section{Algorithms Addendum}\label{app:algo}
In round $t$ for $i \in [N]$ and $j \in [K]$, the estimated mean rewards $\hat{\mu}_{i,j}(t)$ and $\hat{\gamma}_{j, i}(t)$, and number of samples $n_{i,j}(t)$ for our Algorithms (Algo~\ref{alg:centralized},Algo~\ref{alg:decentralizedArm}, and Algo~\ref{alg:decentralizedUser}) are given as
\begin{align*}
    &n_{i,j}(t) = n_{i,j}(t - 1) + \mathbbm{1}((i,j) \in M(t)) \\
    &\hat{\mu}_{i,j}(t) = \frac{n_{i,j}(t - 1) \hat{\mu}_{i,j}(t-1) + Y_{i}(t)} {n_{i,j}(t - 1) + 1} \text{ if } (i,j) \in M(t) \text{ else } \hat{\mu}_{i,j}(t-1) \notag  \\
    &\hat{\gamma}_{j, i}(t) = \frac{n_{i,j}(t - 1) \hat{\gamma}_{j, i}(t-1) + \tilde{Y}_{j}(t)} {n_{i,j}(t - 1) + 1} \text{ if } (i,j) \in M(t) \text{ else } \hat{\gamma}_{j,i}(t-1).
\end{align*}

We now state the Extended GS algorithm from Irving~\cite{irving1994stable}.
\begin{algorithm}[ht!]
  \caption{Extended Gale Shapley (EXTENDED-GS)}
  \label{alg:extended-GS}
  \textbf{Input:} User partial rankings $\mathcal{P}_{u}$, and arm partial rankings $\mathcal{P}_{a}$\\
  \textbf{Initialize:} $m_i = \emptyset$ for all $i \in [N]$\\
  \While{for all $i \in [N]$, $P_{u, i} \neq \emptyset$ $\wedge$ $\exists i \in [N], m_i = \emptyset$}{
    \For{each user $i \in [N]$}{
        Propose to all `source' arms $j \in [K]$ under partial rank $P_{u,j}$
    }
    \For{arm $j \in [K]$}{
        Gather the proposing users $Prop(j)$\\
        Get the `source' nodes $S(j)$  in the DAG $P_{a, j} \cap Prop(j)$\\
        \For{each user $k \underset{P_{a,j}}{<}s$, for some $s \in S(j)$}{
            Remove arm $j$ from $P_{u, k}$, and  user $k$ from $P_{a, j}$
        }
        \If{$|S(j)| = 1$}{
             $\triangleright$ A user $i$ can be accepted by multiple arms \\
            For $i^*_j \in S_j$, append $j$ to $m_{i^*_j}$ 
        }
        \Else{
            \For{each user $k$ in the `sink' of $P_{a, j}$ (a DAG)}{
                Remove arm $j$ from $P_{u, k}$, and  user $k$ from $P_{a, j}$
            }   
        }
        
    }
  }
  \If{for all $i \in [N]$, $m_i \neq \emptyset$}{
    Return $M_{stable} = \{(i, m_i): i \in [N]\}$
  }
  \Else{
    Return $\emptyset$
  }
\end{algorithm}

Next, we state the full decentralized algorithms here for the arms and the users. We colorize the communication among the users and the arms, the observations, and the update to global shared flags.

\begin{algorithm}[H]
\caption{Decentralized Two-Sided Matching Bandit, Arm $j$}
\label{alg:decentralizedArm}
\textbf{Global Flags:} $restart$\\
\textbf{Counter:} Phase counter $\tau_{\ell} \gets 1$\\
\textbf{Initial Ranking:} $P_{a, j}(0) \gets \text{empty $N$ node DAG}$\\
$\triangleright$ Let global time $t \gets 0$ (convention)\\
\For{$\tau_{\ell} \geq 1$}{
Update $PR_{a, j}(\tau_{\ell}) \gets P_{a, j}(t)$  \\
\While{$True$}{
    $\triangleright$ Increment global time $t \gets t + 1$ \\
    {\color{blue}Receive} proposals from users $\tilde{S}_{j}(t)$\\
    Get `source' nodes $I^*(j)$ from $\tilde{S}_{j}(t) \cap PR_{a, j}(\tau_{\ell})$\\
    \uIf{$|I^*(j)| = 1$}{
        {\color{blue}Accept} the unique user $i^*(j) \in I^*(j)$, by setting $S_{i^*(j),j}(t) = \mathrm{accept}$\\
        For all $i \in \tilde{S}_{j}(t)$ and $i \neq i^*(j)$, set $S_{i,j}(t) = \mathrm{reject}$\\
        For all $i <  i^*(j)$, delete $i$ in $PR_{a,j}(\tau_{\ell})$
    }
    \uElseIf{$|I^*(j)| > 1$}{
        For all $i \in \tilde{S}_{j}(t)$, set $S_{i,j}(t) = \mathrm{reject}$\\
        All `tail' user $i$ in $PR_{a,j}(\tau_{\ell})$ are deleted
    }
    $\triangleright$ Arm $j$ matches with user $m_j^{-1}(t)$\\ 
    $\triangleright $$\, m^{-1}_j(t) = \{i: (i + \tau_{ex})\mod K = j\}$ (possibly $\emptyset$) if $explore(\tau_{\ell}) = True$\\
    $\triangleright \, m^{-1}_j(t) = \emptyset$ or $i^{*}(j)$  if $explore(\tau_{\ell}) = False$ \\
    \If{$m^{-1}_j(t) \neq \emptyset$}{
         {\color{cyan}Observe} reward $\tilde{Y}_{j,m^{-1}_j(t)}(t)$ and update UCB and LCB using Eq.~\eqref{eq:arm-cb}\\
    }
    Compute $P_{a, j}(t)$ using Eq.~\eqref{eq:pr}.\\ 
    \vspace{0.5em}$\triangleright$ GLOBAL COMMUNICATION \\
    {\color{magenta}Read} shared flag $restart$ (once all users finish update)\\
    \If{$restart$}{
       \text{break} \quad$\triangleright$  Enters a new phase\\
    }
}
}
\end{algorithm}

\begin{algorithm}[H]
\caption{Decentralized Two-Sided Matching Bandit, User $i$}
\label{alg:decentralizedUser}
\textbf{Global Flags:} $restart \gets False$, $success \gets False$ \\
\textbf{Local Indices:} Exploration $\tau_{ex} \gets 0$, Phase count $\tau_{\ell} \gets 1$, \\
\hspace{6.3em} Phase exploration indicator $explore(1) \gets False$\\
\textbf{Initial Ranking:} $P_{u, i}(0) \gets  \text{empty $K$ node DAG}$\\
$\triangleright$ Let global time $t \gets 0$ (convention)\\
\For{$\tau_{\ell} \geq 1$}{
Update $PR_{u,i}(\tau_{\ell}) \gets P_{u,i}(t)$\\
{\color{magenta}Read} shared flag $success$ and set $explore(\tau_{\ell}) \gets \lnot success$\\
\If{$success$}{
    $M_i(\tau_{\ell}) \gets \arg\max_{j: S_{i,j}(t) = accept} \uucb_{i,j}(t)$
}
\While{$True$}{
     $\triangleright$ Globally: $restart \gets False$, $success \gets True$\\
     $\triangleright$ Increment global time $t \gets t + 1$ \\
     {\color{blue}Propose} for all arms $j$ in `source' of  $PR_{u, i}(\tau_{\ell})$, $\tilde{S}_j(t) \gets  \tilde{S}_j(t) \cup \{i\}$ \\
     {\color{blue}Receive} the signals from the proposed arms $S_{i, j}(t)$ \\
    Delete all $\{j: S_{i,j}(t) = \mathrm{reject}\}$ from $PR_{u, i}(\tau_{\ell})$ \\
    \If{$explore(\tau_{\ell})$}{
        {\color{blue} Match} with $m_i(t) \gets (i+\tau_{ex})\mod K$, and
        set $\tau_{ex} \gets  \tau_{ex} + 1$.      
    }
    \Else{
        {\color{blue} Match} with $m_i(t) \gets M_i(\tau_{\ell})$. \quad$\triangleright$ Match from previous phase
    }
    $\triangleright$ No collision occurs, and $m_i(t) \neq \emptyset$\\
    {\color{cyan}Observe} reward $Y_{i,m_i(t)}(t)$ and update UCB and LCB using Eq.~\eqref{eq:user-cb}. \\
    Compute $P_{u, i}(t)$ using Eq.~\eqref{eq:pr}.\\
    \vspace{0.5em}$\triangleright$ GLOBAL COMMUNICATION \\
    \If{$\forall$ proposed arm $j$,  $S_{i,j}(t) = \mathrm{reject}$}{
        {\color{magenta}Set} $success \gets success \land False$ \\
    }\Else{
        {\color{magenta}Set} $success \gets success \land True$
    }
    {\color{magenta}Read} shared flag $success$ (once all updates are done)\\
    \If{$PR_{u, i}(\tau_{\ell}) = \emptyset \lor success$ }{
        {\color{magenta}Set} $restart \gets restart \lor True$
        }\Else{
        {\color{magenta}Set} $restart \gets restart \lor False$
    } 
    {\color{magenta}Read} shared flag $restart$ (once all updates are done)\\
    \If{$restart$}{
        \text{break} $\triangleright$  Enters a new phase\\
    }
}
}
\end{algorithm}

\section{Proofs of Preliminary Section}
\begin{proof}[Proof of Lemma~\ref{lemm:structural-main}]
Given $(F_u, F_a) \in \mathrm{FullRank}(P_u, P_a)$, we need to show that $\mathrm{SuperStable}(P_{u}, P_{a}) \cap \mathrm{Stable}(F_u, F_a) \neq \emptyset$ in order to prove this lemma. It is sufficient to argue that $\mathrm{FullRank}(P_u, P_a) \subseteq \mathrm{FullRank}(P'_u, P'_a)$   for {\em any} $(P'_u, P'_a) \in \mathcal{A}(F_u, F_a)$, because 
\begin{align*}
 \mathrm{Stable}(F_u, F_a)  \cap \mathrm{SuperStable}(P_{u}, P_{a})&
 \overset{(i)}{=} \mathrm{Stable}(F_u, F_a) \cap  \underset{(F'_{u},F'_{a}) \in \mathrm{FullRank}(P_{u}, P_{a})}{\bigcap} \mathrm{Stable}(F'_{u}, F'_{a}) \\
 &\overset{(ii)}{\supseteq} \mathrm{Stable}(F_u, F_a) \cap    \underset{(F'_{u},F'_{a}) \in \mathrm{FullRank}(P'_{u}, P'_{a})}{\bigcap} \mathrm{Stable}(F'_{u}, F'_{a}) \\
 &\overset{(iii)}{=} \mathrm{Stable}(F_u, F_a) \cap \mathrm{SuperStable}(P'_{u}, P'_{a})
 \overset{(iv)}{\neq} \emptyset 
\end{align*}
The equality $(i)$ and $(iii)$ is by the definition of super-stable set. Inequality $(ii)$ follows from the assumption $\mathrm{FullRank}(P_u, P_a) \subseteq \mathrm{FullRank}(P'_u, P'_a)$, and inequality $(iv)$ holds because we chose $(P'_u, P'_a)  \in \mathcal{A}(F_u, F_a)$.

What is left to complete the proof is to show that the assumption $\mathrm{FullRank}(P_u, P_a) \subseteq \mathrm{FullRank}(P'_u, P'_a)$ is true for some $(P'_u, P'_a) \in \mathcal{A}(F_u, F_a)$. Let us choose a partial rank $(P'_u, P'_a) \in \mathcal{A}(F_u, F_a)$ with a minimum gap $\Delta_{\mathcal{A}}(\boldsymbol{\mu}, \boldsymbol{\gamma})$ which exists due to the finiteness of the set $\mathcal{A}(F_u, F_a)$. For any pair of users $i, i' \in [N]$ and arms $j, j' \in [K]$, we have by definition of overlap width, and because 
$\mathcal{W}_{\mathrm{ov}}(P_u, P_a; \boldsymbol{\mu}, \boldsymbol{\gamma}) < \Delta_{\mathcal{A}}(\boldsymbol{\mu}, \boldsymbol{\gamma})$
\begin{align*}
    &|\mu_{i,j} - \mu_{i, j'}| \geq  \Delta_{\mathcal{A}}(\boldsymbol{\mu}, \boldsymbol{\gamma}) \implies j \underset{P_{u,i}}{\neq} j' \text{ and }
    |\gamma_{j,i} - \gamma_{j,i'}| \geq  \Delta_{\mathcal{A}}(\boldsymbol{\mu}, \boldsymbol{\gamma}) \implies i \underset{P_{a,j}}{\neq} i'.
\end{align*}
However, by definition of $\Delta_{\mathcal{A}}(\boldsymbol{\mu}, \boldsymbol{\gamma})$ and the fact that $(P'_u, P'_a)$ attains the maximum gap we know
\begin{align*}
    &j \underset{P'_{u,i}}{\neq} j' \implies |\mu_{i,j} - \mu_{i, j'}| \geq  \Delta_{\mathcal{A}}(\boldsymbol{\mu}, \boldsymbol{\gamma}) \implies j \underset{P_{u,i}}{\neq} j' \\
     &i \underset{P_{a,j}}{\neq} i' \implies  |\gamma_{j,i} - \gamma_{j,i'}| \geq  \Delta_{\mathcal{A}}(\boldsymbol{\mu}, \boldsymbol{\gamma}) \implies i \underset{P_{a,j}}{\neq} i'.
\end{align*}
In other words, any pair of users (or arms) that are unequal in the partial rank $(P'_u, P'_a)$ are also unequal in the partial rank $(P_u, P_a)$. This in turn implies that $\mathrm{FullRank}(P_u, P_a) \subseteq \mathrm{FullRank}(P'_u, P'_a)$. This completes our proof.
\end{proof}

\section{Proofs of Regret Upper Bounds}

We define the good event $\mathcal{G}_t$ for all time $t\geq 1$ as
$$
\forall i \in [N], j\in [K], \max\Big(|\mu_{i,j} - \hat{\mu}_{i,j}(t)|, |\gamma_{j,i} - \hat{\gamma}_{j,i}(t)| \Big) \leq \sqrt{\frac{6\log(t)}{n_{i,j}(t)}}.
$$
\begin{lemma}\label{lemm:bad-event-prob}
 The probability of `bad' event $\mathcal{G}^c_t$, for any $t \geq 1$, satisfies 
$
\mathbb{P}(\mathcal{G}^c_t) \leq 2 NK / t^2. 
$
\end{lemma}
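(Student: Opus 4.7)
The plan is to prove this standard concentration lemma by a union bound over agents, over the two sides (user and arm), and over the random sample counts. First, I would fix a pair $(i,j) \in [N] \times [K]$ and reason about the probability that either the user-side estimate $\hat\mu_{i,j}(t)$ or the arm-side estimate $\hat\gamma_{j,i}(t)$ deviates from its true mean by more than the confidence radius $\sqrt{6\log(t)/n_{i,j}(t)}$. The key subtlety here is that $n_{i,j}(t)$ is a random stopping-time-like quantity determined by the algorithm's history, so a standard concentration inequality for a fixed sample count cannot be applied directly.

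To get around this I would apply the classical peeling union bound over the possible realized values $n \in \{1, \ldots, t\}$ of $n_{i,j}(t)$ (the case $n_{i,j}(t)=0$ is vacuous because the confidence radius is infinite by convention). For each fixed $n$, the centered empirical mean $\hat\mu_{i,j,n} - \mu_{i,j}$ is the average of $n$ independent $1$-subgaussian noises $\eta_{i,j}(\cdot)$, so the Hoeffding-type tail bound gives
\begin{equation*}
\mathbb{P}\!\left(|\hat\mu_{i,j,n} - \mu_{i,j}| > \sqrt{6\log(t)/n}\right) \leq 2\exp\!\left(-\tfrac{n}{2}\cdot\tfrac{6\log t}{n}\right) = 2/t^{3}.
\end{equation*}
Summing this over $n \in \{1,\ldots,t\}$ and absorbing the random index via the union bound yields $\mathbb{P}(|\hat\mu_{i,j}(t) - \mu_{i,j}| > \sqrt{6\log(t)/n_{i,j}(t)}) \leq 2/t^{2}$. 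The identical argument, using the independent $1$-subgaussian noises $\eta'_{j,i}(\cdot)$, handles the arm-side estimate $\hat\gamma_{j,i}(t)$.

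Finally, I would union-bound the per-pair bad event over all $NK$ pairs $(i,j) \in [N]\times[K]$ and over the two sides. The main obstacle is really just the peeling step in the previous paragraph; the constant $6$ inside the confidence radius is specifically tuned so that the $2/t^{3}$ per-slice probability survives multiplication by the length-$t$ peeling grid, leaving a $1/t^{2}$ tail that is summable in $t$ (a property that will be used when integrating $\sum_t \mathbb{P}(\mathcal{G}_t^c)$ into the regret bound of Theorem~\ref{thm:centralized_main}). The rest of the argument is then routine bookkeeping of constants.
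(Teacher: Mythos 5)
Your proposal is correct and follows essentially the same route as the paper's proof: a union bound over the $NK$ pairs and the two sides, combined with peeling over the possible realized values of $n_{i,j}(t)$ and the $1$-subgaussian tail bound, with the constant $6$ in the radius tuned so that each slice contributes $O(t^{-3})$ and the length-$t$ peeling grid leaves a summable $O(t^{-2})$ tail. The only wrinkle --- shared with the paper's own write-up --- is the bookkeeping of the two-sided factor of $2$: tracked exactly, both arguments yield $4NK/t^2$ rather than the stated $2NK/t^2$, a harmless constant that leaves the downstream $\sum_t \mathbb{P}(\mathcal{G}_t^c) = O(NK)$ contribution in Theorem~\ref{thm:centralized_main} unaffected.
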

\begin{proof}
We start our bound as follows,
\begin{align*}
  &\mathbb{P}(\mathcal{G}^c_t) \\
  &= \mathbb{P}\Big(\max_{i\in [N],\, j\in [K]} \max\Big(|\mu_{i,j} - \hat{\mu}_{i,j}(t)|, |\gamma_{j,i} - \hat{\gamma}_{j,i}(t)| \Big) > \sqrt{\frac{6\log(t)}{n_{i,j}(t)}}\Big) \\
  &\leq \sum_{i\in [N], j\in [K]} \mathbb{P}\Big(|\mu_{i,j} - \hat{\mu}_{i,j}(t)| > \sqrt{\frac{6\log(T)}{n_{i,j}(t)}} \Big) + \sum_{i\in [N], j\in [K]} \mathbb{P}\Big( |\gamma_{j,i} - \hat{\gamma}_{j,i}(t)| > \sqrt{\frac{6\log(t)}{n_{i,j}(t)}} \Big)\\
  &\leq \sum_{i\in [N], j\in [K]} \sum_{s=1}^{t}\Big(\mathbb{P}\Big(n_{i,j}(t) = s, |\mu_{i,j} - \hat{\mu}_{i,j}(t)| > \sqrt{\frac{6\log(t)}{s}}\Big) + \mathbb{P}\Big( |\gamma_{j,i} - \hat{\gamma}_{j,i}(t)| > \sqrt{\frac{6\log(t)}{s}} \Big)\Big)\\
&\overset{(i)}{\leq} 2 NK\sum_{s=1}^{t}\mathbb{P}\Big(|\hat{\mu}^{(s)}_{X}| > \sqrt{\frac{6\log(t)}{s}} \quad \Big\vert n_{i,j}(t) = s\Big)\\
&\leq 2 NK \sum_{s=1}^{t}\exp(- 3\log(t))
\leq 2 NK / t^{2}.  
\end{align*}
All the rewards come from some $1$-sub-gaussian distribution. Therefore, we can bound inequality $(i)$ using the sub-gaussian concentration bound with $m$ samples for a $1$-subgaussian zero mean distribution $X$    (c.f. Lattimore et al~\cite{})
$
\mathbb{P}(|\hat{\mu}^{(s)}_{X}| > \epsilon) \leq 2\exp(- s\epsilon^2 / 2).
$
\end{proof}

\begin{lemma}\label{lemm:contains-true}
    Conditioned on the good event $\mathcal{G}(t)$, the recovered set of partial matching $(\mathcal{P}_{u}(t), \mathcal{P}_{a}(t))$ satisfies that  $(F^*_u, F^*_a) \in FullRank((\mathcal{P}_{u}(t), \mathcal{P}_{a}(t)))$.  
\end{lemma}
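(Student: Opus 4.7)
The plan is to unwind the definitions: ``compatibility'' of $(F^*_u, F^*_a)$ with $(\mathcal{P}_u(t), \mathcal{P}_a(t))$ means that for any user $i$ and arms $j, j'$, if $j >_{F^*_{u,i}} j'$ then $j' \not>_{\mathcal{P}_{u,i}(t)} j$, and analogously for arms. Since the partial rank's relation $>_{\mathcal{P}_{u,i}(t)}$ is defined as reachability in the DAG whose direct edges come from the UCB--LCB comparisons in Eq.~\eqref{eq:pr}, it suffices to prove a pairwise claim on the generating edges and then extend by transitivity.

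First I would show the \emph{edge-level soundness}: under $\mathcal{G}(t)$, whenever the algorithm inserts the direct edge corresponding to $j' >_{\mathcal{P}_{u,i}(t)} j$, we must have $\mu_{i,j'} > \mu_{i,j}$. Indeed, the edge is inserted only when $\uucb_{i,j}(t) < \ulcb_{i,j'}(t)$. By the definition of $\mathcal{G}(t)$ and the confidence bonus in Eq.~\eqref{eq:user-cb}, conditioned on $\mathcal{G}(t)$ we have $\mu_{i,j} \leq \uucb_{i,j}(t)$ and $\ulcb_{i,j'}(t) \leq \mu_{i,j'}$. Chaining these two inequalities with $\uucb_{i,j}(t) < \ulcb_{i,j'}(t)$ yields $\mu_{i,j} < \mu_{i,j'}$, hence $j' >_{F^*_{u,i}} j$ (the instance $(F^*_u, F^*_a)$ has no ties by hypothesis). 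The identical argument with $\aucb$ and $\alcb$ gives the analogous conclusion on the arm side.

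Next I would lift this to the transitive closure. If $j' >_{\mathcal{P}_{u,i}(t)} j$, then by Definition~2 there is a directed path $j' = j_0 \to j_1 \to \cdots \to j_\ell = j$ in the DAG of direct edges; each hop gives $\mu_{i,j_{k-1}} > \mu_{i,j_k}$ by the edge-level claim, so $\mu_{i,j'} > \mu_{i,j}$, i.e.\ $j' >_{F^*_{u,i}} j$. Equivalently, $j >_{F^*_{u,i}} j'$ rules out $j' >_{\mathcal{P}_{u,i}(t)} j$, which is precisely the compatibility condition of Definition~3 for the user side. The same argument handles the arm side. Combining both sides gives $(F^*_u, F^*_a) \in \mathrm{FullRank}(\mathcal{P}_u(t), \mathcal{P}_a(t))$.

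There is no serious obstacle here; the proof is essentially a one-line appeal to the confidence-interval property, padded by the transitivity bookkeeping needed because the partial rank is defined by DAG reachability rather than by a raw comparison relation. The only point that deserves care is invoking the ``no ties'' assumption on $(F^*_u, F^*_a)$ so that a strict mean inequality directly translates into a strict preference in the full rank.
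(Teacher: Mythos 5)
Your proof is correct and follows essentially the same route as the paper's: both rest on the observation that under $\mathcal{G}(t)$ the confidence intervals contain the true means, so any UCB--LCB separation certifies the corresponding strict mean inequality, which yields compatibility. Your explicit handling of the transitive closure of the DAG edges is a minor bookkeeping refinement that the paper's proof leaves implicit.
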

\begin{proof}
    We know that given the good event $\mathcal{G}(t)$, for any two users $i, i'\in [N]$ and any two arm $j \in [K]$ we have 
    \begin{align*}
        & \uucb_{i, j}(t) \geq \mu_{i,j} +  \sqrt{\tfrac{6\log(T)}{n_{i,j}(t)}}  - |\hat{\mu}_{i,j} - \mu_{i,j}| \geq \mu_{i,j}, \\
        & \ulcb_{i, j}(t) \leq \mu_{i,j} -  \sqrt{\tfrac{6\log(T)}{n_{i,j}(t)}}  + |\hat{\mu}_{i,j} - \mu_{i,j}| \leq \mu_{i,j}.
    \end{align*}
    Therefore, $\uucb_{i, j'}(t) < \ulcb_{i, j}(t)$ implies that $\mu_{i,j} > \mu_{i, j'}$ under the good event $\mathcal{G}(t)$. Similarly,  $\aucb_{i',j}(t) < \alcb_{i, j}(t)$ implies $\gamma_{i,j} > \gamma_{i', j}$ under the good event $\mathcal{G}(t)$.

    Therefore, for any fixed user $i \in [N]$ any pair of arms $j, j' \in [K]$ if we have $\mu_{i,j} > \mu_{i, j'}$, i.e. $j \underset{F^*_{u,i}}{>} j'$,  then $j \underset{\mathcal{P}_{u, i}(t)}{\geq} j'$. Similarly, for any fixed arm $j \in [K]$ any pair of arms $i, i' \in [K]$, $i \underset{F^*_{a,j}}{>} i'$ implies $i \underset{\mathcal{P}_{u, j}(t)}{\geq} i'$. This implies by definition that $(F^*_u, F^*_a) \in FullRank((\mathcal{P}_{u}(t), \mathcal{P}_{a}(t)))$.
\end{proof}

\begin{lemma}[Convergence to Admissible Set]\label{lemm:admissible}
Conditioned on the good event $\cap_{t \geq 1}\mathcal{G}(t)$, for any time $t\geq 1$ if number of explorations $N_{explore}(t) > K + \frac{96K\log(T)}{\Delta^2_{\mathcal{A}}(\boldsymbol{\mu}, \boldsymbol{\gamma})}$ we have $(\mathcal{P}_{u}(t), \mathcal{P}_{a}(t)) \in \mathcal{A}(F^*_{u}, F^*_{a})$, i.e. the recovered partial rank lies in the set of admissible partial rank instances of the true rankings.
\end{lemma}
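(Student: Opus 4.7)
The plan is to combine the structural result of Lemma~\ref{lemm:structural-main} with a per-pair pull-count lower bound derived from the round-robin exploration. Lemma~\ref{lemm:structural-main} tells us that to conclude $(\mathcal{P}_{u}(t), \mathcal{P}_{a}(t)) \in \mathcal{A}(F^*_u, F^*_a)$ it suffices to verify two facts: (i) $(F^*_u, F^*_a) \in \mathrm{FullRank}(\mathcal{P}_u(t), \mathcal{P}_a(t))$, which is exactly the content of Lemma~\ref{lemm:contains-true} under the good event, and (ii) $\mathcal{W}_{\mathrm{ov}}(\mathcal{P}_u(t), \mathcal{P}_a(t); \boldsymbol{\mu}, \boldsymbol{\gamma}) < \Delta_{\mathcal{A}}(\boldsymbol{\mu}, \boldsymbol{\gamma})$. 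Hence the work of the lemma boils down to establishing (ii).

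To establish (ii), I would first translate the hypothesis on $N_{explore}(t)$ into a pull-count lower bound on every user-arm pair. The exploration schedule sets $m_i(t) = (i+\tau_{ex})\bmod K$ and increments $\tau_{ex}$ by one each exploration round, so over any $K$ consecutive exploration rounds $\tau_{ex}$ traverses a full residue system modulo $K$, and each pair $(i,j)\in [N]\times[K]$ is matched exactly once. Consequently $n_{i,j}(t) \geq \lfloor N_{explore}(t)/K \rfloor$, and combined with $N_{explore}(t) > K + 96K\log(T)/\Delta^2_{\mathcal{A}}$, this yields $n_{i,j}(t) > 96\log(T)/\Delta^2_{\mathcal{A}}$ for every $(i,j)$.

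Next I would use the good event $\mathcal{G}(t)$ to bound the confidence interval widths. The bonus satisfies $\sqrt{6\log(t)/n_{i,j}(t)} \leq \sqrt{6\log(T)/n_{i,j}(t)} \leq \Delta_{\mathcal{A}}/4$. Under $\mathcal{G}(t)$, for any user $i$ and arms $j,j'$ with $\mu_{i,j'}-\mu_{i,j}\geq \Delta_{\mathcal{A}}$ we obtain
\begin{align*}
\ulcb_{i,j'}(t) - \uucb_{i,j}(t) \;\geq\; (\mu_{i,j'}-\mu_{i,j}) - 2\sqrt{\tfrac{6\log(t)}{n_{i,j'}(t)}} - 2\sqrt{\tfrac{6\log(t)}{n_{i,j}(t)}} \;>\; 0,
\end{align*}
so the rule in Eq.~\eqref{eq:pr} places $j' > j$ in $P_{u,i}(t)$, i.e.\ the two arms are strictly ordered. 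By contrapositive, whenever $j = j'$ in $P_{u,i}(t)$ we must have $|\mu_{i,j}-\mu_{i,j'}| < \Delta_{\mathcal{A}}$. The identical argument with $\aucb$ and $\alcb$ applies to tied users under any $P_{a,j}(t)$.

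Taking the maximum of these gaps over all pairs tied in $(\mathcal{P}_u(t), \mathcal{P}_a(t))$ yields $\mathcal{W}_{\mathrm{ov}}(\mathcal{P}_u(t), \mathcal{P}_a(t); \boldsymbol{\mu}, \boldsymbol{\gamma}) < \Delta_{\mathcal{A}}$, verifying condition (ii), and Lemma~\ref{lemm:structural-main} then delivers $(\mathcal{P}_u(t), \mathcal{P}_a(t)) \in \mathcal{A}(F^*_u, F^*_a)$. I do not foresee a serious obstacle; the only delicate bookkeeping is confirming that the round-robin schedule, which increments $\tau_{ex}$ only in exploration rounds, does give a per-cycle guarantee of one match per pair and hence the floor bound $\lfloor N_{explore}(t)/K\rfloor$. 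All remaining steps are standard subgaussian/UCB algebra and a direct appeal to the structural lemma.
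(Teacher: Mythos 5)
Your proposal is correct and follows essentially the same route as the paper's proof: translate $N_{explore}(t)$ into a per-pair sample count via the round-robin schedule (the paper uses $n_{i,j}(t)\geq (N_{explore}(t)-K)/K$, equivalent to your floor bound), show under the good event that any pair with true gap at least $\Delta_{\mathcal{A}}$ is strictly ordered so that $\mathcal{W}_{\mathrm{ov}}<\Delta_{\mathcal{A}}$, and conclude via Lemma~\ref{lemm:contains-true} and Lemma~\ref{lemm:structural-main}. The constants and the contrapositive step match the paper's argument.
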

\begin{proof}[Proof of Lemma~\ref{lemm:admissible}]
   Let $N_{explore}(t)$ be the number of exploration steps taken by the centralized platform up to time $t$. That implies that for each user and arm pair $(i, j)$ for $i \in [N]$ and $j\in [K]$, the number of samples $n_{i,j}(t) \geq (N_{explore}(t) - K) / K$.  With $n_{i,j}(t)$ many samples, under the good event $\mathcal{G}(t)$ by definition
   \begin{align*}
       \max\Big(|\mu_{i,j} - \hat{\mu}_{i,j}(t)|, |\gamma_{j,i} - \hat{\gamma}_{j,i}(t)| \Big) \leq \sqrt{\frac{6\log(t)}{n_{i,j}(t)}}.
   \end{align*}
    Therefore, for any fixed user $i \in [N]$ and any pair of $j, j' \in [K]$, if 
    $$
    (\mu_{i,j} - \mu_{i,j'}) > 4\sqrt{\frac{6K\log(T)}{N_{explore}(t) - K}} \geq  2\sqrt{6\log(T)}\big(\tfrac{1}{\sqrt{n_{i,j}(t)}} + \tfrac{1}{\sqrt{n_{i,j'}(t)}}\big),
    $$ then we have $\ulcb_{i,j}(t) > \uucb_{i,j'}(t)$ and $j \underset{\mathcal{P}_{u, i}(t)}{>} j'$. Similarly, if $(\gamma_{j,i} - \gamma_{j, i'}) > 4\sqrt{\tfrac{6K\log(T)}{N_{explore}(t) - K}}$ we have $i \underset{\mathcal{P}_{a, j}(t)}{>} i'$. Therefore, by definition $\mathcal{W}_{\mathrm{ov}}(\mathcal{P}_u(t), \mathcal{P}_a(t); \boldsymbol{\mu}, \boldsymbol{\gamma}) \leq  4\sqrt{\tfrac{6K\log(T)}{N_{explore}(t) - K}}$.
    
    From Lemma~\ref{lemm:contains-true} $((F^*_{u}, F^*_{a}) \in \mathrm{FullRank}(\mathcal{P}_{u}(t), \mathcal{P}_{a}(t))$ under the good event $\mathcal{G}_t$. Therefore, under good event $\cap_t \mathcal{G}_t$, from Lemma~\ref{lemm:structural-main} if 
    $\mathcal{W}_{\mathrm{ov}}(\mathcal{P}_u(t), \mathcal{P}_a(t); \boldsymbol{\mu}, \boldsymbol{\gamma}) < \Delta_{\mathcal{A}}(\boldsymbol{\mu}, \boldsymbol{\gamma})$ then $(\mathcal{P}_{u}(t), \mathcal{P}_{a}(t)) \in \mathcal{A}(F^*_{u}, F^*_{a})$. Therefore, for all $t$ such that  
    $$
    N_{explore}(t) \geq K + 96 K \frac{\log(T)}{\Delta^2_{\mathcal{A}}(\boldsymbol{\mu}, \boldsymbol{\gamma})} 
    $$  
    we have $(\mathcal{P}_{u}(t), \mathcal{P}_{a}(t)) \in \mathcal{A}(F^*_{u}, F^*_{a})$.
\end{proof}

\begin{lemma}\label{lemm:explore}
    Conditioned on the good event $\cap_{t \geq 1}\mathcal{G}(t)$, for some $t\geq 1$ if and only if $(\mathcal{P}_{u}(t), \mathcal{P}_{a}(t)) \notin \mathcal{A}(F^*_{u}, F^*_{a})$ then exploration is triggered in round $t$.
\end{lemma}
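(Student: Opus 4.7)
The plan is to reduce the claim to two purely definitional equivalences and one invocation of Lemma~\ref{lemm:contains-true}. First, by inspection of Algorithm~\ref{alg:centralized}, exploration is triggered in round $t$ exactly when the call $\text{EXTENDED-GS}(\mathcal{P}_{u}(t), \mathcal{P}_{a}(t))$ returns $M_{\rm stable} = \emptyset$. Second, by the correctness guarantee of Irving's Extended-GS (Algorithm~\ref{alg:extended-GS} from \cite{irving1994stable}), this output is $\emptyset$ if and only if $\mathrm{SuperStable}(\mathcal{P}_{u}(t), \mathcal{P}_{a}(t)) = \emptyset$. So the lemma boils down to the equivalence, under the good event, between $\mathrm{SuperStable}(\mathcal{P}_{u}(t), \mathcal{P}_{a}(t)) = \emptyset$ and $(\mathcal{P}_{u}(t), \mathcal{P}_{a}(t)) \notin \mathcal{A}(F^*_{u}, F^*_{a})$.

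For one direction, note that by the very definition of the admissible set, $(\mathcal{P}_{u}(t), \mathcal{P}_{a}(t)) \in \mathcal{A}(F^*_{u}, F^*_{a})$ requires $\mathrm{SuperStable}(\mathcal{P}_{u}(t), \mathcal{P}_{a}(t)) \neq \emptyset$; contrapositively, an empty super-stable set forces $(\mathcal{P}_{u}(t), \mathcal{P}_{a}(t)) \notin \mathcal{A}(F^*_{u}, F^*_{a})$ with no appeal to the good event. For the reverse direction we do need the good event: under $\cap_{s \geq 1}\mathcal{G}(s)$, Lemma~\ref{lemm:contains-true} supplies the other half of the admissibility definition, namely $(F^*_{u}, F^*_{a}) \in \mathrm{FullRank}(\mathcal{P}_{u}(t), \mathcal{P}_{a}(t))$. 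Combining this with a non-empty super-stable set exactly verifies the two defining conditions of $(\mathcal{P}_{u}(t), \mathcal{P}_{a}(t)) \in \mathcal{A}(F^*_{u}, F^*_{a})$.

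Stringing these together gives the chain
$
\text{exploration in round } t \iff M_{\rm stable} = \emptyset \iff \mathrm{SuperStable}(\mathcal{P}_{u}(t), \mathcal{P}_{a}(t)) = \emptyset \iff (\mathcal{P}_{u}(t), \mathcal{P}_{a}(t)) \notin \mathcal{A}(F^*_{u}, F^*_{a}),
$
which is the desired biconditional. There is no real obstacle here; the entire argument is a careful unpacking of Algorithm~\ref{alg:centralized}, the definition of admissibility, and the correctness of Extended-GS, with the good event entering through Lemma~\ref{lemm:contains-true} in only one direction. The only point to handle with care is to state clearly that the non-trivial direction is the ``non-empty super-stable set implies admissible'' step, since the forward direction is purely definitional.
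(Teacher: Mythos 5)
Your proof is correct and follows essentially the same route as the paper's: reduce the claim to the correctness of Extended-GS, the definition of admissibility, and Lemma~\ref{lemm:contains-true} for the full-rank-containment half of admissibility under the good event. If anything, your writeup is slightly more complete, since the paper explicitly argues only the direction requiring the good event (via the contrapositive) and leaves the purely definitional converse implicit.
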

\begin{proof}
    For round $t\geq 1$, if Algorithm~\ref{alg:extended-GS} does not return a stable match then exploration is triggered in  Algorithm~\ref{alg:centralized}. Therefore, we need to show that under $\cap_{t \geq 1}\mathcal{G}(t)$, if  $(\mathcal{P}_{u}(t), \mathcal{P}_{a}(t)) \notin \mathcal{A}(F^*_{u}, F^*_{a})$ then Algorithm~\ref{alg:extended-GS} does not return a stable match. Note that from Lemma~\ref{lemm:contains-true} we have $(F^*_{u}, F^*_{a}) \in \mathrm{FullRank}(\mathcal{P}_{u}(t), \mathcal{P}_{a}(t))$ under the good event $\mathcal{G}_t$. If there is a stable match returned by Algorithm~\ref{alg:extended-GS}  in round $t$, then $\mathrm{SuperStable}(\mathcal{P}_{u}(t), \mathcal{P}_{a}(t)) \neq \emptyset$. Therefore, by definition $(\mathcal{P}_{u}(t), \mathcal{P}_{a}(t)) \in \mathcal{A}(F^*_{u}, F^*_{a})$. This proves the lemma.
\end{proof}

\begin{proof}[\textbf{Proof of Centralized Upper Bound Theorem}~\ref{thm:centralized_main}]
We now create the regret decomposition. We first define the following for notational simplicity
\begin{align*}
Th(T) = K + 96 K \frac{\log(T)}{\Delta^2_{\mathcal{A}}(\boldsymbol{\mu}, \boldsymbol{\gamma})}, 
\end{align*}
where $Th(T)$ corresponds to the sufficient amount of exploration necessary to find a super-stable matching for any $t \leq T$ (we will show this shortly).
\begin{align*}
   &\mathbb{E}[R_{u,i}(T)] (\mu_{i,pess} - \mu_{i, \min})^{-1} \\
   &\overset{(i)}{\leq} \mathbb{E}\Big[\sum_{t=1}^{T}  \mathbbm{1}\big( M(t) \notin \mathrm{Stable}(F^*_u, F^*_a)\big)\Big] \\
   &\overset{(ii)}{=} \mathbb{E}\Big[\sum_{t=1}^{T}  \mathbbm{1}\big((\mathcal{P}_{u}(t), \mathcal{P}_{a}(t)) \notin \mathcal{A}(F^*_u, F^*_a)\big) \Big] \\
   &\overset{(iii)}{\leq} \mathbb{E}\Big[\sum_{t=1}^{T}  \mathbbm{1}\big((\mathcal{P}_{u}(t), \mathcal{P}_{a}(t)) \notin \mathcal{A}(F^*_u, F^*_a)\big) \big\vert \cap_t\mathcal{G}_t \Big] + \sum_{t=1}^{T}\mathbb{P}(\mathcal{G}^c_t)\\
   &\overset{(iv)}{\leq} \mathbb{E}\Big[\sum_{t=1}^{T}  \mathbbm{1}\big((\mathcal{P}_{u}(t), \mathcal{P}_{a}(t)) \notin \mathcal{A}(F^*_u, F^*_a)\big) \mathbbm{1} \big(N_{explore}(T) \leq Th(T)\big) \big\vert \cap_t\mathcal{G}_t  \Big] \\
   & \quad\quad + \mathbb{E}\Big[\sum_{t=1}^{T}  \mathbbm{1}\big((\mathcal{P}_{u}(t), \mathcal{P}_{a}(t)) \notin \mathcal{A}(F^*_u, F^*_a)\big) \mathbbm{1} \big(N_{explore}(T) > Th(T)\big) \big\vert \cap_t\mathcal{G}_t  \Big] 
   + \sum_{t=1}^{T} 2 NK / t^2\\
   &\overset{(v)}{\leq} Th(T) + T\mathbb{P}\Big[ N_{explore}(T) > Th(T) \big\vert \cap_t\mathcal{G}_t \Big]  + N K \pi^2 /3\\
   &\overset{(vi)}{\leq} Th(T) + T\mathbb{P}\Big[ \exists 1 \leq s \leq T :  N_{explore}(s) = Th(T) \wedge  \mathbbm{1}\big((\mathcal{P}_{u}(s), \mathcal{P}_{a}(s)) \notin \mathcal{A}(F^*_u, F^*_a)\big) \big\vert \cap_t\mathcal{G}_t \Big]  +  N K \pi^2 /3\\
   &\overset{(vii)}{\leq} Th(T) + N K \pi^2 /3
\end{align*}
The first inequality $(i)$ simply states that we incur regret $(\mu_{i, pess} - \mu_{i, \min})$ regret if Algorithm~\ref{alg:centralized} does not create a stable matching $M(t)$ at round $t$. Inequality $(ii)$, follows as  $(\mathcal{P}_{u}(t), \mathcal{P}_{a}(t)) \in \mathcal{A}(F^*_u, F^*_a)$ implies $M(t)$ is a stable matching under true preference by definition of $\mathcal{A}(F^*_u, F^*_a)$. In equality $(iii)$ we condition under good event $\cap_t\mathcal{G}_t$, and it's complement $\cup_t\mathcal{G}^c_t$. We also use a union bound over $t$, to upper bound the latter quantity by $(\mu_{i, pess} - \mu_{i, \min}) \sum_{t=1}^{T} \mathbb{P}(\mathcal{G}^c_t)$. In equality $(iv)$ we further break the first term by considering two cases: (a) when $(N_{explore}(T) > Th(T))$ and  (b) it's complement. Inequality $(v)$  first bounds the case where $N_{explore}(T) \leq Th(T)$. From Lemma~\ref{lemm:explore} it follows that under good event $\cap_t\mathcal{G}_t$, $N_{explore}(T) = \sum_{t=1}^{T}  \mathbbm{1}\big((\mathcal{P}_{u}(t), \mathcal{P}_{a}(t)) \notin \mathcal{A}(F^*_u, F^*_a)\big)$. Hence,  under $\cap_t\mathcal{G}_t$, $N_{explore}(T) \leq Th(T)$ gives a regret bound of $(\mu_{i, pess} - \mu_{i, \min})Th(T)$. Inequality $(v)$ next bounds the second term with a trivial $T$ bound on $\sum_{t=1}^{T}  \mathbbm{1}\big((\mathcal{P}_{u}(t), \mathcal{P}_{a}(t)) \notin \mathcal{A}(F^*_u, F^*_a)\big)$. In $(vi)$, we notice that in order $N_{explore}(T) > Th(T)$ to hold there must be a time $s$ when $N_{explore}(T) = Th(T)$ and  $(\mathcal{P}_{u}(s), \mathcal{P}_{a}(s)) \notin \mathcal{A}(F^*_u, F^*_a)\big)$. Finally, due to Lemma~\ref{lemm:admissible} we know that this event is not possible which gives our final bound.

A similar chain of inequalities proves the regret bound for  $\mathbb{E}[R_{a,j}(T)]$.
\end{proof}

\begin{proof}[\textbf{Proof of Decentralized Upper Bound Theorem}~\ref{thm:decentralized_main}]
We now create the regret decomposition. We first define the following for notational simplicity
\begin{align*}
Th'(T) = 1 +  N^2 + Th(t) = 1 + N^2 + K + 96 K \frac{\log(T)}{\Delta^2_{\mathcal{A}}(\boldsymbol{\mu}, \boldsymbol{\gamma})}, 
\end{align*}
where $Th'(T)$, similar to the centralized case, corresponds to the sufficient amount of exploration necessary to find a super-stable matching for any $t \leq T$ (we will show this shortly). Note that as our exploration may cross the centralized threshold at the beginning of a phase, it can occur for $N^2$ extra rounds.

We note that the $restart$ flag ensures synchronization of each phase across the agents and arms. Each phase of the decentralized system follows one round of centralized system.  Hence, the following results hold due to a similar line of reasoning as in the centralized system.
\begin{enumerate}[leftmargin=*]
\item If and only if $(PR_{u}(\tau_{\ell}), PR_{a}(\tau_{\ell})) \in \mathcal{A}(F^*_u, F^*_a)$ then the phase $\tau_{\ell}$ ends in `success' and the recovered mapping used in phase $(\tau_{\ell} + 1)$, $M(\tau_{\ell} + 1) \in \mathrm{Stable}(F_u^*, F_a^*)$. This is true by definition of $\mathcal{A}(F^*_u, F^*_a)$.  
\end{enumerate}

We now present the on the regret decomposition of the user $i$ below.
\begin{align*}
   &\mathbb{E}[R_{u,i}(T)] (\mu_{i,pess} - \mu_{i, \min})^{-1} \\
   &\overset{(i)}{\leq} \mathbb{E}\Big[\sum_{t=1}^{T}  \mathbbm{1}\big( M(t) \notin \mathrm{Stable}(F^*_u, F^*_a)\big)\Big] \\
   &\overset{(ii)}{=} \mathbb{E}\Big[\sum_{\tau_{\ell}\geq 1} \sum_{t \in Phase(\tau_{\ell})}  \mathbbm{1}\big(explore(\tau_{\ell})\big) \Big] \\
   &\overset{(iii)}{\leq} \mathbb{E}\Big[\sum_{\tau_{\ell}\geq 1} n(\tau_{\ell})  \mathbbm{1}\big(explore(\tau_{\ell})\big) \big\vert \cap_t\mathcal{G}_t \Big] + \sum_{t=1}^{T}\mathbb{P}(\mathcal{G}^c_t)\\
   &\overset{(iv)}{\leq} \mathbb{E}\Big[\sum_{\tau_{\ell}\geq 1} n(\tau_{\ell}) \mathbbm{1}\big(explore(\tau_{\ell})\big) \mathbbm{1} \big(N_{explore}(T) \leq Th'(T)\big)\big\vert \cap_t\mathcal{G}_t  \Big] \\
   & \quad\quad + \mathbb{E}\Big[\sum_{\tau_{\ell}\geq 1}  n(\tau_{\ell}) \mathbbm{1}\big(explore(\tau_{\ell})\big) \mathbbm{1} \big(N_{explore}(T) > Th'(T)\big) \big\vert \cap_t\mathcal{G}_t  \Big] 
   + \sum_{t=1}^{T} 2 NK / t^2\\
   &\overset{(v)}{\leq} Th'(T) + T\mathbb{P}\Big[ N_{explore}(T) > Th'(T) \big\vert \cap_t\mathcal{G}_t \Big]  + N K \pi^2 /3\\
   &\overset{(vi)}{\leq} Th'(T) + T\mathbb{P}\Big[ \exists \text{ phase } s \text{ starting at } (t_s + 1):  N_{explore}(t_s) = Th'(T) \wedge  \mathbbm{1}(explore(s)) \Big] + N K \pi^2 /3 \\
   &\overset{(vii)}{=} Th'(T) + N K \pi^2 /3 + T \mathbb{P}\Big[ \exists \text{ phase } s' \text{ starting at } (t_{s'} + 1): \\ 
   & \hspace{7em} N_{explore}(t_{s'}) \geq  Th'(T) - N^2 \wedge (PR_{u}(s'), PR_{a}(s')) \notin \mathcal{A}(F^*_u, F^*_a) \Big] \\
   &\overset{(viii)}{=} Th'(T) + N K \pi^2 /3 + T \mathbb{P}\Big[ \exists t \leq T: N_{explore}(t) \geq  (Th'(T) - N^2 - 1) \wedge (P_{u}(t), P_{a}(t)) \notin \mathcal{A}(F^*_u, F^*_a) \Big] \\
   &\overset{(ix)}{\leq} Th'(T) + N K \pi^2 /3
\end{align*}
The inequality $(ii)$ follows as for all $t \in Phase(\tau_{\ell})$ the matching $M(t) = M(\tau_{\ell}) \notin Stable(F^*_u, F^*_a)$ only if the $(\tau_{\ell} - 1)$-th phase terminated with a super-stable match and we have $success = False$. But if $(\tau_{\ell} - 1)$-th phase has $success = False$ then $explore(\tau_{\ell})$.  The inequality $(iii)$ and $(iv)$ follows identically to the centralized system. The first term in inequality $(v)$ follows by noting that conditioned on good event
$N_{explore}(T) = \sum_{\tau_{\ell}} n(\tau_{\ell})\mathbbm{1}(explore(\tau_{\ell})).$ Hence $N_{explore}(T) \leq Th'(T)$ bounds the first term by $Th'(T)$. The second term uses the trivial bound $T$ for $\sum_{\tau_{\ell}} n(\tau_{\ell})\mathbbm{1}(explore(\tau_{\ell}))$.
For the inequality $(vi)$ we note that $N_{explore}(T)$ can cross $Th'(T)$ only if there is phase $s$ that has $explore(s) = True$, and $N_{explore}(t)$ is equal to $Th'(T)$ at the beginning of phase $s$.
For inequality $(vii)$ we note that $explore(s) = True$ only if for the $(s-1)$-th phase $(PR_{u}(s - 1), PR_{a}(s - 1)) \notin \mathcal{A}(F^*_u, F^*_a)$, and  $N_{explore}(t)$ should at least be $(Th'(T) - N^2)$ at the beginning of phase $(s - 1)$. The former holds due to the fact $(1)$, and the latter is true as each phase lasts at most $N^2$ steps. For inequality $(viii)$ we note that the above is equivalent to an existence of a time $t$ ($t$ can be taken as the penultimate round in phase $(s-2)$) such that $N_{explore}(t) \geq  (Th'(T) - N^2 - 1)$ and the recovered UCB-LCB rank $(P_{u}(t), P_{a}(t)) \notin \mathcal{A}(F^*_u, F^*_a)$. However, noting $(Th'(T) - N^2 - 1) = Th(t)$ due to Lemma~\ref{lemm:admissible} we know this is not possible. This gives the final result. 
\end{proof}

\section{Proof of Regret Lower Bounds}\label{app:lower}
In this section, we present proof of the theorems and lemmas related to the lower bound result. Our lower bound follows a framework similar to Combes et al.~\cite{combes2017minimal}, which in turn relies on Graves et al.~\cite{graves1997asymptotically}.  However, Combes et al.~\cite{combes2017minimal} does not capture our use case as it is tailored to bandit feedback. due to the multiplicity of the stable matching set there is no easy way to encode this problem in the form of linear Semi-bandit like Combes et al.~\cite{combes2015combinatorial}. 

We recall our system definition. Our system is parameterized by $\theta = (\boldsymbol{\mu}, \boldsymbol{\gamma}) \in  \Theta := [0,1]^{2 N K}$. We extend our notations (e.g. $\mathrm{Stable}(F_u, F_a)$, $\mathcal{A}((\boldsymbol{\mu}, \boldsymbol{\gamma}))$)  to replace both $(\boldsymbol{\mu}, \boldsymbol{\gamma})$ and $(F_u, F_a)$  with $\theta$, as $\theta$ uniquely specifies our instance.  The action space is the space of all matching between bipartite graphs between two sides of size $N$ and $K$, denoted as $\mathrm{Match}(N,K)$. In round $t$, by choosing a matching $M(t) \in \mathrm{Match}(N,K)$ the centralized platform observes a random vector $Y(M, t) = ( (Y_{i}(t), \tilde{Y}_{j}(t)) : i\in [N], j\in [K])$ where for unobserved user-arm pairs, i.e. $(i,j) \notin M$, we have deterministically $Y_{i}(t) = \tilde{Y}_{j}(t) = 0$, and for any observed user-arm pairs $Y_{i}(t)$ come from an i.i.d. distribution (conditioned on $(i,j) \in M(t)$) with mean $\mu_{i,j}(\theta)$. Similarly, any observed $\tilde{Y}_{j}(t)$ are i.i.d. with mean $\gamma_{j,i}(\theta)$. For the sake of simplicity, for the rest of the paper we work with Bernoulli variables. Extending it to a general single-parameter distribution can be done similar to Lai et al.~\cite{lai_1985}.  

We recall from the main paper (Equation~\eqref{eq:bad_set}) that the set of `bad' parameters is given as 
\begin{align*}
    \Lambda(\theta) &= \{\lambda \in \Theta: \underbrace{ kl(\theta, \lambda; M) = 0, \forall M \in \mathrm{Stable}(\theta)}_{\text{for all stable match of $\theta$ divergence is $0$}}; \underbrace{\mathrm{Stable}(\lambda) \cap \mathrm{Stable}(\theta) = \emptyset}_{\text{no common solution}}\}\\
    & = \{\lambda \in \Theta: \mu_{ij}(\theta) = \mu_{ij}(\lambda), \gamma_{ji}(\theta) = \gamma_{ji}(\lambda),\forall (i,j) \in \mathrm{Locked}(\theta); \lambda \notin \cup_{(P_u, P_a) \in \mathcal{A}(\theta)}\mathrm{FullRank}(P_u, P_a) \}.
\end{align*}
Therefore, we can apply the lower bound formulation in Combes et al.~\cite{combes2015combinatorial} and arrive at the following lower bound for our system. As mentioned earlier, adapting the results of Combes et al.~\cite{combes2015combinatorial} to our multiple-solution setting requires careful consideration because we need to account for the possibility of switching between optimal stationary policies, which is allowed without cost in the original framework of Graves et al.~\cite{graves1997asymptotically}.  Crucially, Graves et al.~\cite{graves1997asymptotically} defines the "bad" parameter space as those parameters that do not share an optimal solution with the true parameter, $\theta$, and exhibit no divergence from $\theta$ when any of its optimal policies are applied. This definition is essential for properly adapting the theoretical framework to our context.
\begin{theorem}[Adapted from Combes et al.~\cite{combes2015combinatorial}]\label{thm:centralized_lower_combes}
For all $\theta \in \Theta$, for any {\em uniformly good} policy $\pi$
$\lim\inf_{T\to \infty} \frac{R^{\pi}(\theta)}{\log(T)} \geq c(\theta)$, where $c(\theta)$ minimizes the following optimization problem 
\begin{align}
    \min_{\eta(M) \geq 0} \notag
    &\sum_{M \in \mathrm{Match}(N,K)\setminus \mathrm{Stable}(\theta)}\eta(M)\\
    \text{subject to} &\sum_{(i, j)}\sum_{M: (i,j) \in M}\eta(M) kl(\theta, \lambda; (i,j))  \geq 1, \forall \lambda \in \Lambda(\theta),\label{eq:primal}
\end{align}
where 
$\Lambda(\theta)$ is defined in Equation~\eqref{eq:bad_set}. 
\end{theorem}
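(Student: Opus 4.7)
The plan is to adapt the change-of-measure framework of Graves and Lai~\cite{graves1997asymptotically}, in the form used by Combes et al.~\cite{combes2015combinatorial}, to the binary stable regret objective with multiple optimal actions. First I would set up the pull counts: for each $M \in \mathrm{Match}(N,K)\setminus\mathrm{Stable}(\theta)$, define $N_M(T)=\mathbb{E}_\theta\bigl[\sum_{t=1}^T \mathbbm{1}(M(t)=M)\bigr]$. The binary stable regret decomposes as $\mathbb{E}[R_{0/1}^\pi(T;\theta)]=\sum_{M\notin\mathrm{Stable}(\theta)} N_M(T)$, so it suffices to lower bound this sum by $c(\theta)\log T$ up to lower-order terms.

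Next I would run a change-of-measure argument against each alternative $\lambda \in \Lambda(\theta)$. Because $\mathrm{Stable}(\lambda)\cap \mathrm{Stable}(\theta)=\emptyset$, every $M\in\mathrm{Stable}(\theta)$ is sub-optimal under $\lambda$, and by uniform goodness of $\pi$ we have $\mathbb{E}_\lambda[\sum_t \mathbbm{1}(M(t)\in\mathrm{Stable}(\theta))]=o(T^\alpha)$ for every $\alpha>0$, while this quantity is $T - o(T^\alpha)$ under $\theta$. Applying the standard log-likelihood-ratio inequality to the event $\{\sum_t \mathbbm{1}(M(t)\in\mathrm{Stable}(\theta)) \geq T/2\}$ then forces
\begin{align*}
\sum_{(i,j)\in[N]\times[K]}\;\sum_{M\ni(i,j)} N_M(T)\, kl(\theta,\lambda;(i,j)) \;\geq\; (1-o(1))\log T.
\end{align*}
The crucial simplification is that the definition of $\Lambda(\theta)$ pins $\mu_{ij}(\lambda)=\mu_{ij}(\theta)$ and $\gamma_{ji}(\lambda)=\gamma_{ji}(\theta)$ on every $(i,j)\in\mathrm{Locked}(\theta)$, so $kl(\theta,\lambda;(i,j))=0$ on those pairs. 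Since every $M\in\mathrm{Stable}(\theta)$ consists entirely of locked pairs, such $M$ contributes $0$ to the left-hand side; the constraint therefore effectively ranges only over $M\notin\mathrm{Stable}(\theta)$, matching exactly~\eqref{eq:primal}.

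To conclude, I would rescale and pass to the limit. Setting $\eta_T(M)=N_M(T)/\log T$ for $M\notin\mathrm{Stable}(\theta)$, the inequality above says that $\eta_T$ is $(1-o(1))$-feasible for the LP in~\eqref{eq:primal}. Hence
\begin{align*}
\frac{\mathbb{E}[R_{0/1}^\pi(T;\theta)]}{\log T} \;=\; \sum_{M\notin\mathrm{Stable}(\theta)} \eta_T(M) \;\geq\; (1-o(1))\, c(\theta),
\end{align*}
and taking $\liminf_{T\to\infty}$ yields the claim. A small technicality is to guarantee that the LP value $c(\theta)$ is attained (or at least that the infimum is meaningful); this follows because the feasible region is closed and the objective is linear and nonnegative, and $\Lambda(\theta)\subseteq\Theta=[0,1]^{2NK}$ is compact so the infimum over constraints is well defined.

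The main obstacle is correctly accommodating the multiplicity of optimal actions: the textbook Graves-Lai argument presumes a unique optimal action per instance, so the informative ``bad'' set must be enlarged to $\Lambda(\theta)$, which simultaneously (i) excludes alternatives that share any stable matching with $\theta$, and (ii) zeroes out the divergence on locked pairs. The content of the proof is to verify that this is the \emph{correct} enlargement: playing any $M\in\mathrm{Stable}(\theta)$ provides no discriminating information against any such $\lambda$, and playing any $M\notin\mathrm{Stable}(\lambda)$ is discouraged by uniform goodness under $\lambda$, so the only way for $\pi$ to be uniformly good against all of $\Lambda(\theta)$ is to accrue $\Omega(\log T)$ expected plays of non-stable matchings in a way satisfying the LP constraints. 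Once this is established, the optimization problem in~\eqref{eq:primal} drops out of the argument automatically.
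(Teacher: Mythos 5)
Your proposal is correct and follows essentially the same route as the paper: the paper proves this theorem by invoking the Graves--Lai change-of-measure framework as instantiated by Combes et al., with the sole non-routine ingredient being the enlarged ``bad'' set $\Lambda(\theta)$ that (i) excludes alternatives sharing a stable matching with $\theta$ and (ii) forces zero divergence on $\mathrm{Locked}(\theta)$ so that plays of $\mathrm{Stable}(\theta)$ contribute nothing to the constraint — exactly the two points your sketch identifies and verifies. Your write-up in fact fills in more of the standard change-of-measure details (the event $\{\sum_t \mathbbm{1}(M(t)\in\mathrm{Stable}(\theta))\geq T/2\}$, the rescaling $\eta_T(M)=N_M(T)/\log T$, and the limiting argument) than the paper does, which simply cites the adaptation.
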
  

\subsection{Instantiation of Lower Bound for Special Cases}
We now construct some regret lower bounds using Theorem~\ref{thm:centralized_lower_combes}.

\textbf{Regret `Free' Instances:} If for each pair $(i,j)$ there exists a stable match $\mathcal{M} \in \mathrm{Stable}(\theta)$ then regret is $o(\log(T))$. 
Let for any ordered list $L$, let $rot(L) = [L_{|L|}, L_1, \dots, L_{(|L|-1)}]$ denote one anti-clockwise rotation, $rot(L, m)$ denote applying $rot(\cdot)$ $m$ times, and  $rev(L) = [L_{|L|},L_{(|L|-1)}, \dots, L_1]$ denote reversal of the list $L$. The instance
$
\{\{ U_i: rot([N],i): i\in [N] \}, \{A_j: rev(rot([N],j)): j \in [N]\}\}
$
has the stable matchings $[(i, rot([N],i)[l]): i \in [N]]$ for $l=0, \dots, (N-1)$. This covers all the pairs $(i,j)$ for $i \in [N]$ and $j \in [M]$. For all these instances $0$ stable regret is attainable statistically.  We do not have a logarithmic  regret lower bound for these instances. Any algorithm that alternates between the possible stable matching of $\theta$ for the first $\Omega(T^{1- \varepsilon})$ rounds for some $\varepsilon > 0$, and then commits to a super-stable matching of the discovered partial rank if one exists or plays any sub-linear algorithm (like ours) is a  uniformly good policy. But for any instance with $\Omega(1)$ reward gaps this learns with $O(\exp(-T^{1- \epsilon}))$ error rate the true rank for all users and arms. So for $\theta$ it incurs $O(1)$ regret.  Note this is not a constructive argument as we do not know a-priori which matchings are stable. 

\paragraph{General Serial Dictatorship with Two sided Uncertainty:}
In serial dictatorship all the arms share the same preference. With out loss of generality (up to renaming users) let us assume  user $i$ is preferred than user $i'$ for $i < i'$. There exists a unique stable match which again without loss of generality (up to renaming arms) we can denote as $(i,i)$ for all $i \in [N]$. Then we have dominated arm for user $i$ as $D_i(\theta) = \{j: \mu_{i,j'}(\theta) > \mu_{i,i}(\theta), j \leq i\}$ for all $i \leq N$. Dominated arms are the armed preferred by user $i$ compared to it's own stable matched arm. We can construct an instance $\lambda_{\gamma,(ij)}$ by picking a $j \in D_i(\theta)$ and then making user $i$ preferable to  user $j$ for arm $j$, i.e. new $\gamma'_{j,i} = \gamma_{j,j}(\theta) + \varepsilon$. We have $kl(\theta, \lambda_{\gamma,(ij)}) = kl(\gamma_{j,i}(\theta), \gamma_{j,j}(\theta) + \varepsilon)$. Because serial dictatorship has unique stable match, we now have a blocking (user, arm) pair $(i,j)$ in $\lambda_{\gamma,(ij)}$ for the matching $\{(i,i): i \in [N]\}$, and therefore $\lambda_{\gamma,(ij)} \in \Lambda(\theta)$. We have $\Lambda_1 = \{\lambda_{\gamma,(ij)}: i \in [N], j\in D_i(\theta)\} \subseteq \Lambda(\theta)$. Next, we consider change on the user side. Let $\lambda_{\mu,(ij)}$ be constructed by picking any $j > i$, and making new $\mu'_{i,j} = \mu_{i,i}(\theta) + \varepsilon$. This way now $(i,j)$ again create blocking pairs and as a consequence we have $\lambda_{\mu,(ij)} \in \Lambda(\theta)$. Let  $\Lambda_2 = \{\lambda_{\mu,(ij)}: i \in [N], j > i\} \subseteq \Lambda(\theta)$. 
Then the optimization problem \eqref{eq:primal}, relaxed by replacing $\Lambda(\theta)$ with $\Lambda_1\cup \Lambda_2$ has the instantiation 
\begin{align*}
&\min_{\eta(M) \geq 0} \sum_{M\neq \{(i,i): i\in [N]\}} \eta(M)\\
&\text{ s.t. }\sum_{M\ni(i,j)}\eta(M) \geq w_{i,j}, \forall (i,j),\\
&w_{i,j} kl(\gamma_{j,i}(\theta), \gamma_{j,j}(\theta) + \varepsilon) \geq 1, \forall i \in [N], \forall j \in D_i(\theta)\\
&w_{i,j} kl(\mu_{i,j}(\theta), \mu_{i,i}(\theta) + \varepsilon) \geq 1, \forall i \in [N], \forall j > i
\end{align*}
Because each user $i$ can match with only one arm $j$ in each matching we have a lower bound for the above optimization problem as 
$$
c_{u,i} = \sum_{j\in D_i(\theta)}kl(\gamma_{j,i}(\theta), \gamma_{j,j}(\theta))^{-1} + \sum_{j\in [i+1, K]} kl(\mu_{i,j}(\theta), \mu_{i,i}(\theta))^{-1}.
$$
Similarly, considering the perspective of each arm $j$ we obtain a lower bound 
$$c_{a,j}(\theta) =\sum_{i: j \in D_i(\theta)}kl(\gamma_{j,i}(\theta), \gamma_{j,j}(\theta))^{-1} + \sum_{i< j} kl(\mu_{i,j}(\theta), \mu_{i,i}(\theta))^{-1}.
$$
Thus the final lower bound for the general serial dictatorship for binary stable regret is 
$$
c(\theta) \geq \max\left(\max_{i \in [N]}c_{u,i}(\theta), \max_{j \in [K]}c_{a,j}(\theta)\right).
$$

This lower bound is different than the lower bound in \cite{ucbd3} in multiple ways. Firstly, our bound is for binary stable regret, instead of per user individual regret like the previous one. Further note the lower bound therein does not apply for general serial matching instance, but for further specialized OSB instances. Also the bound in the previous work only uses user sided uncertainty, not both side uncertainty. So these bounds are incomparable.

\subsection{Regret Lower Bound for General Instances}
In this part, we connect the lower bound in Theorem~\ref{thm:centralized_lower_combes} with the admissible set gaps in $\theta$.
\begin{definition}[$\theta$-Locked] A partial rank $(P_u, P_a)$ is $\theta$-Locked \emph{iff} for all $i \in [N]$ two arms $j$, $j'$ in $J_i(\theta)$ have the order in $P_{u,i}$ identical to $F_{u,i}(\theta)$, and for all $j \in [K]$ two users $i$, $i'$  in $I_j(\theta)$ have the order in $P_{a,j}$ identical to $F_{a,j}(\theta)$.
\end{definition}



For a given instance $\theta$, the boundary of the admissible set, namely $\mathcal{B}(\theta)$, is defined as the set of partial ranks $(P_u, P_a)\in \mathcal{A}(\theta)$ that is not compatible with any other $(P'_u, P'_a)\in \mathcal{A}(\theta)$. 

\begin{lemma}[$\Lambda(\theta)$ Decomposition]\label{lemm:lambda_decomp}
Given an instance $\theta$ and the boundary admissible set $\mathcal{B}(\theta)$, for each instance $\lambda$ and the corresponding full rank $(F_u(\lambda), F_a(\lambda))$ the following are equivalent:
(a) $\lambda \in \Lambda(\theta)$ and \\
(b) $(F_u(\lambda), F_a(\lambda))$ is  (i) $\theta$-Locked, and  (ii) reverses at least one {\em $\theta$-open triplet} for each $(P_u, P_a) \in \mathcal{B}(\theta)$, as specified below  
\begin{itemize}[leftmargin=*, noitemsep]
    \item a triplet $(i,j, j')$ such that $j \underset{P_{u,i}}{>} j'$, and either $(i,j)$ or $(i, j')$ or both not in $Locked(\theta)$, 
    \item a triplet $(j,i,i')$ such that $i \underset{P_{a,j}}{>} i'$, and either $(i,j)$ or $(i', j)$ or both not in $Locked(\theta)$.
\end{itemize}
\end{lemma}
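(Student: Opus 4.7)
The plan is to prove both directions of the stated equivalence by unpacking the two components of the definition of $\Lambda(\theta)$ in Equation~\eqref{eq:bad_set} and matching them to sub-conditions (i) and (ii) of (b). Throughout I will write $J_i(\theta) = \{j : (i,j) \in \mathrm{Locked}(\theta)\}$ and $I_j(\theta) = \{i : (i,j) \in \mathrm{Locked}(\theta)\}$ for the sets of arms and users participating in some stable matching of $\theta$ with user $i$ and arm $j$ respectively.

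For the direction (a)$\Rightarrow$(b), the reward equalities $\mu_{ij}(\theta) = \mu_{ij}(\lambda)$ and $\gamma_{ji}(\theta) = \gamma_{ji}(\lambda)$ on $(i,j) \in \mathrm{Locked}(\theta)$ imply that for any two arms $j,j' \in J_i(\theta)$ the sign of $\mu_{ij}(\lambda) - \mu_{ij'}(\lambda)$ agrees with that of $\mu_{ij}(\theta) - \mu_{ij'}(\theta)$, so $F_{u,i}(\lambda)$ coincides with $F_{u,i}(\theta)$ when restricted to $J_i(\theta)$, and analogously on the arm side, giving (i). For (ii), since $\mathcal{B}(\theta) \subseteq \mathcal{A}(\theta)$, the condition $\lambda \notin \cup_{P \in \mathcal{A}(\theta)}\mathrm{FullRank}(P)$ forces $F(\lambda)$ to reverse some ordered pair in each $P \in \mathcal{B}(\theta)$. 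Admissibility gives $\theta \in \mathrm{FullRank}(P)$, hence any ordered pair of $P$ with both endpoints locked respects $F(\theta)$ and, by (i), also respects $F(\lambda)$; such a pair cannot be the reversed one, so the reversal must involve a $\theta$-open triplet.

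For the direction (b)$\Rightarrow$(a), the reward equality on $\mathrm{Locked}(\theta)$ is the standing condition matched by the $\theta$-Locked property at the rank level (for the purposes of the lower-bound computation, this can be guaranteed by perturbing only the non-locked rewards, which is all that the dual optimization requires). The substantive task is to show $\lambda \notin \cup_{P \in \mathcal{A}(\theta)}\mathrm{FullRank}(P)$. Condition (ii) directly gives $\lambda \notin \mathrm{FullRank}(P)$ for every $P \in \mathcal{B}(\theta)$. To extend to all $P \in \mathcal{A}(\theta)$, I would argue that every admissible partial rank shares a witnessing $\theta$-open ordered pair with some boundary element of $\mathcal{B}(\theta)$: the reversal of that pair by $F(\lambda)$ then transfers from the boundary to the interior and yields $\lambda \notin \mathrm{FullRank}(P)$.

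The main obstacle is precisely this last structural extension step. It requires a careful use of Proposition~\ref{prop:spieker} and the lattice-like behavior of super-stable matchings under compatible refinements, to guarantee that every $P \in \mathcal{A}(\theta)$ contains the $\theta$-open ordered pair witnessing incompatibility with some chosen $P' \in \mathcal{B}(\theta)$. I expect the proof to proceed by a case analysis on how the non-locked portion of $P$ can be extended or coarsened to a boundary element, exploiting that admissibility of a partial rank is governed by preservation of super-stable matchings, which constrains how the $\theta$-open triplets of $P$ and $P'$ overlap.
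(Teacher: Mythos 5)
Your forward direction (a)$\Rightarrow$(b) matches the paper's argument: reward equality on $\mathrm{Locked}(\theta)$ gives the $\theta$-Locked rank property, and then a reversal witnessing $\lambda \notin \cup_{(P_u,P_a)\in\mathcal{A}(\theta)}\mathrm{FullRank}(P_u,P_a)$ cannot occur on a fully locked pair, so it must be a $\theta$-open triplet. That half is fine.

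The converse direction, however, contains a genuine gap that you yourself flag but do not close. Condition (ii) only gives you $(F_u(\lambda),F_a(\lambda)) \notin \mathrm{FullRank}(P_u,P_a)$ for every $(P_u,P_a) \in \mathcal{B}(\theta)$; to conclude $\lambda \in \Lambda(\theta)$ you need incompatibility with \emph{every} element of $\mathcal{A}(\theta)$, i.e.\ the identity $\cup_{(P_u,P_a)\in\mathcal{A}(\theta)}\mathrm{FullRank}(P_u,P_a) = \cup_{(P_u,P_a)\in\mathcal{B}(\theta)}\mathrm{FullRank}(P_u,P_a)$. You propose to obtain this "by a careful use of Proposition~\ref{prop:spieker} and the lattice-like behavior of super-stable matchings under compatible refinements," with a case analysis you only sketch in outline. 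This is not where the argument lives: the needed fact is purely order-theoretic about the compatibility relation on partial ranks and follows from the \emph{definition} of $\mathcal{B}(\theta)$ as the admissible partial ranks not compatible with any other admissible partial rank — any non-boundary $(P_u,P_a)\in\mathcal{A}(\theta)$ is compatible with another admissible rank, and descending along this relation one reaches a boundary element whose set of compatible full ranks contains $\mathrm{FullRank}(P_u,P_a)$; hence incompatibility with all of $\mathcal{B}(\theta)$ propagates to all of $\mathcal{A}(\theta)$. No appeal to the super-stable matching structure is required at this step, and your proposed route through it does not constitute a proof as written. A secondary looseness: condition (b)(i) is a statement about \emph{ranks}, while membership in $\Lambda(\theta)$ requires equality of the reward \emph{values} on $\mathrm{Locked}(\theta)$; your parenthetical that this "can be guaranteed by perturbing only the non-locked rewards" concedes rather than proves the implication, so you should either strengthen (i) to value equality when invoking it or state explicitly that the equivalence is taken at the level of instances whose locked rewards coincide with $\theta$'s.
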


Lemma~\ref{lemm:lambda_decomp} provides a wider set of instances that have sub-logarithmic regret.
\begin{corollary}\label{corr:wide_sub_log}
If there exists a $(\underline{P_u}(\theta), \underline{P_a}(\theta)) \in \mathcal{B}(\theta)$ that contains no $\theta$-open triplet then $\Lambda(\theta) = \emptyset$. Hence, the stable regret for the instance $\theta$ is sub-logarithmic.
\end{corollary}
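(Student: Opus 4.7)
The plan is to chain the two earlier tools—the decomposition Lemma~\ref{lemm:lambda_decomp} and the Combes-style lower bound Theorem~\ref{thm:centralized_lower_combes}—together, and then observe that an empty bad-parameter set forces the lower-bound optimization to be trivial.

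First I would show $\Lambda(\theta) = \emptyset$. By Lemma~\ref{lemm:lambda_decomp}, an instance $\lambda$ belongs to $\Lambda(\theta)$ iff its full rank $(F_u(\lambda), F_a(\lambda))$ is $\theta$-Locked and, simultaneously, reverses at least one $\theta$-open triplet for \emph{every} $(P_u, P_a) \in \mathcal{B}(\theta)$. Under the hypothesis there is a specific boundary element $(\underline{P_u}(\theta), \underline{P_a}(\theta)) \in \mathcal{B}(\theta)$ that contains no $\theta$-open triplet at all, so the requirement ``reverse at least one $\theta$-open triplet of $(\underline{P_u}(\theta), \underline{P_a}(\theta))$'' is vacuously unsatisfiable. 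Hence no $\lambda$ can meet condition (b) of the lemma for this particular boundary element, and $\Lambda(\theta)$ must be empty.

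Next I would feed $\Lambda(\theta) = \emptyset$ into the primal program \eqref{eq:primal} of Theorem~\ref{thm:centralized_lower_combes}. The family of constraints is indexed by $\lambda \in \Lambda(\theta)$, so when this index set is empty the program becomes an unconstrained minimization of $\sum_{M \notin \mathrm{Stable}(\theta)} \eta(M)$ over non-negative $\eta$. The choice $\eta \equiv 0$ is feasible, achieving value $0$, so $c(\theta) = 0$ and the theorem yields only a $o(\log T)$ lower bound on $\mathbb{E}[R_{0/1}^\pi(T;\theta)]$—this is precisely what ``sub-logarithmic'' refers to in the corollary.

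I expect essentially no obstacle in the argument itself; both steps are one-line invocations. The only point requiring mild care is interpreting the conclusion: the corollary claims sub-logarithmic regret rather than sub-logarithmic lower bound, but this follows because our framework (adapted from \cite{graves1997asymptotically,combes2017minimal}) characterizes the optimal asymptotic rate of any uniformly good policy; a vanishing $c(\theta)$ means no uniformly good policy is forced to incur $\Omega(\log T)$ binary stable regret on $\theta$, consistent with the constructive $O(1)$ achievability sketched in the ``Regret Free Instances'' remark for instances such as the rotation family $\{U_i : \mathrm{rot}([N],i)\}, \{A_j : \mathrm{rev}(\mathrm{rot}([N],j))\}$.
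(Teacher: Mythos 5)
Your first step is exactly the paper's: by Lemma~\ref{lemm:lambda_decomp}, membership in $\Lambda(\theta)$ requires reversing at least one $\theta$-open triplet of \emph{every} element of $\mathcal{B}(\theta)$, so a boundary element with no $\theta$-open triplet makes that condition unsatisfiable and forces $\Lambda(\theta) = \emptyset$. That part is fine.

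The gap is in the second half. From $\Lambda(\theta)=\emptyset$ you conclude only that the optimization in Theorem~\ref{thm:centralized_lower_combes} is unconstrained, hence $c(\theta)=0$, i.e.\ the \emph{lower bound} is vacuous. You then assert that the framework ``characterizes the optimal asymptotic rate,'' so a vanishing $c(\theta)$ implies sub-logarithmic regret is actually attained. That inference is not available here: the Graves--Lai/Combes machinery only supplies a lower bound, matching it requires an algorithm (OSSB-style), and the paper explicitly states both that OSSB may not apply to this multi-solution setting and that tightness of the bound is open. The corollary, as the paper proves it, is an \emph{achievability} claim, and the paper closes it with a separate constructive-in-spirit argument that your proposal omits: when $(\underline{P_u}(\theta),\underline{P_a}(\theta))$ has no $\theta$-open triplet, every strict comparison appearing in that partial rank involves only pairs in $\mathrm{Locked}(\theta)$, i.e.\ pairs that occur in some stable matching of $\theta$. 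Hence a policy that plays the stable matchings of $\theta$ in round-robin for the first $\Omega(T^{1-\varepsilon})$ rounds samples all of these pairs, resolves every needed comparison with error $\Theta(\exp(-T^{1-\varepsilon}))$, lands in the admissible set, and thereafter incurs no stable regret --- giving $O(1)$ regret on $\theta$ while remaining uniformly good (falling back to a sublinear algorithm elsewhere). Without this step, or some substitute argument that a uniformly good policy achieves $o(\log T)$ on $\theta$, the conclusion of the corollary does not follow from $c(\theta)=0$ alone.
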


Proof of Lemma~\ref{lemm:lambda_decomp} and Corollary~\ref{corr:wide_sub_log} are deferred to Section\ref{app:additional_proofs}.

More importantly, Lemma~\ref{lemm:lambda_decomp} gives us a way to construct {\em critical} instances by reversing $\theta$-open triplets that form a `cover' of the set $\mathcal{B}(\theta)$ to construct a full rank $(F_u, F_a)$ that is $\theta$-locked, and does not have any stable-match overlapping with the instance $\theta$. In fact, it is enough to consider the {\em minimal covers} of the set $\mathcal{B}(\theta)$, which we denote as $Cover(\theta)$.  
\begin{definition}[Cover of $\mathcal{B}(\theta)$] \label{def:cover}
A set of triplets $(i,j,j')$ or $(j,i,i')$ for $i,i'\in [N]$ and $j,j'\in [K]$, $C$ is a minimal cover of $\mathcal{B}(\theta)$, i.e. $C \in Cover(\theta)$, \emph{iff} all the following statements hold\\
(i) each $(P_u, P_a) \in \mathcal{B}(\theta)$ there exists a triplet $o \in C$ such that $o$ is  $\theta$-open for $(P_u, P_a)$, \\ 
(ii) any proper subset $C' \subset C$, $C'$ is not a cover of $\mathcal{B}(\theta)$,\\
(iii) the set $\Lambda_C$ defined below is non empty
\begin{align}
(\boldsymbol{\tilde{\mu}}^{C}, \boldsymbol{\tilde{\gamma}}^{C}) \in \Lambda_C   \iff &(\boldsymbol{\tilde{\mu}}^{C}, \boldsymbol{\tilde{\gamma}}^{C}) \text{ satisfies} \notag\\
&\tilde{\mu}_{i,j}^{C} =  \mu_{i,j}(\theta), \tilde{\gamma}_{j,i}^{C} =  \gamma_{j,i}(\theta), \quad\forall (i,j) \in Locked(\theta),  \label{eq:lower_4}\\
&\tilde{\mu}_{i,j'}^{C} \geq \tilde{\mu}_{i,j}^{C} + \varepsilon,  \quad\forall (i, j, j') \in C,  \label{eq:lower_5}\\
&\tilde{\gamma}_{j,i'}^{C} \geq \tilde{\gamma}_{j,i}^{C} + \varepsilon, \quad\forall (j, i, i') \in C, \label{eq:lower_6}
\end{align}
\end{definition} 

As it will be clear shortly, in our lower bound a group of such covers can be simultaneously used as long as these covers are not overlapping.  
\begin{definition}[Non-connected Cover Group]\label{def:cover_group}
A set of `set of triplets' $G = \{C_1, C_2, ..C_c\}$ is called {\em non-connected cover-group} iff for all $C \in G$ we have $C \in Cover(\theta)$, and for each distinct pair of covers $C, C' \in G$ there is no $(i,j) \notin Locked(\theta)$ such that 
$$\{(i,j, j'), (i, j', j), (j, i, i'), (j, i', i): i' \in [N], j'\in [K] \} \cap C \cap C' \neq \emptyset.
$$ The set of all non-connected cover-groups is denoted as $\mathcal{CG}(\theta)$. 
\end{definition}

The next theorem is our main lower bound result. It first formulates the optimization problem in Theorem~\ref{thm:centralized_lower_combes} using the $\theta$-open triplets using dual formulation. Next, given the dual optimization in itself does not highlight the dependence clearly the theorem  provides a closed form lower bound (possibly not tight). For this second part we need to define a combined $kl$ divergence per cover $C$ as 
\begin{equation}\label{eq:kl_theta_c}
 kl(\theta, C) \triangleq\sum_{i}\max_{\substack{(i,j) \notin Locked(\theta),\\ (i,j')\in Locked(\theta)\\ (i,j,j'), (i,j',j) \in C}} kl\big( \mu_{ij}(\theta), \mu_{ij'}(\theta) \big) + \sum_{j}\max_{\substack{(i,j) \notin Locked(\theta),\\ (i',j)\in Locked(\theta)\\ (j,i,i'), (j,i',i) \in C}} kl\big(\gamma_{ji}(\theta), \gamma_{ji'}(\theta)\big),
\end{equation}
which we use in the regret bounds. 

\begin{theorem}\label{thm:centralized_lower_final}
The value of $c(\theta)$ in Theorem~\ref{thm:centralized_lower_combes} is lower bounded by the value $c_{\varepsilon}(\theta)$ that optimizes the following, for any $\varepsilon > 0$
\begin{align}
&\max_{\iota(\lambda) \geq 0} \sum_{\lambda \in \cup_{C \in Cover(\theta)}\Lambda_C}\iota(\lambda), \text{ s.t. }\sum_{\lambda \in \cup_{C \in Cover(\theta)}\Lambda_C}\iota(\lambda) kl(\theta, \lambda; M) \leq 1, \forall M \notin Stable(\theta),   \label{eq:lower_1} 
\end{align}
where  $\Lambda_C$ is as defined in Definition~\ref{def:cover}.

Furthermore, $c(\theta)$ satisfies the following lower bound
$c(\theta) \geq \max_{G \in \mathcal{CG}(\theta)}\sum_{C\in G} kl(\theta, C)^{-1}$
where, 
$kl(\theta, C)$ is defined in Equation~\eqref{eq:kl_theta_c}.
\end{theorem}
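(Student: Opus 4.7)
The proof plan is to apply LP duality to the optimization problem in Theorem~\ref{thm:centralized_lower_combes}, and then exhibit an explicit dual-feasible solution whose objective value matches the claimed closed-form bound. The primal constraint $\sum_{(i,j)}\sum_{M \ni (i,j)}\eta(M)\, kl(\theta, \lambda; (i,j))$ can be rewritten as $\sum_M \eta(M)\, kl(\theta, \lambda; M)$, and standard LP duality produces the dual
\begin{equation*}
\max_{\iota(\lambda) \geq 0} \sum_{\lambda \in \Lambda(\theta)} \iota(\lambda) \quad \text{subject to} \quad \sum_\lambda \iota(\lambda)\, kl(\theta, \lambda; M) \leq 1, \ \forall M \notin \mathrm{Stable}(\theta).
\end{equation*}
By Lemma~\ref{lemm:lambda_decomp}, each $\Lambda_C$ with $C \in Cover(\theta)$ is contained in $\Lambda(\theta)$, so restricting the support of $\iota$ to $\bigcup_{C} \Lambda_C$ still yields a lower bound on the dual optimum---hence on $c(\theta)$---giving precisely the optimization problem in Equation~\eqref{eq:lower_1}.

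For the closed-form bound, fix a non-connected cover-group $G \in \mathcal{CG}(\theta)$. For each $C \in G$, construct a witness instance $\lambda_C \in \Lambda_C$ by perturbing every non-locked pair $(i,j)$ participating in a triplet of $C$ to sit just across the boundary set by its paired locked pair (taking $\varepsilon \to 0$). Assigning the weight $\iota(\lambda_C) = kl(\theta, C)^{-1}$ to each $\lambda_C$ yields a candidate dual solution whose objective value equals $\sum_{C \in G} kl(\theta, C)^{-1}$, matching the claim.

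The main obstacle is verifying dual feasibility: for each $M \notin \mathrm{Stable}(\theta)$, I must show $\sum_{C \in G} kl(\theta, C)^{-1} \cdot kl(\theta, \lambda_C; M) \leq 1$. The argument will combine three ingredients: (a) every pair in $Locked(\theta)$ contributes zero to $kl(\theta, \lambda_C; M)$ by the locking condition~\eqref{eq:lower_4}; (b) because $M$ is a matching, each user and each arm appears at most once, so the per-$C$ divergence $kl(\theta, \lambda_C; M)$ is dominated term-by-term by the user-side and arm-side $\max$ expressions that define $kl(\theta, C)$ in~\eqref{eq:kl_theta_c}; (c) the non-connectedness of $G$ ensures that distinct covers share no triplet involving a common non-locked pair, so the perturbations induced by different $C \in G$ act on non-overlapping supports and the per-pair contributions aggregate without double-counting. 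Combining (a)--(c) with the matching constraint on $M$ gives the feasibility inequality, and maximizing over $G \in \mathcal{CG}(\theta)$ delivers the stated lower bound on $c(\theta)$.
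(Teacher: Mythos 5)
Your proposal follows the paper's proof essentially step for step: dualize the LP from Theorem~\ref{thm:centralized_lower_combes}, restrict the dual support to $\cup_{C\in Cover(\theta)}\Lambda_C$ after using Lemma~\ref{lemm:lambda_decomp} to certify $\Lambda_C\subseteq\Lambda(\theta)$ (the paper's Part~1), and then, for a non-connected cover-group $G$, place mass $kl(\theta,C)^{-1}$ on one witness $\lambda_C$ per cover and verify feasibility via the per-cover bound $kl(\theta,\lambda_C;M)\le kl(\theta,C)$ (the paper's Lemma~\ref{lemm:opt_lambda_c}) together with the disjointness of the perturbed supports (the paper's Lemma~\ref{lemm:dual_lower}).

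The one place your sketch does not close is the feasibility inequality itself. Ingredients (a) and (b) give $kl(\theta,C)^{-1}\,kl(\theta,\lambda_C;M)\le 1$ for each $C\in G$ \emph{separately}, and ingredient (c) as stated (distinct covers share no triplet through a common non-locked pair) only tells you that the contributions of different covers live on disjoint sets of (user, arm) pairs; summing the per-cover bounds then yields $\sum_{C\in G}kl(\theta,C)^{-1}kl(\theta,\lambda_C;M)\le |G|$, not $\le 1$. What the paper actually invokes (proof of Lemma~\ref{lemm:dual_lower}) is that, for a fixed matching $M$, the disjointness of the supports $kl_{>0}(\lambda_C)$ lets one replace the sum over $C\in G$ by a maximum over $C\in G$, so the constraint is checked against a single worst cover, and only then does the per-cover bound deliver the required $\le 1$. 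You need to state and justify that collapse explicitly (or otherwise argue that at most one cover's perturbed pairs can meet any given matching); the phrase ``aggregate without double-counting'' by itself does not produce the factor-of-$|G|$ saving that feasibility requires, so as written the argument establishes only the weaker bound $\max_{C\in G}kl(\theta,C)^{-1}$ rather than $\sum_{C\in G}kl(\theta,C)^{-1}$.
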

The proof is provided in Section~\ref{sec:proof_lower_bound}.

Our lower bound in the main paper, Theorem~\ref{thm:centralized_lower}, is derived from the main lower bound in Theorem~\ref{thm:centralized_lower_final}. 
We now make some remarks on various definitions used. 

\paragraph{Remark on Definition~\ref{def:cover}:} We note in the above definition the condition (i) and (ii) together defines the set of minimal vertex covers of a hyper graph over the triplets as vertices, and the partial ranks $(P_u, P_a)$,  containing their respective subset of $\theta$-open triplets as edges. The condition (iii) ensures that there exists an instance $\lambda$ corresponding to a cover $C$. For example, it avoids inclusion of two triplets $(i,j,j_1)$ and $(i,j_2, j)$ in cover where  $(i,j_1)$ and $(i,j_2)$ both are in $Locked(\theta)$. This is not realizable because we can not chose a $\tilde{\mu}^C_{i,j}$ that simultaneously bigger than $\mu_{i,j_2}(\theta)$ and smaller than $\mu_{i,j_1}(\theta)$, where $\mu_{i,j_1}(\theta) < \mu_{i,j_2}(\theta)$. Condition (iii) is present because we need to respect the $\theta$-locked property of any $\lambda \in \Lambda(\theta)$.

\section{Proof of Results in Section~\ref{app:lower}}\label{sec:proof_lower_bound}
Before proceeding to the proof, we need to define the set $kl_{>0}(\lambda) = \{(i,j): kl(\theta, \lambda; (i,j)) \neq 0\}$ contain the pairs where $\lambda$ changes from $\theta$, and for any set of instances $A$, $kl_{>0}(A) = \cup_{\lambda \in A} kl_{>0}(\lambda)$.

We also require the following two lemmas that help with deriving bounds for optimization problem~\eqref{eq:dual}.

The following lemma shows how $\Lambda(\theta)$ can be broken into non-overlapping support and can be used in bounding the optimization problem~\eqref{eq:dual}.

\begin{lemma}\label{lemm:dual_lower}
Consider a group of subsets $\Lambda_l \subseteq \Lambda(\theta)$ for $l \in [L]$ for $L \geq 1$, such that for any two distinct pair, $(\Lambda, \Lambda')$, we have non-overlapping support $kl_{>0}(\Lambda') \cap kl_{>0}(\Lambda)$. Then the optimization problem \eqref{eq:dual} admits a lower bound  $\sum_{l}  (\max_{\lambda \in \Lambda_l} \max_{M \notin Stable(\theta)} kl(\theta, \lambda; M))^{-1}$.   
\end{lemma}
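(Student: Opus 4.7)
The plan is to invoke weak LP duality and exhibit a feasible solution to the dual optimization problem \eqref{eq:dual} whose objective value equals the claimed $\sum_{l=1}^L 1/B_l$, where $B_l := \max_{\lambda \in \Lambda_l}\max_{M \notin \mathrm{Stable}(\theta)} kl(\theta, \lambda; M)$. Any dual-feasible point gives a lower bound on the primal's optimum, hence on $c_{\varepsilon}(\theta)$.

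First, for each $l \in [L]$ I will pick a maximizer $\lambda_l^{\star} \in \Lambda_l$ for which $\max_{M \notin \mathrm{Stable}(\theta)} kl(\theta, \lambda_l^{\star}; M) = B_l$, assign mass $\iota(\lambda_l^{\star}) = 1/B_l$, and set $\iota(\lambda) = 0$ for every other $\lambda$. This candidate attains the dual objective $\sum_l 1/B_l$ by construction, so only dual feasibility remains to be checked.

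To verify feasibility, fix any $M \notin \mathrm{Stable}(\theta)$ and aim to bound $\sum_l (1/B_l)\,kl(\theta, \lambda_l^{\star}; M) \leq 1$. The disjoint-support hypothesis $kl_{>0}(\Lambda_l)\cap kl_{>0}(\Lambda_{l'}) = \emptyset$ for $l \neq l'$ ensures that each pair $(i,j) \in M$ contributes to $kl(\theta, \lambda_l^{\star}; M)$ for at most one index $l$. Writing $M^{(l)} := M \cap kl_{>0}(\Lambda_l)$ yields $kl(\theta, \lambda_l^{\star}; M) = kl(\theta, \lambda_l^{\star}; M^{(l)})$, reducing the feasibility constraint to $\sum_l kl(\theta, \lambda_l^{\star}; M^{(l)}) / B_l \leq 1$.

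The main obstacle is this aggregation: each summand is individually at most $1$ by the very definition of $B_l$, but a naive summation only yields the vacuous bound $L$. The missing ingredient is a capacity argument that exploits the matching structure of $M$, namely that each user appears in $M$ exactly once and each arm at most once, so the $M^{(l)}$ partition $M$'s user-slots and arm-slots across the disjoint supports. I anticipate decomposing $B_l = \sum_i b_l^{(u)}(i) + \sum_j b_l^{(a)}(j)$ into per-agent maxima $b_l^{(u)}(i) := \max_{j:(i,j)\in kl_{>0}(\Lambda_l)} kl(\theta, \lambda_l^{\star}; (i,j))$ and analogously for arms, then showing that $kl(\theta, \lambda_l^{\star}; M^{(l)})/B_l$ is dominated by the fraction of $M$'s slots that ``activate'' $\Lambda_l$'s support; disjoint supports should force these fractions to sum to at most one, closing the argument.
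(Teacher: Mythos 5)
You set up the dual--feasibility argument exactly as the paper does --- put mass $1/B_l$ on a representative of each $\Lambda_l$, where $B_l:=\max_{\lambda\in\Lambda_l}\max_{M\notin \mathrm{Stable}(\theta)}kl(\theta,\lambda;M)$, and check the constraints of \eqref{eq:dual} --- but your write-up stops at the one step that actually carries the lemma. You correctly note that each summand $kl(\theta,\lambda_l^{\star};M)/B_l$ is at most $1$, so naive summation only gives $L$; what you then offer is an ``anticipated'' capacity argument, not a proof, and as sketched it does not go through. The decomposition $B_l=\sum_i b_l^{(u)}(i)+\sum_j b_l^{(a)}(j)$ is not an identity ($B_l$ is a maximum over matchings of a sum, generally strictly smaller than the sum of per-agent maxima), and enlarging the denominator only certifies feasibility of smaller weights, i.e.\ a weaker bound than the one claimed. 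More concretely, $kl(\theta,\lambda_l^{\star};M^{(l)})/B_l$ is \emph{not} dominated by the fraction of $M$'s slots lying in $kl_{>0}(\Lambda_l)$: take $N=K=2$, $kl_{>0}(\Lambda_1)=\{(1,1)\}$, $kl_{>0}(\Lambda_2)=\{(2,2)\}$, each carrying divergence $v$ on its single pair, and $M=\{(1,1),(2,2)\}$; then $M^{(1)}$ occupies half of $M$'s slots yet $kl(\theta,\lambda_1;M^{(1)})/B_1=1$, so your fractions sum to $2$. The aggregation step is therefore a genuine gap, not a routine detail.

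The paper closes this step by a different and much stronger claim: by the disjoint-support hypothesis, for any fixed $M\notin\mathrm{Stable}(\theta)$ at most one index $l$ contributes a nonzero term, so $\sum_{l}$ collapses to $\max_{l}$, and each individual term is at most $1$ by the definition of $B_l$. That ``sum equals max'' collapse is the idea your proposal is missing, and you should be aware it is also the delicate point of the lemma: your own observation that a single matching can straddle several pairwise-disjoint supports is precisely the scenario that must be ruled out (or otherwise handled) for the collapse to hold, and the paper's one-line justification of it is terse. Either way, gesturing at an unproved aggregation principle leaves the lemma unestablished.
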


Next, we provide a kl divergence bound for instances under the set $\Lambda_C$ for cover $C$, as defined in \ref{def:cover}.
\begin{lemma}\label{lemm:opt_lambda_c}
For a minimal cover $C \in Cover(\theta)$ there exists a $\lambda_C$ that for all $M \notin Stable(\theta)$  satisfies $kl(\theta, \lambda_C; M) \leq kl(\theta, C)$ where $kl(\theta, C)$ is as defined in Equation~\eqref{eq:kl_theta_c}.
\end{lemma}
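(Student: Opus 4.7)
The plan is to construct the witness $\lambda_C \in \Lambda_C$ explicitly, pair-by-pair, and then bound $kl(\theta,\lambda_C;M)$ by exploiting the matching constraint that each user and each arm appears in at most one pair of $M$. Condition (iii) of Definition~\ref{def:cover} guarantees $\Lambda_C \neq \emptyset$, which will serve as the feasibility certificate for the construction I describe below.

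For the construction, on every $(i,j) \in \mathrm{Locked}(\theta)$ I set $\mu_{ij}(\lambda_C) = \mu_{ij}(\theta)$ and $\gamma_{ji}(\lambda_C) = \gamma_{ji}(\theta)$, as required by \eqref{eq:lower_4}. For $(i,j) \notin \mathrm{Locked}(\theta)$, the triplets in $C$ that contain $(i,j)$ impose upper/lower constraints on $\mu_{ij}(\lambda_C)$ via \eqref{eq:lower_5}: a triplet $(i,j,j') \in C$ with $(i,j') \in \mathrm{Locked}(\theta)$ forces $\mu_{ij}(\lambda_C) \le \mu_{ij'}(\theta) - \varepsilon$, while a triplet $(i,j',j) \in C$ forces $\mu_{ij}(\lambda_C) \ge \mu_{ij'}(\theta) + \varepsilon$. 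I then pick $\mu_{ij}(\lambda_C)$ to be the point of this feasible interval closest to $\mu_{ij}(\theta)$; analogous choices are made on the arm side. Since $\Lambda_C$ is nonempty, the intersection of all these constraints is nonempty, so the pointwise-optimal choices are globally consistent.

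To finish, for any $M \notin \mathrm{Stable}(\theta)$ I expand
\[
kl(\theta,\lambda_C;M) \;=\; \sum_{(i,j) \in M}\!\Bigl[kl\bigl(\mu_{ij}(\theta),\mu_{ij}(\lambda_C)\bigr) + kl\bigl(\gamma_{ji}(\theta),\gamma_{ji}(\lambda_C)\bigr)\Bigr],
\]
drop the locked pairs (zero contribution by construction), and for each remaining non-locked pair bound the one-shot KL by $kl(\mu_{ij}(\theta),\mu_{ij'}(\theta))$ for the Locked partner $j'$ whose constraint is active at our choice of $\mu_{ij}(\lambda_C)$. The crucial observation is that $M$ is a matching: each user $i$ and each arm $j$ indexes at most one summand, so the user-side (resp.\ arm-side) total is bounded by $\sum_i$ (resp.\ $\sum_j$) of the corresponding per-agent maximum, which is exactly the first (resp.\ second) sum in the definition \eqref{eq:kl_theta_c} of $kl(\theta,C)$. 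Taking $\varepsilon \to 0$ in the construction recovers the tight bound.

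The main technical obstacle is handling triplets in $C$ whose two ranked pairs both lie outside $\mathrm{Locked}(\theta)$: such triplets still impose a strict separation constraint on $\lambda_C$, yet the KL they force on the non-locked pair is not directly captured by either sum in $kl(\theta,C)$. My plan to resolve this is to use the freedom of choosing both endpoints simultaneously at a common anchor value, making both induced KLs of order $\varepsilon$ so that they vanish in the limit. Minimality of the cover (condition (ii)) is expected to be the key leverage: any triplet whose contribution is not needed to cover some $(P_u,P_a) \in \mathcal{B}(\theta)$ can be removed without enlarging $kl(\theta,C)$, so the worst-case KL incurred on purely-non-locked triplets can be reduced to an $o(1)$ term absorbed in the passage $\varepsilon \to 0$.
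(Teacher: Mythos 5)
Your construction and the main chain of inequalities follow the paper's own proof almost exactly: fix the locked coordinates, perturb only the coordinates appearing in triplets of $C$, expand $kl(\theta,\lambda_C;M)$ over the pairs of $M$, use the fact that a matching touches each user and each arm at most once to pass to per-agent maxima, and let $\varepsilon\to 0$. The paper formalizes your "closest feasible point" step as the sandwich bounds $\min_{j'}\mu_{i,j'}(\theta)-\varepsilon\le\tilde{\mu}^C_{i,j}\le\max_{j'}\mu_{i,j'}(\theta)+\varepsilon$ over locked partners $j'$, which is what licenses replacing $kl(\mu_{ij}(\theta),\tilde{\mu}^C_{ij})$ by $\max_{j'} kl(\mu_{ij}(\theta),\mu_{ij'}(\theta)+\delta)$; up to that bookkeeping your argument is the same.

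The one place you depart from the paper is the "main technical obstacle" you flag — triplets of $C$ both of whose pairs lie outside $\mathrm{Locked}(\theta)$ — and there your proposed fix does not work. Such a triplet $(i,j,j')\in C$ is $\theta$-open for some boundary partial rank, so $\mu_{i,j}(\theta)>\mu_{i,j'}(\theta)$, while \eqref{eq:lower_5} forces $\tilde{\mu}^C_{i,j'}\ge\tilde{\mu}^C_{i,j}+\varepsilon$, i.e.\ the strict order must be reversed. Placing both coordinates near a common anchor $m$ cannot make both KLs $O(\varepsilon)$: the quantity $kl(\mu_{i,j}(\theta),m)+kl(\mu_{i,j'}(\theta),m)$ is bounded away from $0$ uniformly in $m$ whenever $\mu_{i,j}(\theta)\ne\mu_{i,j'}(\theta)$, since $m$ cannot converge to two distinct values simultaneously. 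Nor does minimality rescue you — condition (ii) of Definition~\ref{def:cover} already says every triplet of $C$ is needed to cover some element of $\mathcal{B}(\theta)$, so a purely-non-locked triplet cannot simply be dropped; and the resulting KL on the pair $(i,j)\in M$ is not accounted for by either sum in \eqref{eq:kl_theta_c}, whose maxima range only over triplets with a locked partner. To be fair, the paper's own proof is silent on this case (its sandwich bounds are vacuous when the set of locked partners is empty), so you have correctly isolated a real subtlety; but the resolution you sketch is not a valid one, and as written this step of your argument would fail for any instance where a boundary partial rank can only be covered by triplets with no locked endpoint.
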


\begin{proof}[Proof of Theorem~\ref{thm:centralized_lower_final}]
(Part 1) 
We first show that $c_{\varepsilon}(\theta)$, defined in this theorem statement, provides a lower bound for $c(\theta)$.
The dual of the above optimization problem in in Theorem~\ref{thm:centralized_lower_combes} is given as 
\begin{align}\label{eq:dual}
   &\max_{\iota(\lambda) \geq 0} \sum_{\lambda \in \Lambda(\theta)}\iota(\lambda), \text{ s.t.}  \sum_{\lambda \in \Lambda(\theta)}\iota(\lambda) kl(\theta, \lambda; M) \leq 1, \forall M \notin Stable(\theta).
\end{align}
As we need to construct a lower bound, any feasible solution of the above dual optimization problem will give us a valid lower bound. In particular, for any set $\Lambda' \subseteq \Lambda(\theta)$ can be used to create a valid lower bound by setting $\iota(\lambda) = 0$ for all $\lambda \notin \Lambda'$. Therefore, Equation~\eqref{eq:lower_1} is justified as long as $\Lambda'$ used is a subset of $\Lambda(\theta)$.  For that purpose, we need to show that for each triplet $C \in Cover(\theta)$, for any $\lambda_C = (\boldsymbol{\tilde{\mu}^C}, \boldsymbol{\tilde{\gamma}^C})$ that satisfies Equations~\eqref{eq:lower_4}, \eqref{eq:lower_5}, and \eqref{eq:lower_6} we have $\lambda_C \in \Lambda(\theta)$. We call the set of $\lambda$s satisfying these three equations for some $C \in Cover(\theta)$ as $\Lambda'(\theta)$. Then $\Lambda'(\theta) \subseteq \Lambda(\theta)$.
If for a given $C \in Cover(\theta)$, $\lambda$ satisfies Equation~\eqref{eq:lower_5} or \eqref{eq:lower_6}, then the condition $b(ii)$ in Lemma~\ref{lemm:lambda_decomp} holds. Also, if $\lambda$ satisfies Equation~\eqref{eq:lower_4} then $\lambda$ is $\theta$-locked, and $b(i)$ in Lemma~\ref{lemm:lambda_decomp} holds. Therefore, by Lemma~\ref{lemm:lambda_decomp} we know $\lambda \in \Lambda(\theta)$. Hence, the solution to optimization problem in the current theorem forms a lower bound for $c(\theta)$ in Theorem~\ref{thm:centralized_lower_combes}.

(Part 2) The next part of the proof establishes that for any non-connected cover-group $G \in \mathcal{CG}(\theta)$ we have a valid lower bound as $\sum_{C \in G} \frac{1}{kl(\theta, C)}$, where $kl(\theta, C)$ is as defined in the theorem statement. 

Due to Lemma~\ref{lemm:dual_lower} we want to create group of subsets $\Lambda_l \subseteq \Lambda(\theta)$ which are pairwise non-overlapping  (same as in Lemma~\ref{lemm:dual_lower}).
We choose a $G \in \mathcal{CG}(\theta)$. For each $C \in G$ we  construct a singleton set  $\lambda_C \subseteq \Lambda$ that satisfies the equations \eqref{eq:lower_4}, \eqref{eq:lower_5},  and \eqref{eq:lower_6} that has small $kl$ divergence as specified in Lemma~\ref{lemm:opt_lambda_c}. The constructed  $\lambda_C$ has the property $kl_{>0}(\lambda_C) \subseteq \{(i,j): \{(i,j, \cdot), (i, \cdot, j), (j, i, \cdot), (j, \cdot, i)\} \cap C \neq \emptyset\}$. But as $G$ is a non-connecting cover-group we have for any two distinct $C, C'\in G$ as $kl_{>0}(\lambda_C) \cap kl_{>0}(\lambda_{C'}) = \emptyset$. Therefore, due to Lemma~\ref{lemm:dual_lower} we obtain the lower bound $\sum_{C\in G} \frac{1}{kl(\theta, C)}$. Taking a max over $G \in \mathcal{CG}(\theta)$ gives us the final result.
\end{proof}

\subsection{Additional Proofs required for Proof of Theorem~\ref{thm:centralized_lower_final}} \label{app:additional_proofs}
\begin{proof}[Proof of Lemma~\ref{lemm:lambda_decomp}]
For any $\lambda \in \Lambda(\theta)$ assume $(F_u(\lambda), F_a(\lambda))$ is not $\theta$-locked that implies for some $(i,j) \in Locked(\theta)$ we have either $\mu_{i,j}(\theta) \neq \mu_{i,j}(\lambda)$ or $\gamma_{j,i}(\theta) \neq \gamma_{j,i}(\lambda)$.  Which is a contradiction. So $(F_u(\lambda), F_a(\lambda))$ must be $\theta$-locked. We next assume that the full rank $(F_u(\lambda), F_a(\lambda))$ reverses no $\theta$-open triplet for some partial rank $(\underline{P_u}(\theta), \underline{P_a}(\theta)) \in \mathcal{B}(\theta)$. Consider any $i,i'\in [N]$ and any $j,j'\in [K]$. Then $j \underset{F_{u,i}(\lambda)}{>} j'$ implies $j' \underset{\underline{P_{u,i}}(\theta)}{\not>} j$. Similarly, $i \underset{F_{a, j}(\lambda)}{>} i'$ implies $i' \underset{\underline{P_{a,j}}(\theta)}{\not>} i$. Hence, $(F_u(\lambda), F_a(\lambda))$ is compatible with $(\underline{P_u}(\theta), \underline{P_a}(\theta))$, and $(F_u(\lambda), F_a(\lambda)) \in FullRank(\underline{P_u}(\theta), \underline{P_a}(\theta))$. But that leads to a contradiction as $(F_u(\lambda), F_a(\lambda)) \notin \cup_{(P_u, P_a)\in \mathcal{A}(\theta)}  FullRank(P_u, P_a)$. So, $(F_u(\lambda), F_a(\lambda))$ must reverse at least one $\theta$-open triplet for each $(P_u, P_a) \in \mathcal{B}(\theta)$. This implies that for all $\lambda \in \Lambda(\theta)$ the conditions $(i)$ and $(ii)$ in the lemma statement are satisfied. 

To prove the other direction, for some $\lambda$ let conditions $(i)$ and $(ii)$ are satisfied. Then due to $(i)$, $\lambda$ satisfies $\mu_{ij}(\theta) = \mu_{ij}(\lambda), \gamma_{ji}(\theta) = \gamma_{ji}(\lambda),\forall (i,j) \in \mathrm{Locked}(\theta)$. Next due to $(ii)$, $(F_u(\lambda), F_a(\lambda))$ reverses at least one $\theta$-open triplet for each $(P_u, P_a) \in \mathcal{B}(\theta)$. But that means  $(F_u(\lambda), F_a(\lambda))$ is no longer compatible with any $(P_u, P_a) \in \mathcal{B}(\theta)$. This in turn implies, from the definition of $\mathcal{B}(\theta)$ that $(F_u(\lambda), F_a(\lambda))$ is not compatible with any $(P_u, P_a) \in \mathcal{A}(\theta)$. This completes the proof.
\end{proof}

\begin{proof}[Proof of Corollary~\ref{corr:wide_sub_log}] From Lemma~\ref{lemm:lambda_decomp} it immediately follows that if there is one $(\underline{P_u}(\theta), \underline{P_a}(\theta)) \in \mathcal{B}(\theta)$ that contains no $\theta$-open triplet then $\Lambda(\theta) = \emptyset$. To conclude that we can statistically attain a sub-logarithmic lower bound (not a constructive argument) we need to show that by playing stable matching we can reach the partial rank $(\underline{P_u}(\theta), \underline{P_a}(\theta))$. To recover a partial rank $(P_u, P_a)$ we need to recover all the inequalities that form the partial rank. For $(\underline{P_u}(\theta), \underline{P_a}(\theta))$ with no $\theta$-open triplet any inequality $i' \underset{\underline{P_{a,j}}(\theta)}{\not>} i$ implies both  $(i,j)$ and $(i,j')$ lie in $Locked(\theta)$. But, by playing the set of stable matchings in a round-robin manner for the first $\Omega(T^{1 -\varepsilon})$  rounds, for some $\varepsilon > 0$  we can recover the rewards of $(i,j)$ and $(i,j')$ up to any $O(1)$ accuracy $\Theta(\exp(-T^{1 -\varepsilon}))$ by round $t$.  Similar conclusions follows for any  inequality $i' \underset{\underline{P_{a,j}}(\theta)}{\not>} i$. Therefore, for any instance with $O(1)$ reward gaps we can attain a $O(1)$ regret using a uniformly good policy.   
\end{proof}

\begin{proof}[Proof of Lemma~\ref{lemm:dual_lower}]
The lower bound is  obtained   by setting $\sum_{\lambda \in \Lambda_l} \iota(\lambda) =  (\max_{\lambda \in \Lambda_l} \max_{M \notin Stable(\theta)} kl(\theta, \lambda; M))^{-1}$, and $\iota(\lambda) = 0$ for all $\lambda \notin \cup_l \Lambda_l$. The above choice gives the stated lower bound is easy to see. We need to show the inequalities hold. We pick an arbitrary $M \notin Stable(\theta)$ and calculate the right hand side of the bound as follows:
\begin{align*}
   \sum_{\lambda \in \Lambda(\theta)}\iota(\lambda) kl(\theta, \lambda; M) &= \sum_{l\in [L]}\sum_{\lambda \in \Lambda_l}\iota(\lambda) kl(\theta, \lambda; M)\\
    &=\max_{l\in [L]} \sum_{\lambda \in \Lambda_l}\iota(\lambda) kl(\theta, \lambda; M)\\
    &\leq \max_{l\in [L]} \max_{\lambda \in \Lambda_l} kl(\theta, \lambda; M) \sum_{\lambda \in \Lambda_l}\iota(\lambda) \\
    & \leq \max_{l\in [L]} \frac{\max_{\lambda \in \Lambda_l} kl(\theta, \lambda; M)}{\max_{\lambda \in \Lambda_l} \max_{M \notin Stable(\theta)}kl(\theta, \lambda; M)} \\
    &\leq 1
\end{align*}
The second equality is due to the non-overlapping support property of any pair of sub sets $(\Lambda, \Lambda')$. Due to this property for any matching $M$ there exists at most one $l \in [L]$ such that $\sum_{\lambda \in \Lambda_l}\iota(\lambda) kl(\theta, \lambda; M) = 0$.  The rests are standard.
\end{proof}

\begin{proof}[Proof of Lemma~\ref{lemm:opt_lambda_c}]
We fix an arbitrary cover $C \in Cover(\theta)$. First, observe that for any $C \in Cover(\theta)$ we first note that keeping  $\tilde{\mu}^C_{i,j} = \mu_{i,j}(\theta)$ for all $(i,j)$ such that $(i, \cdot, j) \notin C$  and $(i, j, \cdot) \notin C$ satisfies the inequalities \eqref{eq:lower_5}. Similarly,  keeping $\tilde{\gamma}^C_{j,i} = \gamma_{j,i}(\theta)$ for all $(j,i)$ such that $(j, \cdot, i) \notin C$  and $(j, i, \cdot) \notin C$ satisfies the inequalities \eqref{eq:lower_6}. So the changes are isolated in the triplets under $C$. Therefore, it follows that 
\begin{align*}
&kl(\theta, \lambda_C; M) \\
&= \sum_{(i,j)\in M}kl(\mu_{i,j}(\theta), \tilde{\mu}^C_{i,j}) + kl(\gamma_{j,i}(\theta), \tilde{\gamma}^C_{j,i})\\
&\leq \sum_{i}\max_{j:(i,j)\notin Locked(\theta)} kl(\mu_{i,j}(\theta), \tilde{\mu}^C_{i,j}) + \sum_{j}\max_{i:(i,j)\notin Locked(\theta)} kl(\gamma_{j,i}(\theta), \tilde{\gamma}^C_{j,i})\\
&\leq \sum_{i}\max_{j:(i,j)\notin Locked(\theta)}\left(\mathbbm{1}((i,\cdot,j)\in C\, \vee (i,j, \cdot) \in C) kl(\mu_{i,j}(\theta), \tilde{\mu}^C_{i,j}) \right) \\
&+  \sum_{j}\max_{i:(i,j)\notin Locked(\theta)} \left(\mathbbm{1}((j,\cdot,i)\in C\, \vee (j, i, \cdot) \in C) kl(\gamma_{j,i}(\theta), \tilde{\gamma}^C_{j,i})\right)\\
&\leq \sum_{i}\max_{j:(i,j)\notin Locked(\theta)}\max_{\delta = \pm\varepsilon}\max_{\substack{(i,j')\in Locked(\theta)\\ (i,j,j'), (i, j', j)\in C}} kl(\mu_{i,j}(\theta), \mu_{i,j'}(\theta) + \delta) \\
&+ \sum_{j}\max_{i:(i,j)\notin Locked(\theta)} 
\max_{\delta = \pm\varepsilon}\max_{\substack{(i',j)\in Locked(\theta)\\ (j,i,i'), (j, i', i)\in C}} kl(\gamma_{i,j}(\theta), \gamma_{i',j}(\theta) + \delta)
\end{align*}
The first inequality is true as all non-zero $kl$ distance values are isolated to $(i,j) \notin Locked(\theta)$ as per \eqref{eq:lower_4}. The second inequality holds as the changes are further isolated to triplets inside the cover $C$. We now explain the third inequality. 
For a given cover $C$, by Definition~\ref{def:cover}  there is a valid $\lambda_C$ that satisfies equations \eqref{eq:lower_4}, \eqref{eq:lower_5}, and \eqref{eq:lower_6}. For such a $\lambda_C$, we  claim to have for any $(i,j) \notin Locked(\theta)$
\begin{align}
    & \min_{\substack{(i,j')\in Locked(\theta)\\(i, j, j') \in C}} \mu_{i,j'}(\theta) -\varepsilon \leq \tilde{\mu}^C_{i,j}  \leq   \max_{\substack{(i,j')\in Locked(\theta)\\(i, j', j) \in C}} \mu_{i,j'}(\theta) + \varepsilon,\label{eq:mu_ineq}\\
    &\min_{\substack{(i,j')\in Locked(\theta)\\(j, i, i') \in C}} \gamma_{j,i'}(\theta) -\varepsilon \leq \tilde{\gamma}^C_{j,i} \leq \max_{\substack{(i,j')\in Locked(\theta)\\(j, i', i) \in C}} \gamma_{j,i'}(\theta) + \varepsilon,\label{eq:gamma_ineq}
\end{align}
Due to the inequalities \eqref{eq:lower_4} and \eqref{eq:lower_5} we have 
\begin{align*}
&\tilde{\mu}^C_{i,j} \leq \mu_{i,j'}(\theta) -\varepsilon, \forall j': (i, j, j') \in C, (i,j') \in Locked(\theta), \\
&\tilde{\mu}^C_{i,j} \geq \mu_{i,j'}(\theta) + \varepsilon, \forall j': (i, j', j) \in C,  (i,j') \in Locked(\theta).
\end{align*}
Therefore Equation~\eqref{eq:mu_ineq} holds. The inequality \eqref{eq:gamma_ineq} follows in a similar manner.

It is easy to see  the inequality ~\eqref{eq:mu_ineq} implies 
$$
kl(\mu_{i,j}(\theta), \tilde{\mu}^C_{i,j}) \leq \max_{\delta = \pm\varepsilon}\max_{\substack{(i,j')\in Locked(\theta)\\ (i,j,j'), (i, j', j)\in C}} kl(\mu_{i,j}(\theta), \mu_{i,j'}(\theta) + \delta).
$$
Similarly, the inequality ~\eqref{eq:gamma_ineq} implies
$$
kl(\gamma_{j,i}(\theta), \tilde{\gamma}^C_{i,j}) \leq \max_{\delta = \pm\varepsilon}\max_{\substack{(i',j)\in Locked(\theta)\\(j,i,i'), (j, i', i)\in C}} kl(\gamma_{i,j}(\theta), \gamma_{i',j}(\theta) + \delta).
$$
Finally, taking infimum over $\varepsilon>0$ we obtain the final value of $kl(\theta, C)$ in Equation~\ref{eq:kl_theta_c}.
\end{proof}

\end{document}